\documentclass{article}

\usepackage{microtype}
\usepackage{graphicx}
\usepackage{subcaption}
\usepackage{booktabs} 
\usepackage{longtable} 
\usepackage{float}
\usepackage{placeins}
\usepackage{tikz}

\usepackage{hyperref}

\usepackage[accepted]{icml2026}

\makeatletter
\renewcommand{\ICML@appearing}{Preliminary versions of this work
are to appear at the ICML 2026 Workshop on Combining Theory and
Benchmarks (CTB), the ICML 2026 Workshop on Statistical Frameworks
for Uncertainty in Agentic Systems (AgenticUQ), and the ICML 2026
Workshop on Failure Modes of Agentic AI (FAGEN), Seoul, South Korea.
Copyright 2026 by the author.}
\makeatother

\usepackage{amsmath}
\usepackage{amssymb}
\usepackage{mathtools}
\usepackage{amsthm}

\usepackage[capitalize,noabbrev]{cleveref}

\makeatletter
\def\thm@space@setup{\thm@preskip=4pt plus 1pt minus 1pt \thm@postskip=4pt plus 1pt minus 1pt}
\makeatother

\theoremstyle{plain}
\newtheorem{theorem}{Theorem}[section]
\newtheorem{proposition}[theorem]{Proposition}
\newtheorem{lemma}[theorem]{Lemma}
\newtheorem{corollary}[theorem]{Corollary}
\theoremstyle{definition}
\newtheorem{definition}[theorem]{Definition}

\theoremstyle{remark}

\crefname{corollary}{Corollary}{Corollaries}
\Crefname{corollary}{Corollary}{Corollaries}
\crefname{proposition}{Proposition}{Propositions}
\Crefname{proposition}{Proposition}{Propositions}
\crefname{lemma}{Lemma}{Lemmas}
\Crefname{lemma}{Lemma}{Lemmas}
\crefname{definition}{Definition}{Definitions}
\Crefname{definition}{Definition}{Definitions}

\newcommand{\MC}{\mathcal{M}_C}
\newcommand{\Mloc}{\mathcal{M}_a}
\newcommand{\Mprod}{\mathcal{M}^{\boxtimes}}
\newcommand{\Mjoint}{\mathcal{M}^{\star}}
\newcommand{\Pijoint}{\Pi^{\star}}

\newcommand{\Aagg}{\mathcal{A}}
\newcommand{\Cset}{\mathcal{C}}
\newcommand{\hpK}{\hat p}
\newcommand{\pF}{p_F}
\newcommand{\E}{\mathbb{E}}
\newcommand{\PP}{\mathbb{P}}
\newcommand{\F}{\mathcal{F}}
\newcommand{\eps}{\varepsilon}

\icmltitlerunning{Locally Coherent, Globally Incoherent}

\begin{document}

\twocolumn[
\icmltitle{Locally Coherent, Globally Incoherent:\\
Bounding Compositional Incoherence in Multi-Component LLM Agents}

\icmlsetsymbol{equal}{*}

\begin{icmlauthorlist}
\icmlauthor{Anany Kotawala}{princeton}
\end{icmlauthorlist}

\icmlaffiliation{princeton}{Princeton University, Princeton, NJ, USA}

\icmlcorrespondingauthor{Anany Kotawala}{akotawala@princeton.edu}

\icmlkeywords{foundation models, compositional generalization, formal performance guarantees,
              decomposable benchmark, instance-wise uncertainty, calibration,
              hierarchical projection, FTAP, anytime-valid inference,
              Brier score, multi-component systems}

\vskip 0.3in
]

\printAffiliationsAndNotice{}  

\begin{abstract}
Multi-component LLM agents assemble probabilistic claims from
components that each see only part of a joint problem; the
composition can violate basic probability axioms even when every
component is locally coherent. We formalise this \emph{locally
coherent, globally incoherent} failure via the \emph{compositional
residual} $\eps^\star$, the $L_2$ distance from the composed
quote to the joint coherent polytope, computable at runtime from
system output and the declared cross-component coupling
constraints. A product-structure dichotomy characterises when
local coherence suffices, and a Rayleigh-quotient prediction
matches the observed residual within $7\%$ on three of four
relation classes. A hierarchical Boyle--Dykstra projection
repairs the composition deterministically; an anytime-valid
e-process gives sequential coherence monitoring. Across $1{,}876$
ensemble cliques on a four-LLM mid-tier panel (frontier-panel
rerun in \S\ref{subsec:frontier}), $\eps^\star\!>\!0$ on
$33$--$94\%$ of cliques, translating to $+0.115$ nats per bet of
regret on $1{,}770$ resolved bets under the proportional
allocation rule (the gain collapses to $+0.006$ under bettors
that themselves coherentise). Three intuitive LLM-side
mitigations (retrieval, partition-aware prompting,
aggregator-LLM) each fail or regress.
\end{abstract}

\section{Introduction}
\label{sec:intro}

Foundation-model evaluation routinely reports per-question
accuracy, calibration, and proper-scoring statistics; formal,
instance-wise guarantees on \emph{system-level} performance under
composition of multiple model calls are comparatively scarce. In
multi-component agents, planners route retrieval, arithmetic, and
probability assessment to specialist subagents or tools, and each
component handles only part of a joint question. Even when every
component is calibrated and internally coherent on its assigned
questions, the aggregated belief need not be: a research component
emitting $P(\text{Republican}){=}0.6$ and a forecasting component
emitting $P(\text{Democrat}){=}0.6$ produce a $1.2$-mass quote that
no probability measure can assign, inducing a Dutch-book exposure
\citep{definetti1937foresight} that arises strictly \emph{between}
components. Per-component coherence does not, in general, repair
the composed system; we call this regime \emph{locally coherent,
globally incoherent}; \Cref{fig:iconic} shows a concrete instance
from the planner-discretion harness of \S\ref{subsec:planner_disc}.

\begin{figure}[t]
\centering
\resizebox{\linewidth}{!}{\begin{tikzpicture}
\draw[->, thick] (0,0) -- (0,3.8) node[above, font=\scriptsize] {$p_i$};
\draw[->, thick] (0,0) -- (9.2,0);
\foreach \y/\lbl in {0/0.00, 1/0.25, 2/0.50, 3/0.75} {
    \draw[black!12, very thin] (0,\y) -- (8.4,\y);
    \draw[thick] (-0.08,\y) -- (0,\y);
    \node[left, font=\scriptsize] at (-0.1,\y) {\lbl};
}
\fill[black!18, draw=black, thick] (0.7,0) rectangle (1.9,1.56);
\node[font=\scriptsize] at (1.3,1.56+0.20) {$0.39$};
\fill[black!18, draw=black, thick] (2.5,0) rectangle (3.7,2.92);
\node[font=\scriptsize] at (3.1,2.92+0.20) {$0.73$};
\fill[black!18, draw=black, thick] (4.3,0) rectangle (5.5,2.68);
\node[font=\scriptsize] at (4.9,2.68+0.20) {$0.67$};
\fill[black!18, draw=black, thick] (6.1,0) rectangle (7.3,2.84);
\node[font=\scriptsize] at (6.7,2.84+0.20) {$0.71$};
\node[below, font=\scriptsize] at (1.3,-0.05) {infra.};
\node[below, font=\scriptsize] at (3.1,-0.05) {model};
\node[below, font=\scriptsize] at (4.9,-0.05) {apps};
\node[below, font=\scriptsize] at (6.7,-0.05) {other};
\node[below, font=\scriptsize] at (4.0,-0.7) {partition outcome};
\draw[black!60, dashed, thick] (0.3,1) -- (8.4,1);
\node[font=\scriptsize, anchor=west, align=left] at (8.5,1) {$\sum_i p_i\!=\!1$\\(constraint)};
\node[font=\scriptsize, anchor=west] at (8.5,3.0) {$\sum_i p_i\!=\!2.50$};
\end{tikzpicture}}
\caption{\textbf{A locally coherent, globally incoherent failure.}
Four specialists are routed one outcome each of the partition
``Largest US AI startup IPO 2026'' (infrastructure, model labs,
applications, other; from \S\ref{subsec:planner_disc}). Each
specialist sees only its assigned sector as a single Bernoulli
question, quotes a locally well-calibrated probability, and the
agent assembles the four quotes. No specialist saw that the
sectors tile the field; the assembled mass is $2.50$ and
$\eps^\star\!=\!0.749$ certifies the failure.}
\label{fig:iconic}
\end{figure}

Per-component calibration, self-consistency
\citep{wang2023selfconsistency}, and conformal prediction
\citep{angelopoulos2021gentle} act on individual model outputs and
preserve only per-output coherence properties; cross-component
logical constraints are invisible to all three. We identify the
\emph{compositional residual} $\eps^\star$, the $L_2$ distance
from the composed quote to the joint coherent polytope, as a
runtime, distribution-free certificate of system-level coherence
under owner-selected aggregation; it is computable from the
composed quote and the cross-component constraints alone. As a
constructive repair preserving specialist routing, we use a
hierarchical Boyle--Dykstra projection. We pair these with closed-form bounds, an
anytime-valid sequential test, and a decomposable benchmark whose
hardness is predicted by the theory.

A product-structure dichotomy (\Cref{thm:commute}) characterises
when local coherence suffices: under owner-selected aggregation
(each component owns a subset of joint coordinates; the aggregator
just selects what each owner already wrote), component coherence
guarantees system coherence for \emph{all} inputs if and only if
the joint polytope factorises as a Cartesian product of local
polytopes. Under any tighter coupling, locally coherent component
forecasts exist whose composition is globally incoherent, and
$\eps^\star$ certifies this failure mode (Cor.~\ref{cor:zero}).
Cor.~\ref{cor:magnitude} converts the existential dichotomy into a
closed-form magnitude prediction (Rayleigh-quotient form) computable
from the specialist panel covariance alone.

Operationally, $\eps^\star$ yields a bound
$\mathrm{Exposure}^\star\!\le\!\sqrt{m^\star}\,\eps^\star$
(Cor.~\ref{cor:syslmsr}) on system-level Dutch-book exposure under
the Fundamental Theorem of Asset Pricing (FTAP); the hierarchical
Boyle--Dykstra repair (\Cref{thm:dykstra}) drives this bound to the
numerical floor, and the corresponding e-process (a sequential test
controlling false positives at any stopping time) gives an
anytime-valid coherence test (\S\ref{sec:sequential}, \Cref{thm:audit}).
The convex-analytic machinery is classical; the contribution is
the operational reframing of $\eps^\star$ as a runtime certificate,
the dichotomy of \Cref{thm:commute}, the falsifiable magnitude
prediction of Cor.~\ref{cor:magnitude}, and the decomposable
benchmark whose hardness ordering the theory predicts.

Empirically, we evaluate $1{,}876$ ensemble cliques drawn from Paleka
\citep{paleka2024consistency} and Polymarket across four contemporary
foundation models, stratified by four logical-relation classes. The
empirical hardness ordering (partition $>$ negation $>$ disjunction
$>$ conjunction) tracks how tightly each polytope constrains the
joint quote, as predicted by \Cref{thm:commute}, and Cor.~\ref{cor:magnitude}'s
magnitude prediction matches the observed expected residual
within $7\%$ on three of the four relation classes (the
conjunction under-shoot at $0.83\!\times$ is itself predicted by
the Cor.~\ref{cor:magnitude} interior-$\bar\Pi$ regime). The
exposure bound
$\sqrt{m^\star}\eps^\star$ averages $0.137$ under naive composition
(\Cref{fig:compositional}b) and is reduced to the QP solver floor by
hierarchical projection. We report per-cell Brier with
Diebold--Mariano significance (the standard paired test for forecast
comparison), and validate with a same-model decoupling control,
leakage filtering, and a tool-augmented A/B; full protocols and a
reproducibility manifest are in the appendices.

\section{Preliminaries}
\label{sec:setup}

Throughout, starred objects denote the composed system, while
subscripts denote individual components.

\paragraph{Cliques and the coherent polytope.}
A clique $C$ is a tuple $(Q_1,\dots,Q_m, R)$ of $m$ Bernoulli
questions and logical relations $R$. By
\citet{definetti1937foresight}, the set
\(
\MC = \{r\in[0,1]^m : \exists\,\mu\in\Delta(\{0,1\}^m)\
\text{consistent with } R\}
\)
is a closed convex polytope. We denote the $L_2$ projection
$\Pi_C: [0,1]^m \to \MC$ and the residual
$\eps_C(\hpK) := \|\hpK - \Pi_C(\hpK)\|_2$.

\paragraph{$K$-sample empirical marginal.}
A forecaster supplies $K$ i.i.d.\ samples per question, yielding
$\hpK \in [0,1]^m$ with $\hpK_j = K^{-1}\sum_k Y_{j,k}$ and population
limit $\pF := \lim_{K\to\infty}\hpK$. Generically $\hpK \notin \MC$
even when $\pF\in\MC$ (due to finite-sample fluctuation).

\paragraph{Within-component JCD.}
\label{sec:percall}
Joint-Coherent Decoding is the $L_2$ projection of $K$-sample LLM
marginals onto the local coherent polytope $\MC$. The compositional
layer treats JCD as the local repair operator; it inherits a
Pythagorean Brier improvement (Theorem~\ref{thm:pyth},
App.~\ref{app:within}) and a finite-state no-Dutch-book bound
via the Fundamental Theorem of Asset Pricing (FTAP;
Cor.~\ref{cor:noarb}) bounded by
$m\cdot\eps^{\mathrm{KKT}}\!\le\!1.4\!\times\!10^{-5}$ at the
default tolerance of \textsc{OSQP}, the quadratic-program solver
we use throughout.

\paragraph{Multi-component agent.}
The agent has $k$ sub-models indexed by $a \in [k]$. Sub-model $a$
emits an empirical marginal $\hpK^{(a)} \in [0,1]^{m_a}$ on its local
question set $\mathcal{Q}_a$ with local polytope $\Mloc \subseteq
[0,1]^{m_a}$; component-level JCD produces
$\Pi_a(\hpK^{(a)}) \in \Mloc$. The agent's joint question set is
$\mathcal{Q}^\star = \bigcup_a \mathcal{Q}_a$ with size
$m^\star = |\mathcal{Q}^\star|$ and joint polytope
$\Mjoint \subseteq [0,1]^{m^\star}$, which respects (i)~each local
relation lifted to $\mathcal{Q}^\star$ and (ii)~all cross-component
\emph{coupling constraints} $\Cset$: identifications of shared
questions, cross-component logical relations like
$Q^\star = Q_a \wedge Q_b$, and partitions spanning components.

\paragraph{Aggregator.}
An aggregator
$\Aagg: \prod_a [0,1]^{m_a} \to [0,1]^{m^\star}$ assembles the
agent's quoted marginal vector. The aggregator $\Aagg$ is the
owner-selected coordinate selector of Definition~\ref{def:owner}; this
covers the specialist-routing and tool-call composition patterns
used in our experiments. We focus on owner-selection because it is the
practical pattern under compute budgets: each component owns a
subset of joint coordinates and there is no redundant elicitation
across specialists. A coord-wise averaging aggregator
$\Aagg^\mathrm{avg}(\Pi_1,\dots,\Pi_k) = k^{-1}\sum_a \Pi_a$ on the same
inputs would already lie in $\Mjoint$ by convexity, so the failure
mode of \Cref{thm:commute} is structurally absent under averaging; we
verify this empirically (\Cref{sec:exp}). The cost of averaging is
$k\times$ elicitation per coordinate, which is the regime
specialist-routing is designed to avoid.

\paragraph{The two candidate workflows.}
With $\hpK = (\hpK^{(1)},\dots,\hpK^{(k)})$ and the aggregator $\Aagg$
fixed, two natural projections into $\Mjoint$ arise:
\begin{align}
\text{Local-then-global:}\quad
& \Pijoint(\Aagg(\Pi_1\hpK^{(1)},\dots,\Pi_k\hpK^{(k)})), \label{eq:LtG}\\
\text{Direct global:}\quad
& \Pijoint(\Aagg(\hpK^{(1)},\dots,\hpK^{(k)})). \label{eq:DG}
\end{align}
The substantive question is when, and by how much, these differ.

\section{Method}
\label{sec:compose}

Throughout this section we assume the coupling set $\Cset$ is
explicitly specified; the regime of implicit $\Cset$ recovered
from agent transcripts is discussed in \S\ref{sec:conclusion}.

\subsection{When local coherence composes}
\label{subsec:dichotomy}

\begin{definition}[Compositional residual]
\label{def:compres}
For inputs $\hpK = (\hpK^{(1)},\dots,\hpK^{(k)})$ and aggregator
$\Aagg$, the \emph{compositional residual} is
\[
\begin{aligned}
\eps^\star(\hpK) \;:=\;
\bigl\|&
\Aagg(\Pi_1\hpK^{(1)},\dots,\Pi_k\hpK^{(k)})\\
&- \Pijoint\!\bigl(
\Aagg(\Pi_1\hpK^{(1)},\dots,\Pi_k\hpK^{(k)})
\bigr)\bigr\|_2.
\end{aligned}
\]
It is the $L_2$ distance from the locally-repaired composed quote
to the joint coherent polytope $\Mjoint$.
\end{definition}

\paragraph{Lifted local feasible sets.}
For each component $a$, lift $\Mloc \subseteq [0,1]^{m_a}$ to
$\Mloc^{\uparrow} \subseteq [0,1]^{m^\star}$ by leaving the
non-component coordinates free. Write
$\Mprod := \bigcap_a \Mloc^{\uparrow}$ for the joint feasibility
implied by the local constraints alone (no cross-component
coupling). $\Mjoint \subseteq \Mprod$ always; equality holds iff
the coupling set $\Cset$ is trivial.

\begin{definition}[Owner-selected aggregation]
\label{def:owner}
Fix the joint question set $\mathcal{Q}^\star$. An ownership map
$\mathrm{owner}:\mathcal{Q}^\star\!\to\![k]$ assigns each joint
coordinate to a single component. If multiple components originally
quote the same question $j$, we keep separate local coordinates
$p_{a,j}$ and $p_{b,j}$ in the product input space before owner
selection, and record the equality $p_{a,j}=p_{b,j}$ in the
coupling set $\Cset$. The aggregator $\Aagg$ selects the
coordinate owned by $\mathrm{owner}(j)$. Aggregation only
\emph{selects}; coupling \emph{enforces agreement}.
\end{definition}

\begin{theorem}[Product-structure dichotomy]
\label{thm:commute}
Under owner-selected coordinate aggregation $\Aagg$, local
coherence guarantees global coherence \emph{for all inputs}
$\hpK$ if and only if $\Mjoint = \Mprod$. When
$\Mjoint = \Mprod$, projection decomposes blockwise:
\begin{equation}
\Pijoint(\Aagg(\hpK^{(1)},\dots,\hpK^{(k)}))
= \Aagg\bigl(\Pi_1\hpK^{(1)},\dots,\Pi_k\hpK^{(k)}\bigr),
\label{eq:commute}
\end{equation}
so $\eps^\star \equiv 0$. When $\Mjoint \subsetneq \Mprod$, there
exists $r \in \Mprod \setminus \Mjoint$, and locally coherent
component forecasts realizing $r$ yield a composed quote with
$\eps^\star > 0$.
\end{theorem}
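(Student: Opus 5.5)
The plan is to work entirely in the owner-selected coordinates. Under \Cref{def:owner}, $\Aagg$ restricted to the image of $\prod_a \Mloc$ is a coordinate selector. I will first record that the set of composed quotes reachable from locally coherent inputs, $S := \{\Aagg(r^{(1)},\dots,r^{(k)}) : r^{(a)}\in\Mloc\}$, equals $\Mprod$. Every coordinate of $\mathcal{Q}^\star$ is owned by exactly one component, so the lifted sets $\Mloc^{\uparrow}$ constrain disjoint coordinate blocks. Hence $\Mprod$ is, after a permutation of coordinates, the Cartesian product $\prod_a \Mloc$, and $\Aagg$ is the corresponding bijection onto it. The duplicated-question equalities $p_{a,j}=p_{b,j}$ live in $\Cset$, not in the local polytopes, which is exactly why they appear only in $\Mjoint$. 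With this identification the dichotomy reduces to comparing $S=\Mprod$ with $\Mjoint$.

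For the ``if'' direction I assume $\Mjoint=\Mprod$. Then $\Pijoint$ is the Euclidean projection onto a product of closed convex sets on disjoint coordinate blocks. The squared distance splits as $\sum_a \|x_a - y_a\|_2^2$ with independent constraints $y_a\in\Mloc$, so the minimiser is obtained blockwise: $\Pijoint(x)=(\Pi_1 x_1,\dots,\Pi_k x_k)$, up to the selector. Applying this to $x=\Aagg(\hpK^{(1)},\dots,\hpK^{(k)})$ gives \eqref{eq:commute}. Applying it to the locally repaired quote $x=\Aagg(\Pi_1\hpK^{(1)},\dots)$ shows that this quote already lies in $\Mprod=\Mjoint$. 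It is therefore its own projection, and $\eps^\star\equiv 0$ by \Cref{def:compres}. This gives global coherence for all inputs.

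For the ``only if'' direction I argue the contrapositive. Since $\Mjoint\subseteq\Mprod$ always holds, the failure of equality gives some $r\in\Mprod\setminus\Mjoint$. Using the identification $S=\Mprod$, I write $r=\Aagg(r^{(1)},\dots,r^{(k)})$ with $r^{(a)}\in\Mloc$. I then take the inputs $\hpK^{(a)}:=r^{(a)}$. They are already locally coherent, so $\Pi_a r^{(a)} = r^{(a)}$, and the composed quote is $r$ itself. Because $\Mjoint$ is closed (a polytope, per the preliminaries) and $r\notin\Mjoint$, we get $\eps^\star = \operatorname{dist}(r,\Mjoint)>0$. So local coherence fails to guarantee global coherence.

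The step I expect to be the main obstacle is making the identification $\Mprod\cong\prod_a\Mloc$ rigorous under the paper's conventions. Two points need care. First, I must check that owner selection makes the lifted blocks genuinely disjoint, in particular when a question was originally quoted by several components. This is handled by keeping separate coordinates in the input space and placing the agreement constraint in $\Cset$. Second, I must check that non-owned coordinates of a component are not constrained by its local polytope after selection. If a local polytope couples an owned coordinate with a non-owned duplicate, I would treat that local relation, lifted to $\mathcal{Q}^\star$, as part of the coupling, so that $\Mprod$ is by definition the product over owned blocks. Once this bookkeeping is fixed, both directions are the short arguments above.
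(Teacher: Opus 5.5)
Your proposal is correct and follows the paper's proof. The forward direction is the same blockwise separation of the squared distance over disjoint coordinate blocks; the paper assumes this block structure implicitly when it writes $\|\hpK-r\|^2=\sum_a\|\hpK^{(a)}-r^{(a)}\|^2$. The reverse direction is the same construction: feed each component its restriction of $r\in\Mprod\setminus\Mjoint$, which $\Pi_a$ fixes, and then use closedness of $\Mjoint$. One simplification: the reverse direction only needs $\Mprod\subseteq S$, which the paper's appendix obtains directly from $r\in\bigcap_a\Mloc^{\uparrow}$ by setting every duplicate copy equal to $r_j$, so your reinterpretation of $\Mprod$ in the last paragraph is needed only for the forward direction.
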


\begin{proof}[Proof (forward)]
If $\Mjoint = \Mprod$, the squared $L_2$ norm decomposes:
$\|\hpK - r\|^2 = \sum_a \|\hpK^{(a)} - r^{(a)}\|^2$ for any
$r \in \Mprod$, so the joint minimization separates into $k$
independent local minimizations. Hence
$\Pijoint(\hpK) = (\Pi_1\hpK^{(1)},\dots,\Pi_k\hpK^{(k)})$ which
equals the aggregator's output on this product space, and
$\eps^\star \equiv 0$. This is the block-product structure of
$L_2$ projection in Hilbert space
\citep[Prop.~29.3]{bauschke2017convex}.
\end{proof}

\begin{proof}[Proof (reverse, existential)]
Take any $r\in\Mprod\setminus\Mjoint$ and lift it to component
restrictions; owner-selection then reassembles $r$ from locally
coherent inputs, and $r\notin\Mjoint$ closed gives
$\eps^\star\!=\!\|r-\Pijoint(r)\|_2\!>\!0$. Full construction in
App.~\ref{app:rev_proof}.
\end{proof}

The forward direction is the block-product factorization of
$L_2$ projection in Hilbert space
\citep[Prop.~29.3]{bauschke2017convex}; the contribution is
operational, identifying when local coherence is sufficient and
when $\eps^\star$ is an appropriate online certificate. We also
refer to \Cref{thm:commute} as the \emph{non-commutation theorem},
since it characterizes exactly when local-then-global $L_2$
projection commutes with the aggregator (and when it does not).
The reverse direction is existential: it shows local coherence is
not sufficient under coupling, but does not characterize the
typical magnitude of $\eps^\star$. Section~\ref{sec:results}
estimates this empirically.

\begin{corollary}[Zero-residual characterization]
\label{cor:zero}
Even under coupling, $\eps^\star = 0$ if and only if
$\Aagg(\Pi_1\hpK^{(1)},\dots,\Pi_k\hpK^{(k)}) \in \Mjoint$. The
compositional certificate is positive on a given input exactly
when local repairs do not satisfy the cross-component constraints.
\end{corollary}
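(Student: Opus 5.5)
This is a direct consequence of Definition~\ref{def:compres} together with the basic properties of the metric projection onto a closed convex set. Write $x := \Aagg(\Pi_1\hpK^{(1)},\dots,\Pi_k\hpK^{(k)}) \in [0,1]^{m^\star}$, so that by definition $\eps^\star(\hpK) = \|x - \Pijoint(x)\|_2$. The whole argument reduces to the single fact that, for a nonempty closed convex set $\Mjoint$, the projection satisfies $\Pijoint(x) = x$ exactly when $x \in \Mjoint$.

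The first step is to record that $\Mjoint$ is a nonempty closed convex polytope, so that $\Pijoint$ is well defined and single-valued. Closedness and convexity follow from the de Finetti description in the Preliminaries: $\Mjoint$ is the image of the simplex $\Delta(\{0,1\}^{m^\star})$, restricted to the measures consistent with the lifted local relations and $\Cset$, under the linear marginal map. A linear image of a polytope is a polytope. Nonemptiness is the assumption that the relations are jointly satisfiable, which I will state explicitly as a standing assumption. Uniqueness of the nearest point then follows from strict convexity of $\|\cdot\|_2^2$.

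The second step treats the two directions.

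\textbf{($\Leftarrow$)} If $x \in \Mjoint$, then $x$ itself attains the minimal distance $0$. By uniqueness $\Pijoint(x) = x$, hence $\eps^\star = 0$.

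\textbf{($\Rightarrow$)} If $\eps^\star = 0$, then $x = \Pijoint(x) \in \Mjoint$. Equivalently, in contrapositive form: if $x \notin \Mjoint$, then closedness gives $\mathrm{dist}(x,\Mjoint) > 0$, and this is precisely the argument used in the reverse half of \Cref{thm:commute}.

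The interpretive second sentence of the corollary then just restates this equivalence in terms of the coupling constraints. Each $\Pi_a\hpK^{(a)} \in \Mloc$, so $x \in \Mprod$ automatically. Since $\Mjoint$ is $\Mprod$ cut by the coupling constraints $\Cset$, membership $x \in \Mjoint$ fails exactly when the locally repaired, owner-selected coordinates violate some constraint in $\Cset$.

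The only point requiring care, and the one I would check, is this identification $\Mjoint = \Mprod \cap \{\text{constraints in } \Cset\}$. It relies on the definition of $\Mjoint$ as respecting both the lifted local relations and $\Cset$. Strictly, joint realizability by a single measure could be tighter than the intersection of the separate constraint sets. So I would phrase the final sentence as ``$x$ fails to be jointly coherent given the local repairs'' rather than claiming the two sets are literally equal.
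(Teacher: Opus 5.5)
Your proof is correct and is essentially the paper's argument. The paper's proof is the one-line observation that $\Pijoint$ fixes exactly the points of $\Mjoint$, so $\|x-\Pijoint(x)\|_2=0$ iff $x\in\Mjoint$; your two directions unpack this and make the well-posedness of $\Pijoint$ explicit, while your caution about identifying $\Mjoint$ with $\Mprod$ cut by $\Cset$ only affects the informal second sentence, where the paper simply takes $\Mjoint=\bigl(\bigcap_a\Mloc^{\uparrow}\bigr)\cap\Cset$ as given (see the proof sketch of \Cref{thm:dykstra}).
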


\begin{proof}
$\Pijoint$ is the identity on $\Mjoint$ by definition;
$\|x - \Pijoint(x)\|_2 = 0$ iff $x \in \Mjoint$.
\end{proof}

\subsection{Exposure interpretation}
\label{subsec:syslmsr}

\begin{corollary}[System-level FTAP exposure bound]
\label{cor:syslmsr}
Treat the coordinates of $\Aagg(\Pi_1,\dots,\Pi_k)$ as quoted
unit-pay prices on the $m^\star$ Bernoulli contracts of the joint
clique $C^\star$. If $\eps^\star>0$, the finite-state FTAP yields
a Dutch-book portfolio against the joint quote. Moreover, under the unit-stake
Logarithmic Market Scoring Rule (LMSR;
\citealp{hanson2003combinatorial}) exposure statistic used in our
experiments,
\(
\mathrm{Exposure}^\star \le \sqrt{m^\star}\,\eps^\star.
\)
Every individual sub-model's component-level Dutch-book exposure
is zero by construction up to the QP tolerance (since
$\Pi_a(\hpK^{(a)}) \in \Mloc$); any positive joint exposure
is therefore attributable to cross-component incoherence, modulo
the numerical solver floor.
\end{corollary}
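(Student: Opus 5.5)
The proof has three parts: the existence of a Dutch book, the quantitative bound $\mathrm{Exposure}^\star \le \sqrt{m^\star}\,\eps^\star$, and the attribution to cross-component incoherence. Write $x := \Aagg(\Pi_1\hpK^{(1)},\dots,\Pi_k\hpK^{(k)}) \in [0,1]^{m^\star}$ and $y := \Pijoint(x)$, so that $\eps^\star = \|x-y\|_2$.

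\textbf{Existence of a Dutch book.} Assume $\eps^\star>0$. By Cor.~\ref{cor:zero}, $x\notin\Mjoint$. The joint polytope $\Mjoint$ is the convex hull of the admissible truth-value vectors $\omega\in\{0,1\}^{m^\star}$ consistent with $C^\star$ (de Finetti), so it is a closed convex set. Strict separation therefore applies, using the projection characterization: with $d := x-y$ we have $\langle d, z - y\rangle \le 0$ for all $z\in\Mjoint$. In particular, for every admissible state $\omega$,
\[
\langle d,\omega\rangle \le \langle d,y\rangle = \langle d,x\rangle - \|d\|_2^2 .
\]
Consider the portfolio that \emph{sells} $d_j$ units of contract $j$ at the quoted price $x_j$ (a negative $d_j$ means buying). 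Its payoff in state $\omega$ is $\langle d, x-\omega\rangle \ge \|d\|_2^2 = (\eps^\star)^2 > 0$ uniformly in $\omega$. This is a sure-win Dutch book and is exactly the finite-state FTAP conclusion. If one prefers, one can instead cite the FTAP direction already used for Cor.~\ref{cor:noarb}: quotes admit no arbitrage iff they lie in $\Mjoint$.

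\textbf{Quantitative bound.} We normalize the portfolio to unit stake: each contract position lies in $[-1,1]$, i.e.\ $\|\theta\|_\infty\le 1$, as in the LMSR unit-stake statistic. The exposure is the guaranteed profit $\max_{\|\theta\|_\infty\le1}\min_\omega \langle\theta,x-\omega\rangle$. For any $\theta$ with $\|\theta\|_\infty\le 1$ and any $z\in\Mjoint$, the guaranteed profit is at most the profit against the mixture $z$ (a convex combination of states), because a minimum over states is at most its average. Choosing $z=y$ gives
\[
\min_\omega\langle\theta,x-\omega\rangle \le \langle\theta,x-y\rangle \le \|\theta\|_2\|x-y\|_2 \le \sqrt{m^\star}\,\eps^\star
\]
by Cauchy--Schwarz together with $\|\theta\|_2\le\sqrt{m^\star}\|\theta\|_\infty$.

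The step I expect to need the most care is matching this definition to the paper's exact ``unit-stake LMSR exposure statistic''. If that statistic is instead a worst-case LMSR loss gap, I would argue that it too is bounded by a linear functional $\langle\theta,x-y\rangle$ with $\|\theta\|_\infty\le 1$, which reduces it to the same Cauchy--Schwarz step.

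\textbf{Attribution.} Each $\Pi_a(\hpK^{(a)})$ lies in $\Mloc$ up to the OSQP KKT tolerance. Applying the same argument to each local clique, with $\eps_a\approx 0$, gives local exposure at most $m_a\eps^{\mathrm{KKT}}$, which is the bound of Cor.~\ref{cor:noarb}. Any positive joint exposure therefore exceeds the sum of local exposures only through the coupling constraints $\Cset$, up to the solver floor.
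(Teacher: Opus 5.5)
Your proof is correct. Its core step is the paper's: exposure is bounded by a linear functional $\langle a,x-\Pijoint(x)\rangle$ whose coefficient vector has $\|a\|_2\le\sqrt{m^\star}$, followed by Cauchy--Schwarz. Where you differ is in how exposure is modelled.

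The paper takes the statistic to be the positive violation $(\langle a,x\rangle-b)_+$ of a defining halfspace of $\Mjoint$, or a sum of mutually exclusive such violations. It then uses $\langle a,\Pijoint(x)\rangle\le b$ and checks $\|a\|_2\le\sqrt{m^\star}$ relation by relation. For the Dutch book it simply cites the separating-hyperplane form of FTAP.

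You instead define exposure as the best guaranteed profit over $\ell_\infty$-unit portfolios. You bound it by averaging over the coherent mixture $y=\Pijoint(x)$, and you get $\sqrt{m^\star}$ from $\|\theta\|_2\le\sqrt{m^\star}\|\theta\|_\infty$, with no per-relation check.

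This settles the matching issue you flagged: your quantity dominates the paper's statistic. Take $\theta=a$, whose entries lie in $\{-1,0,1\}$ for all four relation classes, and use $\max_\omega\langle a,\omega\rangle\le b$. So your bound implies the paper's. Your fallback argument is in fact exactly the paper's proof.

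Your framing also gives something the paper does not state. Normalizing your explicit book $d=x-y$ to $\theta=d/\|d\|_\infty$ guarantees profit at least $\|d\|_2^2/\|d\|_\infty\ge\eps^\star$. Under your definition this yields the two-sided estimate $\eps^\star\le\mathrm{Exposure}^\star\le\sqrt{m^\star}\,\eps^\star$.

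The cost is an extra assumption. Your averaging step needs $y$ to be a genuine mixture of consistent states, i.e.\ $\Mjoint$ must be exactly the de Finetti hull, which is the paper's definition. The paper's halfspace argument needs only that the violated halfspace be valid on $\Mjoint$, so it would also survive an outer polyhedral description.
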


\begin{proof}
If the quoted vector lies outside $\Mjoint$, the separating
hyperplane form of de Finetti's finite-state FTAP gives a
portfolio with nonnegative payoff in every coherent state and
strictly positive price discrepancy. For the relation classes used
in our experiments, the unit-stake exposure statistic is the
positive violation of a defining halfspace (or a sum of mutually
exclusive positive violations). If
$h(x)=(\langle a,x\rangle-b)_+$ is any such violation and
$x^\Pi=\Pijoint(x)$, then
$h(x)\le |\langle a,x-x^\Pi\rangle|\le \|a\|_2\|x-x^\Pi\|_2$.
For negation, conjunction, disjunction, and partition constraints
$\|a\|_2\le \sqrt{m^\star}$ after the relation is embedded in the
$m^\star$ joint coordinate system, yielding
$\mathrm{Exposure}^\star \le \sqrt{m^\star}\eps^\star$. Each
$\Pi_a(\hpK^{(a)}) \in \Mloc$ has zero local residual to $\Mloc$
by definition, so the per-component exposure diagnostic is zero up
to numerical solver tolerance.
\end{proof}

\begin{corollary}[System-level Pythagorean Brier improvement]
\label{cor:sysbrier}
Let $p^\star \in \Mjoint$ be the true joint marginal vector and
let $r$ be any composed quote. Then
\(
\|\Pijoint(r) - p^\star\|_2^2 \;\le\; \|r - p^\star\|_2^2
- \|r - \Pijoint(r)\|_2^2.
\)
In particular, taking
$r = \Aagg(\Pi_1\hpK^{(1)},\dots,\Pi_k\hpK^{(k)})$ yields a
deterministic, sample-path Brier-improvement guarantee for
hierarchical JCD whose slack
$\|r-\Pijoint(r)\|_2^2 = (\eps^\star)^2$ is largest exactly where
the compositional residual is largest.
\end{corollary}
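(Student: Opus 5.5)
The plan is to invoke the variational characterization of the Euclidean projection onto a closed convex set, applied to $\Mjoint$. First we note that $\Mjoint$ is nonempty, closed and convex. It is a polytope by de Finetti's representation in \S\ref{sec:setup}, being a linear image of the simplex $\Delta(\{0,1\}^{m^\star})$ cut by the lifted local relations and the coupling set $\Cset$. Hence $\Pijoint$ is well defined and single-valued. Write $x := \Pijoint(r)$. The projection theorem \citep{bauschke2017convex} states that $x$ is characterized by the obtuse-angle inequality
\[
\langle r - x,\; y - x\rangle \le 0 \quad \text{for all } y \in \Mjoint .
\]
We apply this inequality with $y = p^\star$, which is admissible because $p^\star \in \Mjoint$ by hypothesis.

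Next, expand the squared distance from $r$ to $p^\star$ through the intermediate point $x$:
\[
\|r - p^\star\|_2^2 = \|r - x\|_2^2 + \|x - p^\star\|_2^2 + 2\langle r - x,\; x - p^\star\rangle .
\]
By the obtuse-angle inequality the cross term is nonnegative. Dropping it and rearranging gives the stated inequality:
\[
\|\Pijoint(r) - p^\star\|_2^2 \le \|r - p^\star\|_2^2 - \|r - \Pijoint(r)\|_2^2 .
\]
This is the same Pythagorean argument as Theorem~\ref{thm:pyth}, now in the joint coordinate system $[0,1]^{m^\star}$ instead of a single clique. No property of $r$ is used beyond $r \in \mathbb{R}^{m^\star}$, which is why the bound holds deterministically and sample-path by sample-path.

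For the specialization, take $r = \Aagg(\Pi_1\hpK^{(1)},\dots,\Pi_k\hpK^{(k)})$. By Definition~\ref{def:compres}, $\|r - \Pijoint(r)\|_2 = \eps^\star$, so the slack equals $(\eps^\star)^2$. By Cor.~\ref{cor:zero}, this slack vanishes exactly when the locally repaired quote already lies in $\Mjoint$.

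There is no real obstacle in this argument. The only points needing care are interpretive:
\begin{itemize}
\item The ``Brier'' reading: the squared $L_2$ distance to the true marginal vector $p^\star$ equals expected Brier score up to a term that does not depend on the quote, so the inequality transfers to Brier improvement.
\item The hypothesis $p^\star \in \Mjoint$ must be the joint polytope that includes the couplings $\Cset$, not merely $\Mprod$. Otherwise the variational inequality cannot be applied at $y = p^\star$.
\end{itemize}
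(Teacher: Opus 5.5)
Your proposal is correct and matches the paper's proof: the paper also reruns the argument of Theorem~\ref{thm:pyth} on $\Mjoint$, applying the projection inequality at $y = p^\star \in \Mjoint$, expanding $\|r - p^\star\|_2^2$ around $\Pijoint(r)$, and dropping the nonnegative cross term. Your identification of the slack with $(\eps^\star)^2$ via Definition~\ref{def:compres} is likewise how the paper reads the specialization.
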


\begin{proof}
The Hilbert-projection argument of \Cref{thm:pyth} applied to
$\Pijoint$ on $\Mjoint$.
\end{proof}

\begin{corollary}[Brier lift under $\{0,1\}$ labels: condition and reversal]
\label{cor:label_lift}
Let $y\!\in\!\{0,1\}^{m^\star}$ be a resolved label vector and let
$p^\star := \E[y]$ denote the true marginal.
For any composed quote $r$,
\(
\E_y[\mathrm{Brier}(r,y) - \mathrm{Brier}(\Pijoint(r),y)] \;=\;
\|r-p^\star\|_2^2 - \|\Pijoint(r)-p^\star\|_2^2.
\)
If $p^\star \in \Mjoint$, this difference is at least
$\|r-\Pijoint(r)\|_2^2 = (\eps^\star)^2 \ge 0$ by Cor.~\ref{cor:sysbrier};
hierarchical JCD improves expected Brier (in expectation over $y$;
the sample-path geometric bound is Cor.~\ref{cor:sysbrier}).
If $p^\star \notin \Mjoint$ (label incoherence: a fraction of resolutions
violate the assumed logical structure), the difference can be negative;
projection can amplify the error against $p^\star$.
\end{corollary}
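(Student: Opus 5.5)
The plan is to prove the identity by a bias--variance expansion of the Brier score and then obtain the two regimes from Cor.~\ref{cor:sysbrier} and an explicit construction. Write $\mathrm{Brier}(q,y)=\|q-y\|_2^2$; if the paper normalises by $1/m^\star$, the argument is unchanged up to that constant.

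\textbf{Step 1 (the identity).} For any fixed $q\in[0,1]^{m^\star}$, expand
\[
\|q-y\|_2^2=\|q-p^\star\|_2^2+2\langle q-p^\star,\,p^\star-y\rangle+\|p^\star-y\|_2^2 .
\]
Take $\E_y$ with $q$ treated as deterministic, meaning $r$ is fixed or at least independent of $y$ given the conditioning. The cross term then vanishes because $\E[y]=p^\star$. The term $\E\|p^\star-y\|_2^2=\sum_j p^\star_j(1-p^\star_j)$ does not depend on $q$. Apply this to $q=r$ and to $q=\Pijoint(r)$, which is a deterministic function of $r$, and subtract. The irreducible term cancels and the identity follows. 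The only care needed is to state that $r$ is not a function of the realised $y$. This holds in the forecasting setting, since quotes precede resolution.

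\textbf{Step 2 (coherent labels).} Suppose $p^\star\in\Mjoint$. Then Cor.~\ref{cor:sysbrier} gives
\[
\|r-p^\star\|_2^2-\|\Pijoint(r)-p^\star\|_2^2\ge\|r-\Pijoint(r)\|_2^2 .
\]
This is the obtuse-angle (variational) inequality $\langle r-\Pijoint(r),\,p^\star-\Pijoint(r)\rangle\le 0$ for projection onto a closed convex set. For the specific choice $r=\Aagg(\Pi_1\hpK^{(1)},\dots,\Pi_k\hpK^{(k)})$, Definition~\ref{def:compres} identifies the slack as $(\eps^\star)^2$. Combined with Step~1, this gives the expected Brier improvement.

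\textbf{Step 3 (reversal).} The claim is only that the difference \emph{can} be negative, so one explicit example suffices; this step is the main obstacle. Take a two-question negation clique, with $\Mjoint=\{x\in[0,1]^2: x_1+x_2=1\}$. Let the label law put mass on $(1,1)$ so that the assumed structure is violated; for instance $y=(1,1)$ almost surely, giving $p^\star=(1,1)\notin\Mjoint$. Choose $r=(1,1)$. Then $\Pijoint(r)=(1/2,1/2)$, so $\|r-p^\star\|^2=0$ while $\|\Pijoint(r)-p^\star\|^2=1/2$, and the difference equals $-1/2<0$.

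A less degenerate example, such as $p^\star=(0.9,0.9)$ with $r=p^\star$, works the same way: the difference is $-\|p^\star-\Pijoint(p^\star)\|^2<0$. In general, any $r$ closer to $p^\star$ than its projection is gives a negative difference. Taking $r=p^\star\notin\Mjoint$ yields exactly $-\operatorname{dist}(p^\star,\Mjoint)^2<0$, which shows the reversal occurs for every incoherent $p^\star$, not just one instance.
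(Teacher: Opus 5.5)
Your proposal is correct and follows the same route as the paper. The paper gives no separate proof: the identity is left as the standard bias--variance expansion of the Brier score, and the coherent case comes directly from Cor.~\ref{cor:sysbrier}, exactly as in your Steps~1--2. The paper only asserts that the difference \emph{can} be negative when $p^\star\notin\Mjoint$, so your explicit witness (taking $r=p^\star$, which gives $-\mathrm{dist}(p^\star,\Mjoint)^2<0$) supplies the one piece it leaves unproved. Your caveats about $r$ not depending on $y$ and about a possible $1/m^\star$ normalisation are also appropriate.
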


\noindent The condition $p^\star \in \Mjoint$ is the precise assumption
under which Cor.~\ref{cor:sysbrier} transfers from the geometric to the
predictive setting. Empirically (\S\ref{sec:results}), the lift holds with
$p\!<\!10^{-15}$ on negation, conjunction, and partition; on
disjunction the gain is borderline ($\Delta\mathrm{Brier}\!=\!{+}0.0027$, $p\!=\!0.07$),
diagnosing a small fraction of disjunction cliques whose resolved
labels deviate from the strict $A\!\vee\!B$ relation: this is
exactly the $p^\star\!\notin\!\Mjoint$ regime in which
Cor.~\ref{cor:label_lift} predicts the lift can reverse.

\subsection{Disagreement controls the residual}
\label{subsec:disagree}

\begin{proposition}[Disagreement upper bound on $\eps^\star$]
\label{prop:disagree}
Let $r \in \Mjoint$ be any \emph{globally coherent reference
forecast}, that is, any joint quote that already lies in the
joint polytope. Under owner-selected coordinate aggregation,
\[
\eps^\star(\hpK)
\;\le\;
\bigl\|\Aagg(\Pi_1\hpK^{(1)},\dots,\Pi_k\hpK^{(k)}) - r\bigr\|_2.
\]
A natural choice for $r$ is the JCD-projected forecast
$\Pi_\beta(\hpK^{(\beta)})$ of any component $\beta$ that itself
lies in $\Mjoint$, for instance a component routed the entire
joint clique. The right-hand side is then the $L_2$ disagreement
between the composed quote and the reference. It vanishes when
all coordinates are assigned to the same component, so
inter-component disagreement on disputed coordinates is a
measurable proxy that both explains and upper-bounds the
residual.
\end{proposition}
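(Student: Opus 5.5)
The bound follows from the variational characterization of the Euclidean projection onto a closed set. Write $x := \Aagg(\Pi_1\hpK^{(1)},\dots,\Pi_k\hpK^{(k)})$ for the locally-repaired composed quote. By Definition~\ref{def:compres}, $\eps^\star(\hpK) = \|x - \Pijoint(x)\|_2$. Since $\Mjoint$ is a nonempty closed convex polytope (de Finetti, \S\ref{sec:setup}), $\Pijoint(x)$ is the unique minimizer of $\|x - y\|_2$ over $y \in \Mjoint$. Hence $\eps^\star(\hpK) = \min_{y\in\Mjoint}\|x-y\|_2 = \mathrm{dist}(x,\Mjoint)$. For any reference $r \in \Mjoint$, taking $y = r$ as a feasible competitor gives $\eps^\star(\hpK)\le\|x-r\|_2$, which is the displayed inequality. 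Convexity is not needed for this step; closedness and nonemptiness, which guarantee the minimum is attained, suffice.

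The remaining claims interpret this inequality.

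\textbf{The reference choice.} If component $\beta$ is routed the entire joint clique, then its projected forecast $\Pi_\beta(\hpK^{(\beta)})$ lies in its local polytope. That local polytope coincides with $\Mjoint$ once all coupling constraints in $\Cset$ are included among $\beta$'s relations, so this forecast is an admissible $r$. The hypothesis ``that itself lies in $\Mjoint$'' is stated explicitly in the proposition, so this needs only that observation.

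\textbf{Vanishing when one component owns every coordinate.} In that case owner-selection returns $x = \Pi_\beta(\hpK^{(\beta)}) = r$ coordinatewise, so the right-hand side is $0$. This forces $\eps^\star = 0$, consistent with Cor.~\ref{cor:zero}.

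\textbf{Reduction to disputed coordinates.} Expand $\|x - r\|_2^2 = \sum_{j\in\mathcal{Q}^\star}(x_j - r_j)^2$ and split the index set by $\mathrm{owner}(j)$. The coordinates owned by $\beta$ contribute zero. The remaining terms are exactly the squared disagreements between other owners' repaired quotes and the reference on the coordinates they own. This justifies reading the bound as inter-component disagreement on disputed coordinates.

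The main obstacle is not analytic but one of bookkeeping. One must check that the reference vector and the composed quote live in the same $m^\star$-dimensional coordinate system. Under Definition~\ref{def:owner}, duplicated local coordinates are separated before selection and identified through $\Cset$. So one must embed $\Pi_\beta(\hpK^{(\beta)})$ into $[0,1]^{m^\star}$ via the same ownership indexing before comparing.
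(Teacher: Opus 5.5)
Your proof is correct and is the same argument as the paper's: $\eps^\star=\|x-\Pijoint(x)\|_2$ is the minimal distance from $x$ to $\Mjoint$, so any feasible $r\in\Mjoint$ gives $\eps^\star\le\|x-r\|_2$. Your remarks on the reference choice, the single-owner vanishing case, and the common $m^\star$-coordinate embedding spell out the interpretive clauses that the paper states without separate proof.
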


\begin{proof}
By the variational characterization of $L_2$ projection onto a
closed convex set, $\eps^\star = \|x-\Pijoint(x)\|_2 \le \|x-r\|_2$
for any $r \in \Mjoint$, where
$x = \Aagg(\Pi_1\hpK^{(1)},\dots,\Pi_k\hpK^{(k)})$.
\end{proof}

Let $\Delta_j$ denote the per-coordinate disagreement between the
assigned LLM and a globally-coherent reference $\beta$. For each
relation class the bound takes a closed form keyed to the
constraint normal: $|\Delta_1{+}\Delta_2|/\sqrt{2}$ on negation,
$|\sum_j\Delta_j|/\sqrt{m^\star}$ on partition, and $\|\Delta\|_2$
on the Fr\'echet conjunction/disjunction polytope (full
derivations in App.~\ref{app:size3}).

Empirically the bound holds on $100\%$ of the $1{,}876$ cliques.
Mean bound-to-residual ratios are $1.30$ (negation), $2.34$
(disjunction), $3.00$ (conjunction), and $4.39$ (partition); the
bound is tightest where the constraint normal aligns with the
disagreement direction.

\subsection{Quantitative magnitude prediction}
\label{subsec:magnitude}

Under uniform random owner-selection, the expected residual admits a
closed form keyed to the empirical specialist covariance restricted to
the constraint normal that defines the joint polytope. Let
$\bar\Pi := k^{-1}\sum_a \Pi_a \in \Mjoint$ (by convexity) and
$D := \mathrm{diag}(\Sigma_\Pi)$ where
$\Sigma_\Pi := k^{-1}\sum_a (\Pi_a - \bar\Pi)(\Pi_a - \bar\Pi)^\top$.
Independent assignment across coordinates makes $D$ the effective
covariance: $\E_\sigma[(x-\bar\Pi)(x-\bar\Pi)^\top] = D$.

\begin{corollary}[Magnitude of $\eps^\star$, Rayleigh-quotient form]
\label{cor:magnitude}
Suppose the joint polytope is locally defined by a single binding
halfspace with normal $a_R$ at the projection. Under uniform i.i.d.\
owner-selection,
\begin{equation}
\E_\sigma\!\bigl[(\eps^\star)^2\bigr] \;=\;
\kappa_R \cdot \frac{a_R^\top D\, a_R}{\|a_R\|_2^2},
\label{eq:rayleigh}
\end{equation}
where $\kappa_R = 1$ when the binding constraint is an equality
($a_R^\top \bar\Pi = b_R$). When the binding constraint is an
inequality with $\bar\Pi$ on the boundary, the law of $a_R^\top(x-\bar\Pi)$
is the empirical distribution of $\{a_R^\top(\Pi_a-\bar\Pi)\}_{a\in[k]}$;
if this scalar is symmetric around $0$ then $\kappa_R = \tfrac12$
exactly, and otherwise $\kappa_R \in [0,1]$ with $\kappa_R \approx \tfrac12$
when the panel is approximately balanced about the boundary. The
upper bound $\E_\sigma[(\eps^\star)^2] \le a_R^\top D a_R/\|a_R\|_2^2$
holds in all inequality cases, and the generic
$\E_\sigma[(\eps^\star)^2] \le \mathrm{tr}(D)$ holds without further
assumptions. For the relation classes of \Cref{sec:exp}:
$a_{\textsc{neg}} = (1,1)$ (equality, $\kappa = 1$);
$a_{\textsc{partition}} = \mathbf 1_{m^\star}$ (equality, $\kappa = 1$);
$a_{\textsc{and}}, a_{\textsc{or}}$ are Fr\'echet halfspace normals
(inequality, $\kappa \approx \tfrac12$ under the symmetric-panel
approximation; conjunction's empirical $\kappa\!\approx\!0.83\cdot\tfrac12$
reflects $\bar\Pi$ off the boundary, see~\S\ref{sec:results}).
\end{corollary}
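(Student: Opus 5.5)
The plan is to reduce $\eps^\star$, in the single-binding-halfspace regime, to the distance from a point to a halfspace or hyperplane, and then take expectations over the random owner assignment $\sigma$. Write $x = \Aagg(\Pi_1\hpK^{(1)},\dots,\Pi_k\hpK^{(k)})$. Under uniform i.i.d.\ owner-selection, $x_j = (\Pi_{\sigma(j)})_j$ with $\sigma(j)$ uniform on $[k]$ and independent across $j$. Then $\E_\sigma[x] = \bar\Pi$, and by coordinatewise independence $\mathrm{Cov}_\sigma(x) = D$, as stated just before the corollary. The standing assumption says $\Mjoint$ is locally the set $\{y : a_R^\top y \le b_R\}$, or $\{y : a_R^\top y = b_R\}$ in the equality case, near the projection. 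Classical halfspace projection then gives $\eps^\star = (a_R^\top x - b_R)_+/\|a_R\|_2$ in the inequality case and $\eps^\star = |a_R^\top x - b_R|/\|a_R\|_2$ in the equality case. I take this reduction as the modelling hypothesis of the corollary, so no global polytope geometry is needed.

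\textbf{Equality case.} Since $\bar\Pi\in\Mjoint$ by convexity and lies on the binding hyperplane, we have $b_R = a_R^\top\bar\Pi$. Hence $a_R^\top x - b_R = a_R^\top(x-\bar\Pi)$, and
\[
\E_\sigma[(\eps^\star)^2] = \frac{a_R^\top\,\E_\sigma[(x-\bar\Pi)(x-\bar\Pi)^\top]\,a_R}{\|a_R\|_2^2} = \frac{a_R^\top D\,a_R}{\|a_R\|_2^2},
\]
so $\kappa_R=1$.

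\textbf{Inequality case, $\bar\Pi$ on the boundary.} Let $Z := a_R^\top(x-\bar\Pi)$. Then $(\eps^\star)^2 = Z_+^2/\|a_R\|_2^2$ and $\E[Z^2] = a_R^\top D a_R$. Define $\kappa_R := \E[Z_+^2]/\E[Z^2]$. This lies in $[0,1]$ because $Z_+^2\le Z^2$, which gives the general upper bound. If $Z$ is symmetric about $0$, then $\E[Z_+^2]=\E[Z_-^2]$, and these two sum to $\E[Z^2]$, so $\kappa_R=\tfrac12$ exactly.

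\textbf{Interior $\bar\Pi$.} If $\bar\Pi$ lies strictly inside the halfspace, then $b_R > a_R^\top\bar\Pi$. Hence $(a_R^\top x-b_R)_+\le Z_+$, which only strengthens the upper bound and explains $\kappa_R<\tfrac12$ (the conjunction case).

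\textbf{Generic bound $\E[(\eps^\star)^2]\le \mathrm{tr}(D)$.} This needs no halfspace assumption. Apply \Cref{prop:disagree} with $r=\bar\Pi\in\Mjoint$ to get $\eps^\star\le\|x-\bar\Pi\|_2$. Squaring and taking expectations gives $\E\|x-\bar\Pi\|^2 = \mathrm{tr}(D)$.

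\textbf{Main obstacle.} I expect the caveat about the claimed law of $Z$ to be the main obstacle. The statement says $Z$ is distributed as the empirical law of $\{a_R^\top(\Pi_a-\bar\Pi)\}_a$, but this holds only for a fully correlated (single-owner) assignment. Under independent coordinates, $Z$ is a sum of independent coordinate contributions. I would therefore phrase symmetry as a hypothesis on the actual law of $Z$. The exact value $\tfrac12$ holds whenever that law is symmetric, whatever its source. The approximately balanced case follows by continuity of $\E[Z_+^2]/\E[Z^2]$ in the law of $Z$.

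The last step is to instantiate the normals. For negation, $Q$ and $\neg Q$ sum to one, giving $a=(1,1)$. For a partition, $a=\mathbf 1$. For conjunction and disjunction, the normals are those of the Fr\'echet inequalities.
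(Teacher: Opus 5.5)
Your proof is correct and follows the paper's own argument essentially step for step. The paper uses the single-halfspace closed-form projection with $b_R=a_R^\top\bar\Pi$, so the residual is driven by $a_R^\top(x-\bar\Pi)$, whose second moment is $a_R^\top D a_R$ by coordinatewise independence; it handles the inequality case via $Z_+^2\le Z^2$ plus symmetry, and it gets the $\mathrm{tr}(D)$ bound from \Cref{prop:disagree} with $r=\bar\Pi$, exactly as you do. Your caveat is also well founded: under independent assignment, the law of $a_R^\top(x-\bar\Pi)$ is a convolution of per-coordinate laws, not the empirical panel law (which would have second moment $a_R^\top\Sigma_\Pi a_R$ rather than $a_R^\top D a_R$), and the paper's proof infers symmetry of $Z$ from the panel's symmetry in $a_R^\top\Pi_a$, which does not follow, so imposing symmetry directly on the actual law of $Z$ is the right repair.
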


\noindent\emph{Proof sketch.} Equality: $\eps^\star =
|a_R^\top(x-\bar\Pi)|/\|a_R\|$, square and use
$\E_\sigma[a_R^\top(x-\bar\Pi)(x-\bar\Pi)^\top a_R]=a_R^\top D a_R$.
Inequality with $\bar\Pi$ on the boundary: $\eps^\star =
(a_R^\top(x-\bar\Pi))_+/\|a_R\|$; for a zero-mean random variable $X$,
$\E[X_+^2] = \tfrac12 \E[X^2]$ when $X$ is symmetric, otherwise
$\E[X_+^2]/\E[X^2] \in [0,1]$ depending on the upper-tail mass.
Full proof and the generic-bound case in App.~\ref{app:magnitude}.

Combined with Cor.~\ref{cor:label_lift}, this yields a closed-form
\emph{Brier tax}
$\E[\mathrm{Brier}(r,y) - \mathrm{Brier}(\Pijoint(r),y)]
\ge \kappa_R\, a_R^\top D a_R/\|a_R\|^2$ when
$p^\star\!\in\!\Mjoint$, computable from the panel covariance
alone. The three couplings recurring in our experiments are
\emph{shared-question disagreement} (two specialists quote the same
$Q$ at different prices), \emph{negation across components}
($p_a\!+\!p_b\!=\!1$), and \emph{partition across components}
($\sum_a p_a\!=\!1$).

\subsection{Hierarchical repair}
\label{sec:hier}

The repair is constructive. Let
$\Pi_\Cset: [0,1]^{m^\star}\to \Cset$ denote the $L_2$ projection
onto the coupling set, an intersection of finitely many
half-spaces.

\begin{theorem}[Hierarchical projection convergence]
\label{thm:dykstra}
Let $\{\Pi_1,\dots,\Pi_k,\Pi_\Cset\}$ be the family of $L_2$
projections onto the local polytopes $\{\Mloc\}$ (lifted to
$[0,1]^{m^\star}$) and the coupling set $\Cset$. Their intersection
is the nonempty joint polytope $\Mjoint$. The Boyle-Dykstra
cyclic-projection iteration
\citep{boyle1986dykstra} initialized at any $r_0 \in [0,1]^{m^\star}$
generates a sequence converging in $L_2$ to $\Pijoint(r_0)$.
\end{theorem}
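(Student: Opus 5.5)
The plan is to reduce \Cref{thm:dykstra} to the classical Boyle--Dykstra theorem in a finite-dimensional Hilbert space. That theorem says the following. Let $C_1,\dots,C_N$ be closed convex sets in a Hilbert space with nonempty intersection $C$. Then Dykstra's cyclic projection scheme, run with the usual increment (correction) vectors, converges in norm to $P_C(r_0)$ for any starting point $r_0$ \citep{boyle1986dykstra}. Plain alternating projections would only reach some point of $C$; the correction terms are what force the limit to be the nearest point of $C$. So the proof consists of verifying the hypotheses for our family and then identifying the limit.

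\textbf{Step 1: the sets are closed and convex.} Each lifted local set $\Mloc^{\uparrow}$ is the Cartesian product of the polytope $\Mloc$ (closed and convex by de Finetti) with the unconstrained coordinates, so it is closed and convex. If one insists on the ambient box, it is $\Mloc\times[0,1]^{m^\star-m_a}$, which is also closed and convex. The coupling set $\Cset$ is a finite intersection of half-spaces, hence a closed convex polyhedron.

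The projection onto $\Mloc^{\uparrow}$ acts as $\Pi_a$ on the owned block and as the identity (or a box clip) on the remaining coordinates. This is exactly the block-separability argument from the forward proof of \Cref{thm:commute}. It shows the component-level JCD operators are the correct projections in the lifted space, so the hierarchical scheme really is Dykstra's algorithm on these sets.

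\textbf{Step 2: the intersection equals $\Mjoint$ and is nonempty.} This is the step I expect to be the main obstacle, because it is where the problem's semantics enter. By the definition of $\Mjoint$ in \S\ref{sec:setup}, the joint polytope respects each lifted local relation together with the coupling constraints. So I will argue
\[
\Mjoint=\Bigl(\textstyle\bigcap_a \Mloc^{\uparrow}\Bigr)\cap\Cset=\Mprod\cap\Cset .
\]
The inclusion $\subseteq$ is immediate from that definition.

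For $\supseteq$, I would read ``respects (i) and (ii)'' as defining $\Mjoint$ to be exactly this intersection, with $\Cset$ understood as the linear description of the cross-component relations. For negation, partition, and Fréchet conjunction/disjunction, the de Finetti polytope is cut out by such linear inequalities. If full de Finetti coherence of the joint clique is intended, I would add the joint Fréchet inequalities to $\Cset$ so that equality holds by construction.

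Nonemptiness follows by exhibiting a point: the marginal vector of any joint measure $\mu$ on $\{0,1\}^{m^\star}$ consistent with all relations lies in every set. Such a measure exists whenever the relations are jointly satisfiable, for example by a point mass on a satisfying truth assignment, which is the standing assumption that the clique is well posed.

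\textbf{Step 3: identify the limit.} Invoke Boyle--Dykstra with the $N=k+1$ sets from Steps 1 and 2 in $\mathbb{R}^{m^\star}$ under the Euclidean norm. Norm convergence coincides with $L_2$ convergence here, and the limit is $P_{\Mjoint}(r_0)=\Pijoint(r_0)$.

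\textbf{Remark on rates.} Since all the sets are polyhedral, one can additionally cite linear convergence of Dykstra's method for polyhedra. This is not needed for the statement.
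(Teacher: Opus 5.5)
Your proposal is correct and follows the same route as the paper's sketch. Both arguments write $\Mjoint = \bigl(\bigcap_a \Mloc^{\uparrow}\bigr) \cap \Cset$, observe that each lifted local set and $\Cset$ is closed and convex, and invoke the Boyle--Dykstra theorem (the paper cites Bauschke--Combettes, Thm.~9.31) to identify the limit as $\Pijoint(r_0)$; your Step~2 is simply more explicit than the paper, which treats the intersection identity and nonemptiness as part of the definition of $\Mjoint$ rather than arguing them.
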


\begin{proof}[Sketch]
$\Mjoint = \bigl(\bigcap_a \Mloc^{\uparrow}\bigr) \cap \Cset$ where
$\Mloc^{\uparrow}$ is $\Mloc$ lifted to $[0,1]^{m^\star}$ (free on
non-component coordinates). Each $\Pi_\bullet$ is the
$L_2$ projection onto a closed convex set, and each $\Mloc^{\uparrow}$
and $\Cset$ is closed convex. By
\citet[Thm.~9.31]{bauschke2017convex} (Boyle-Dykstra), cyclic
$L_2$ projection over a finite family of closed convex sets
converges to the projection onto their intersection.
\end{proof}

\paragraph{Per-iteration cost.}
Each $\Pi_a$ and $\Pi_\Cset$ is a small QP. The Paleka (neg/and/or)
and Polymarket (partition) relation classes admit closed-form
projections (App.~\ref{app:size3}); we use direct $\Pijoint$ via
\textsc{OSQP} \citep{stellato2020osqp} for uniformity across
relation classes, since the partition simplex projection and the
negation closed form are mathematically equivalent to OSQP's
converged solution at the reported tolerance.

\paragraph{Where the cyclic machinery actually bites.}
On the equality-coupled relations (negation, partition) the joint
projection collapses to a single closed-form step
(App.~\ref{app:size3}): partition repair is simplex projection
\citep{wang2013simplex}, which in the no-clipping majority is
literally subtracting $(\sum_i p_i\!-\!1)/m^\star$ from each
coordinate. The Boyle--Dykstra cycle is mathematically equivalent to
this but does real iterative work only on the Fr\'echet polytopes
(conjunction, disjunction), where multiple half-spaces can be
simultaneously active. The headline residual is therefore largest
on the relation (partition) where the repair is simplest, and
smallest where the cyclic machinery is needed; the contribution is
operational, not algorithmic.

\subsection{Runtime use}
\label{sec:runtime}

The residual supports three deployment modes that share the same
computation and differ only in how the agent acts on it. In
\emph{monitor} mode the agent logs $\eps^\star$ and surfaces large
residuals to an operator. In \emph{repair} mode it replaces the naive composed quote with
$\Pijoint(\cdot)$ before passing it downstream. By
\Cref{thm:dykstra} and Cor.~\ref{cor:syslmsr}, the repaired quote
lies in $\Mjoint$ up to solver tolerance and the exposure
diagnostic is driven to numerical zero. In
\emph{abstain-or-escalate} mode it refuses to act when
$\eps^\star\!>\!\tau$, a risk--coverage knob whose calibration to
budgeted exposure is outside our scope.

\subsection{Sequential monitoring: anytime-valid coherence test}
\label{sec:sequential}

In long-horizon agent deployments the agent generates a
compositional residual stream $(\eps^\star_t)_{t\ge 1}$ across
successive composition steps. The spatial guarantee of
\Cref{sec:compose} certifies incoherence at each step; the
following temporal guarantee lets an operator certify
\emph{persistent} incoherence across the stream without fixing
a stopping horizon in advance.

Under the null $H_0$ that the composed population quote is
jointly coherent ($p_F^\star\!\in\!\Mjoint$ at each step),
the e-process
\(
E_t(\lambda) := \prod_{s=1}^{t}
\exp\!\bigl(\lambda(\eps_s^{\star 2} - m_s^\star/(4K_s))
- \lambda^2 m_s^\star/(2K_s)\bigr)
\)
is a non-negative $\mathcal{F}_t$-supermartingale
(Lemma~\ref{lem:hoeff}, App.~\ref{app:audit}). By Ville's
inequality \citep{ville1939etude,howard2021time}, the stopping
rule $\tau_\alpha := \inf\{t : E_t(\lambda)\ge 1/\alpha\}$
controls Type-I error at level~$\alpha$ uniformly over all
stopping times (\Cref{thm:audit}). An operator monitors $E_t$
online and escalates the agent pipeline when it crosses
$1/\alpha$; no fixed-horizon commitment is required. The full
derivation and empirical demonstration on four LLM residual
streams appear in \Cref{app:audit}.

\section{Experimental setup}
\label{sec:exp}
\label{subsec:exp_setup}

We evaluate on Paleka \citep{paleka2024consistency} (negation,
conjunction, disjunction, paraphrase; $134$ cliques per checker)
and Polymarket ($67$ partition cliques after leakage control,
expanded to $268$ ensemble instances by four seeds). Both
benchmarks were chosen because they target the regime the theory
predicts will be tightest: Paleka was constructed by
\citet{paleka2024consistency} for cross-question consistency
checking, and Polymarket's partition events are the unit-mass
constraint regime ($\sum_i p_i = 1$) where the polytope coupling
is most restrictive.

\paragraph{Models.}
Anthropic \texttt{claude-haiku-4-5-20251001} (via the Anthropic
Messages API), OpenAI \texttt{gpt-5.4-mini} and \texttt{gpt-5.4-nano}
(via Azure OpenAI), and Meta \texttt{llama-3.3-70b-versatile} (via
Groq). Frontier-panel evaluations additionally use OpenAI
\texttt{gpt-5.5} (via Azure OpenAI). These are the public API model
identifiers as returned by each provider on 30 April 2026; raw
responses, prompts, and per-clique residuals are released with the
supplementary material. Each specialist produces
$K{=}8$
verbalized probability samples per question at temperature $0.7$.

\paragraph{Leakage control.}
Polymarket events were filtered so that
every market in every retained event resolved strictly after our
model snapshot date of 30 April 2026; the inclusion cutoff was
1 May 2026, and all retained events resolved on or after that
date. Paleka cliques inherit the resolution-status verification
of \citet{paleka2024consistency}. Full filtering protocol in
Appendix~\ref{app:cliques}.

\paragraph{Scope of the evaluation.}
The bulk of our evaluation is a controlled routing simulation:
clique coordinates are assigned i.i.d.\ to specialists from the
panel, and the composed quote is reassembled by owner-selected
aggregation. The planner-discretion harness
(\S\ref{subsec:planner_disc}) and routing-protocol ladder
(App.~\ref{app:context}) are the closest stand-ins for a deployed
multi-component agent; both are small ($n\!=\!20$ and $n\!=\!100$
partitions respectively). Claims about end-to-end deployed agents
should be read with that scope in mind.

\paragraph{Compositional ensemble.}
For each clique with $m$ questions, we draw $4$ random
seeds; each seed assigns each of the $m$ questions to one of the
$4$ LLMs uniformly i.i.d.; the agent's quoted marginal on coordinate
$j$ is the JCD-projected marginal of the assigned LLM on that
coordinate. The source forecast from each LLM is JCD-coherent on
the full clique before coordinate selection; the ensemble residual
$\eps^\star$ therefore reflects \emph{exclusively}
cross-component disagreement.

\paragraph{Benchmark properties.}
The four relation classes admit closed-form local projections
(App.~\ref{app:size3}) and span a range of polytope geometries
from a single equality (negation), to a high-dimensional simplex
(partition), to the Fr\'echet box (and/or). The coupling set
$\Cset$ controls the gap between $\Mjoint$ and $\Mprod$, and random
vs.\ structured routing brackets cross-model mixing.
\Cref{thm:commute}'s prediction $\eps^\star\!\equiv\!0$ when
$\Mjoint\!=\!\Mprod$ is verified at the QP solver floor by the
no-composition reference (\Cref{fig:compositional}b, gray dashed),
a falsifiable test of the dichotomy.

\begin{figure*}[!t]
\centering
\includegraphics[width=\linewidth]{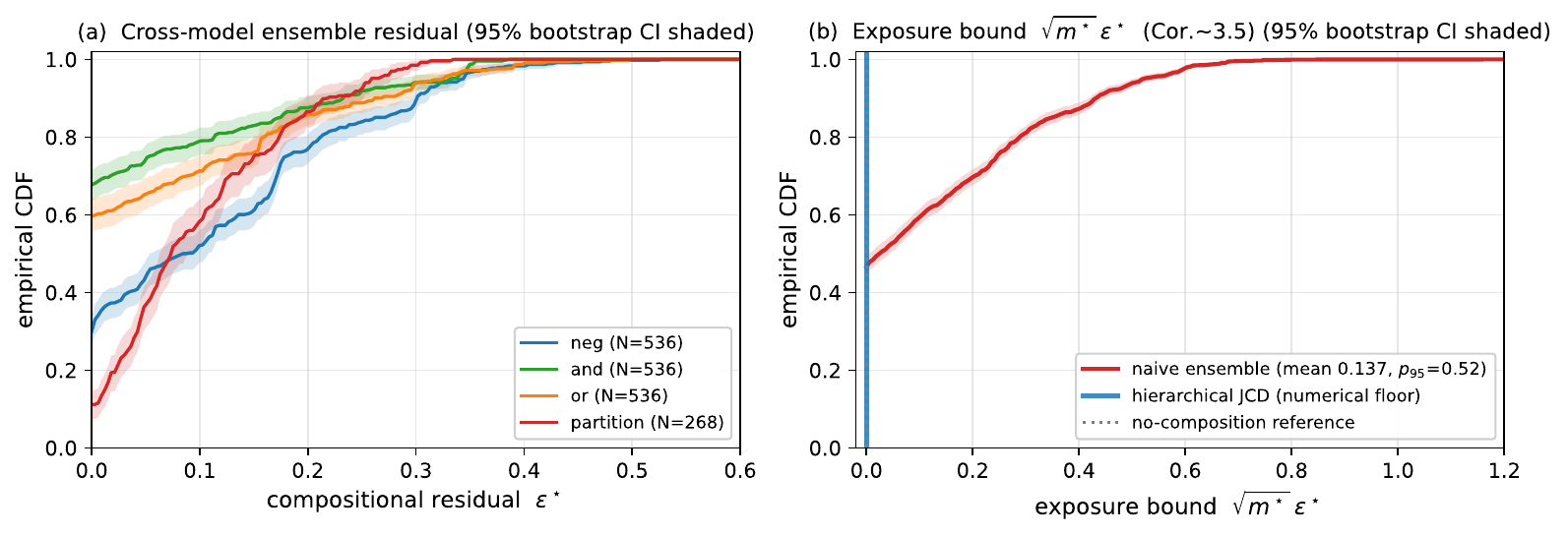}
\caption{\textbf{Compositional residuals and the exposure bound.}
\emph{(a)} ECDF of $\eps^\star$ across $N{=}1{,}876$ ensemble
cliques, stratified by relation. Each component is per-call
JCD-projected, so any positive $\eps^\star$ is cross-component
incoherence introduced by composition.
\emph{(b)} ECDF of the exposure bound
$\sqrt{m^\star}\eps^\star$ (Cor.~\ref{cor:syslmsr}) under three
regimes: naive composition (mean $0.137$), hierarchical JCD (at the
OSQP floor), and a single-LLM no-composition reference. Shaded
bands are $95\%$ bootstrap CIs ($B{=}1000$).}
\label{fig:compositional}
\end{figure*}

\section{Results}
\label{sec:results}

The random-assignment experiment ($N\!=\!134$ cliques
$\times\,4$ seeds $=\!536$ per relation for Paleka, plus
$N\!=\!67\!\times\!4\!=\!268$ for Polymarket; total $1{,}876$)
and structured-routing variant ($N\!=\!469$, one deterministic
seed) bracket cross-model mixing.

\subsection{Compositional residual and controls}
\label{subsec:compres_results}

\paragraph{Compositional residual under random routing.}
The compositional residual is strictly positive on $94\%$ (partition),
$66\%$ (negation), $43\%$ (disjunction), $33\%$ (conjunction) of
cliques (\Cref{fig:compositional}a). Mean $\eps^\star$ ranges $0.058$
(and) to $0.118$ (partition) on the full $N\!=\!1{,}876$ ensemble.
The projection-step ablation
(App.~\ref{app:projection_ablation}, Table~\ref{tab:proj_ablation})
reports $0.054/0.098$ on the same relations because it is restricted
to the resolved-label subset on which paired Brier is computed
($\sim\!85\%$ of the ensemble); the $\sim\!20\%$ magnitude shift on
partition reflects that unresolved partition cliques tend to be
larger-$m^\star$ events with proportionally higher residuals.
\emph{None} is visible per-component,
where each $\Pi_a\!\in\!\Mloc$ has zero residual. Hierarchical JCD
reduces the mean exposure bound $\sqrt{m^\star}\eps^\star$ to
$1.7\!\times\!10^{-14}$ on the $1{,}876$-clique ensemble
(per-clique post-repair $\eps^\star$ reaches the
\textsc{OSQP} primal-residual floor of
$\le\!1.5\!\times\!10^{-16}$ on partition and negation, where
closed-form projections terminate at machine precision;
the ensemble mean is dominated by the and/or relations, which
bottom out at the iterative-OSQP tolerance instead), matching the no-composition reference (a single LLM on every
coordinate; \Cref{fig:compositional}b, gray) while retaining
specialist routing.

\paragraph{Empirical validation of the magnitude prediction.}
Cor.~\ref{cor:magnitude}'s Rayleigh-quotient prediction
$\kappa_R\,a_R^\top D a_R/\|a_R\|^2$ matches the observed
expected residual $\E_\sigma[(\eps^\star)^2]$ across all four
relation classes (Table~\ref{tab:theory_obs}; per-clique Pearson
$0.69$--$0.89$). The conjunction under-shoot reflects $\bar\Pi$
typically interior to the binding Fr\'echet halfspace; on the
remaining three relations the prediction is tight to within
$7\%$. Both $\eps^\star$ and the $\sqrt{m^\star}\eps^\star$
exposure bound are therefore predictable from the panel covariance
alone, before any routing.

\begin{table}[h]
\centering
\small
\begin{tabular}{lccc}
\toprule
Relation & $\kappa_R$ & Obs./Pred. & Match \\
\midrule
\textsc{Negation}    & $1$            & $1.054$ & $94.6\%$ \\
\textsc{Partition}   & $1$            & $1.069$ & $93.1\%$ \\
\textsc{Disjunction} & $\tfrac{1}{2}$ & $1.026$ & $97.4\%$ \\
\textsc{Conjunction} & $\tfrac{1}{2}$ & $0.830$ & $83.0\%$ \\
\bottomrule
\end{tabular}
\caption{Theory vs.\ observed expected squared residual.
\emph{Obs./Pred.}\ is the ratio of measured
$\E_\sigma[(\eps^\star)^2]$ to Cor.~\ref{cor:magnitude}'s
Rayleigh-quotient prediction across the $1{,}876$ ensemble cliques;
\emph{Match} is $1-|1-\text{Obs./Pred.}|$. The conjunction
under-shoot is the $\bar\Pi$-interior regime predicted by
Cor.~\ref{cor:magnitude}.}
\label{tab:theory_obs}
\end{table}

\paragraph{Same-model decoupling control.}
To separate coordinate isolation from cross-model capability
divergence, we re-ran with four fresh-seed runs of one model
(Claude-Haiku-4.5, $N{=}80$ cliques per relation, $40$
partitions). Mean $\eps^\star$ falls to $0.025/0.019/0.040/0.058$
on neg/and/or/partition: $22$--$60\%$ of the residual persists at
same-model, with cross-model heterogeneity amplifying by
$1.7$--$4.5\times$. A greedy-decoding ($T{=}0$) control retains
$\eps^\star = 0.025$--$0.096$ across relations
(App.~\ref{app:t0_control}); on negation and disjunction $T{=}0$
amplifies the residual relative to $T{=}0.7$, ruling out sampling
noise as the source.

\paragraph{Compositional Brier against resolved labels.}
We adopt the convention $\Delta\mathrm{Brier} = \mathrm{Brier}_{\mathrm{JCD}} - \mathrm{Brier}_{\mathrm{naive}}$
throughout, so a negative $\Delta$ indicates JCD improvement.
Paired over the $1{,}876$ cliques, hierarchical JCD reduces
compositional Brier on \textsc{Neg}
($\Delta\mathrm{Brier}{=}-0.0137$, $p{<}10^{-43}$), \textsc{And}
($\Delta\mathrm{Brier}{=}-0.0076$, $p{<}10^{-16}$), and \textsc{Partition}
($\Delta\mathrm{Brier}{=}-0.0048$, $p{<}10^{-23}$). \textsc{Or} is
marginal ($\Delta\mathrm{Brier}{=}{+}0.0027$, $p{=}0.07$): this is
the predicted reversal regime of Cor.~\ref{cor:label_lift}, where
label noise on a small fraction of disjunction resolutions pushes
$p^\star$ outside $\Mjoint$, and Cor.~\ref{cor:label_lift}'s
condition-and-reversal allows projection to amplify error there
rather than reduce it.

\paragraph{Disagreement-driven mechanism.}
Per-clique $\eps^\star$ regressed on the assignment-aware reference
disagreement $\min_\beta\|x{-}\Pi_\beta\|_2$
(Prop.~\ref{prop:disagree}) gives $R^2{=}0.93$ on negation (slope
$0.83$) and $R^2{=}0.58$--$0.64$ on and/or/partition (slope
$0.59$--$0.63$). The constraint normal aligns most closely with
the disagreement direction on negation; on higher-dimensional
polytopes the projection contracts components of disagreement
orthogonal to the binding halfspace, loosening the bound.

\paragraph{Robustness to routing topology and retrieval grounding.}
Under structured routing (one model per relation family;
App.~\ref{app:structured}), $\eps^\star\!>\!0$ on $98.5\%$ of
negation, $36\%$ partition, $16\%$ disjunction, $12\%$ conjunction
cliques; mean $\sqrt{m^\star}\eps^\star\!=\!0.117$ vs.\ $0.137$
under random routing. Retrieval grounding (top-$5$ DuckDuckGo
snippets per specialist) on $30$ partitions shifts mean $\eps^\star$
from $0.260$ to $0.283$; $20/30$ partitions are equal-or-worse with
retrieval (App.~\ref{app:toolaug}).

\paragraph{Routing-protocol ladder.}
On $100$ multi-candidate partitions (App.~\ref{app:context}) we
walk the routing protocol from fixed naive owner-selection to
autonomous tool-using planners. Prompt-engineering under
owner-selection drops mean $\eps^\star$ from $0.235$
(\textsc{isolated}) to $0.090$ (\textsc{listed}) to $0.081$
(\textsc{full}); \textsc{isolated}\,$\to$\,\textsc{listed} is
significant ($p\!<\!10^{-11}$), \textsc{listed}\,$\to$\,\textsc{full}
is not ($p\!=\!0.18$), and $99\%$ of partitions retain
$\eps^\star\!>\!10^{-3}$ under \textsc{full}. Tool-using planners
with delegate-and-submit tools further reduce $\eps^\star$: a
non-reasoning planner reaches $\eps^\star\!=\!0.009/0.004$ at
$\sim\!4.8\!\times m$ calls; a frontier reasoning planner reaches
$0/200$ positive residuals at $\sim\!6.7\!\times m$ calls. The
hierarchical projection reaches $\eps^\star\!\to\!0$ at
$1\!\times m$ specialist calls (\Cref{thm:dykstra}).

\subsection{Intuitive LLM-side mitigations fail}
\label{subsec:mitigations}

The retrieval-grounding and routing-ladder results above each
isolate one obvious LLM-side fix. We collect them with a third
candidate, delegating coherentisation to an aggregator LLM, and
compare all three to the geometric repair. The comparison is
deliberately stacked in favour of the geometric repair: the
projection is handed the explicit coupling set $\Cset$ as a
typed input, while the three prompt-based methods have to infer
it from natural-language context. The genuinely hard regime
where $\Cset$ is implicit in free-form planner-tool transcripts
is outside the scope of all four interventions here and is
discussed as future work (\S\ref{sec:conclusion}).
\Cref{tab:mitigations} reports mean $\eps^\star$, the fraction of
partitions retaining $\eps^\star\!>\!0.05$, and the fraction
regressing relative to naive composition. Retrieval grounding
\emph{regresses} on $20/30$ partitions: mean $\eps^\star$ rises
from $0.260$ to $0.283$ on the matched $30$-partition baseline.
(Table~\ref{tab:mitigations}'s ``Naive'' row reports $0.214$, the
$100$-partition mean used for the LLM-aggregator comparison; the
two subsets are flagged in the caption.) Partition-aware prompting
helps on
average ($0.066$) but $53\%$ of partitions still exceed the $0.05$
band and $5/30$ regress. LLM-as-aggregator reduces mean
$\eps^\star$ to $0.028$ but $15\%$ of partitions still exceed
$0.05$ and $7/100$ regress against the naive baseline. The
geometric repair eliminates the residual to the QP solver floor
($\le 1.5\!\times\!10^{-16}$) at $1$ms per partition with no
extra LLM call. Prompt-only and aggregator-LLM mitigations can
\emph{harm} already-coherent quotes: across both prompt-side
interventions, $9/10$ worsening cases had baseline
$\eps^\star\!<\!0.05$, so $\eps^\star\!>\!\tau$ also serves as a
gate for when these prompt-based repairs are safe to apply.

\begin{table}[h]
\centering
\small
\begin{tabular}{@{}lrrrl@{}}
\toprule
Method & $\overline{\eps^\star}$ & frac.\ $>\!0.05$ & \% reg. & cost \\
\midrule
Naive             & $0.214$            & $0.86$ & ---   & 0 \\
Retrieval         & $0.283$            & $0.87$ & $67$  & 1 search \\
Partition prompt  & $0.066$            & $0.53$ & $17$  & 0 \\
LLM-aggregator    & $0.028$            & $0.15$ & $7$   & 1 LLM \\
\midrule
Hier.\ JCD        & $\le\!10^{-16}$    & $0$    & $0$   & 1 QP \\
\bottomrule
\end{tabular}

\caption{\textbf{Three intuitive LLM-side mitigations vs.\ the
geometric repair.} Rows use the evaluation subset for the
corresponding mitigation: retrieval-augmented and partition-aware
prompt on $30$ matched partitions; naive, LLM-aggregator, and
hierarchical JCD on $100$ partitions. The geometric repair
deterministically eliminates $\eps^\star$ without extra LLM
calls.}
\label{tab:mitigations}
\end{table}

\subsection{Planner-discretion harness}
\label{subsec:planner_disc}

To check that the failure is not an artefact of random assignment,
we re-run the experiment with a planner LLM (Claude-Haiku-4.5)
that, for each of $20$ live-style partitions, picks a specialist
per outcome from the four-LLM roster and emits its own
per-specialist context block; specialists then sample $K{=}8$
verbalized probabilities under the planner's prompt and we compute
$\eps^\star$ as in \S\ref{subsec:exp_setup}. Mean $\eps^\star =
0.113$ versus $0.231$ under random assignment on the same
partitions ($15/20$ improve), but $\eps^\star\!>\!0$ on $20/20$.
The planner's routing histogram (GPT-5.4-mini $50\%$, Claude-Haiku
$31\%$, Llama-70b $13\%$, GPT-nano $6\%$) shows no self-routing.

\subsection{Mechanism probe: coupling-visible elicitation}
\label{subsec:coupling}

The mechanism claim of \S\ref{sec:compose} is that $\eps^\star$
arises because specialists do not see the coupling constraint.
We test this directly. On the same $20$ partitions of
\S\ref{subsec:planner_disc}, we re-elicit each specialist's quote
under two paired conditions. \textsc{Blind} keeps the
planner-chosen routing of \S\ref{subsec:planner_disc} but strips
the planner-emitted context, so the specialist sees only its
assigned outcome as a single Bernoulli question. \textsc{Informed}
keeps the same routing and additionally shows the specialist the
partition label, all sibling outcomes, the explicit
$\sum_i p_i\!=\!1$ constraint, and the peers' \textsc{Blind} quotes
on the other outcomes. Each (partition, condition) cell is
replicated across four independent $K\!=\!8$ sampling rounds,
paired \textsc{Blind}$\to$\textsc{Informed} within each round.
\textsc{Informed} varies two things at once (coupling visibility
and peer-quote disclosure), so the comparison identifies
coupling-visible elicitation with peer context as the intervention,
not constraint visibility in isolation; full setup in
App.~\ref{app:coupling}.

\begin{table}[h]
\centering
\small
\begin{tabular}{@{}lrr@{}}
\toprule
                                 & \textsc{Blind}  & \textsc{Informed} \\
\midrule
Mean $\eps^\star$ (20 partitions)& $0.519$         & $0.298$           \\
Improved ($\Delta>0$)            & \multicolumn{2}{c}{$16/20$}         \\
Unchanged                        & \multicolumn{2}{c}{$1/20$}          \\
Worsened ($\Delta<0$)            & \multicolumn{2}{c}{$3/20$}          \\
\midrule
Paired $\Delta\eps^\star$ (CI)   & \multicolumn{2}{c}{$+0.221$ $[+0.173,+0.270]$} \\
Wilcoxon $W$                     & \multicolumn{2}{c}{$257$, $p\!=\!2.6\!\times\!10^{-10}$} \\
\bottomrule
\end{tabular}

\caption{\textbf{Coupling-visibility intervention on the
20 highest-$\eps^\star$ partitions} of the planner harness
(\S\ref{subsec:planner_disc}). $\Delta = \eps^\star_{\textsc{Blind}} - \eps^\star_{\textsc{Informed}}$; the
paired-bootstrap CI excludes zero and the Wilcoxon signed-rank
test rejects at $p\!=\!2.6\!\times\!10^{-10}$.}
\label{tab:coupling}
\end{table}

Disclosing the coupling constraint (and peer quotes) reduces
$\eps^\star$ as the theory predicts: paired mean $\eps^\star$
falls from $0.519$ \textsc{Blind} to $0.298$ \textsc{Informed}
on $16/20$ partitions, with $1$ unchanged and $3$ worsened.
The \textsc{Blind} mean of $0.519$ exceeds both the
random-assignment mean ($0.231$) and the planner-harness mean
($0.113$) on these same $20$ partitions. Two protocol differences
combine: the planner harness lets a planner LLM see the partition
when choosing routing (and emit its own per-specialist context),
while \textsc{Blind} strips all such context; and the four
\textsc{Blind} seeds are independent $K\!=\!8$ sampling rounds
on the same planner-chosen routing, so cross-seed variance does
not average out as it does over random reshuffles. We did not
isolate the contributions further; the residual gap is observed,
not theory-derived.
The three worsening partitions are non-payoff multi-way forecasts
where \textsc{Blind} specialists were already roughly normalised;
per-partition detail in App.~\ref{app:coupling}. The intervention is partial, not a
fix: $\eps^\star$ remains positive on every \textsc{Informed}
partition, consistent with the LLM-side mitigations result
(\S\ref{subsec:mitigations}) that no purely prompt-based
intervention eliminates the residual. The geometric repair
remains the only intervention that drives $\eps^\star$ to the
QP floor on every partition.

\subsection{Frontier-panel rerun}
\label{subsec:frontier}

To test whether capability scaling closes the gap, we rerun the
random-assignment ensemble on the $67$ Polymarket partition cliques
with a frontier-only specialist roster: Claude-Opus-4.7, GPT-5.5
(reasoning disabled for apples-to-apples), DeepSeek-V3.2, and
Llama-4-Maverick-17B-128E (FP8). The protocol matches
\S\ref{subsec:exp_setup} exactly ($K{=}8$, simplex pre-projection,
$4$ seeds, $268$ ensemble bets). The residual persists:
$\eps^\star\!>\!0$ on $97.8\%$ of the $268$ clique-seed bets,
slightly above the mid-tier panel's $94\%$ on the same Polymarket
slice. Mean $\eps^\star$ drops from $0.118$ to $0.072$ ($-39\%$):
frontier models reduce residual magnitude on partitions but do not
eliminate the failure mode. Brier improvement under hierarchical
JCD remains highly significant ($\Delta\mathrm{Brier} = -0.012$,
$95\%$ paired-bootstrap CI $[-0.015, -0.009]$); the log-payoff
gain trends positive ($+0.107$ nats per bet) but is noisier on
this narrower slice. Table~\ref{tab:frontier} reports the full
frontier-vs-mid-tier comparison.

\begin{table}[h]
\centering
\resizebox{\linewidth}{!}{\begin{tabular}{@{}lcc@{}}
\toprule
Metric                       & Frontier                    & Mid-tier                   \\
\midrule
$\Pr(\eps^\star\!>\!0)$      & $0.978$                     & $0.940$                    \\
$\Pr(\eps^\star\!>\!0.05)$   & $0.433$                     & ---                        \\
$\Pr(\eps^\star\!>\!0.10)$   & $0.257$                     & ---                        \\
mean $\eps^\star$            & $0.072$                     & $0.118$                    \\
median $\eps^\star$          & $0.042$                     & ---                        \\
$\Delta\mathrm{Brier}$ (JCD) & $-0.012$ $[-0.015,-0.009]$  & $-0.018$ $[-0.021,-0.015]$ \\
$\Delta\log p$ (JCD)         & $+0.107$ $[-0.12,+0.36]$    & $+0.60$ $[+0.15,+1.15]$    \\
$\Delta\log p$ (oracle)      & $-0.051$ $[-0.42,+0.32]$    & $\approx 0$                \\
\bottomrule
\end{tabular}
}
\caption{\textbf{Frontier vs.\ mid-tier panel} ($n\!=\!268$
partition bets each for $\eps^\star$ and $\Delta\mathrm{Brier}$;
$n\!=\!162$ unique-YES subset for $\Delta\log p$). $95\%$
paired-bootstrap CIs over (clique, seed) pairs; the mid-tier
numbers are the partition-only projection of
\S\ref{subsec:compres_results} and \S\ref{subsec:regret}.}
\label{tab:frontier}
\end{table}

\subsection{Downstream-decision regret}
\label{subsec:regret}

For each of $1{,}770$ resolved ensemble bets from \S\ref{sec:exp}
(we drop $106/1{,}876$ where the assigned specialist lacks a
forecast for the assigned coordinate, a pipeline artefact on
partition events) we convert the composed quote $\hat p^\star$ into
a proportional allocation
$w_i = \max(\hat p^\star_i,0)/\sum_j\max(\hat p^\star_j,0)$ and
measure (i)~Brier loss against the binary resolution vector and
(ii)~the realised log-payoff $\log w_{\text{winner}}$ on the
$1{,}252$ bets with a unique YES. The proportional rule passes
the agent's incoherent quote through to the bettor's allocation,
which exposes the behavioural cost of compositional incoherence
in its rawest form; stricter rules (truncated Kelly, max-entropy)
that themselves coherentise the quote partially absorb this cost
into the bettor's strategy, so their log-payoff gains under
JCD are correspondingly smaller (Table~\ref{tab:alloc}).

Hierarchical JCD lowers mean Brier ($\Delta\mathrm{Brier}=-0.018$,
paired-bootstrap $95\%$ CI over (clique, seed): $[-0.021,-0.015]$)
and improves mean log-payoff by $+0.115$ nats per bet
($[+0.056,+0.187]$). JCD matches the single-LLM oracle's gain
($\Delta\mathrm{Brier}= -0.017$, CI $[-0.035,-0.000]$) with a
confidence interval $\sim\!6\times$ narrower. The effect
concentrates on partitions (naive loses $0.60$ nats/bet vs.\ JCD,
CI $[+0.15, +1.15]$, $n{=}162$). The Pythagorean Brier improvement
of Cor.~\ref{cor:sysbrier} is thus realised on resolved labels,
not only on the synthetic $\eps^{\star 2}$ slack.

\paragraph{Regret stratifies by certificate.}
The realised regret rises monotonically with $\eps^\star$
(Table~\ref{tab:quartile}): the bottom two quartiles (where
$\eps^\star\!\le\!0.004$) carry no measurable Brier or log-payoff
gain, while the top quartile (where $\eps^\star\!>\!0.154$)
accounts for $0.054$ Brier and $0.221$ nats of regret per bet.
This justifies the runtime-gating recipe of
\S\ref{subsec:gate}: applying the projection only when
$\eps^\star\!>\!\tau$ targets precisely the high-regret bets.

\begin{table}[h]
\centering
\small
\begin{tabular}{@{}lcccc@{}}
\toprule
$\eps^\star$ quartile & range            & $\overline{\Delta\mathrm{Brier}}$ & $\overline{\Delta\log p}$ & $n$       \\
\midrule
Q1                    & $[0.000, 0.000]$ & $0.000$ & $0.000$ & $\sim 440$ \\
Q2                    & $(0.000, 0.004]$ & $0.000$ & $0.000$ & $\sim 440$ \\
Q3                    & $(0.004, 0.154]$ & $0.012$ & $0.123$ & $\sim 440$ \\
Q4                    & $(0.154, 0.545]$ & $0.054$ & $0.221$ & $\sim 440$ \\
\bottomrule
\end{tabular}

\caption{\textbf{Mean realised regret stratifies by
$\eps^\star$ quartile.} $n_{Q_1}\!=\!0$ for the bottom-quartile
log-payoff column because all such bets have JCD$\equiv$naive.}
\label{tab:quartile}
\end{table}

\paragraph{Allocation-rule sensitivity.}
The log-payoff gain depends on how a downstream bettor allocates
on top of the agent's quote. Brier is allocation-rule independent
($-0.018$ across all rules). Under the proportional rule (the
agent's incoherence is passed through as allocation slack),
JCD beats naive by $+0.168$ nats/bet on the $1{,}252$ unique-YES
subset ($95\%$ CI $[+0.083, +0.263]$); the $+0.115$ figure cited
in the abstract is the same comparison averaged over the full
$1{,}770$-bet ensemble (non-unique-YES bets contribute zero
log-payoff). Under stricter rules that themselves coherentise
the quote (truncated Kelly, max-entropy), the gain shrinks to
$+0.006$ nats: the bettor has already absorbed the agent's
incoherence into its own strategy (Table~\ref{tab:alloc}).

\begin{table}[h]
\centering
\small
\begin{tabular}{@{}lcc@{}}
\toprule
Allocation rule    & $\Delta_{\text{naive}\to\text{JCD}}$ & $95\%$ CI            \\
\midrule
proportional       & $+0.168$                             & $[+0.083, +0.263]$   \\
truncated Kelly    & $+0.006$                             & $[-0.003, +0.017]$   \\
max-entropy        & $+0.006$                             & $[-0.003, +0.017]$   \\
\bottomrule
\end{tabular}

\caption{\textbf{JCD vs.\ naive log-payoff under three
allocation rules} ($n\!=\!1{,}252$ unique-YES bets; paired
bootstrap). Proportional is the clearest behavioural-harm
measure; Kelly and max-entropy themselves coherentise.}
\label{tab:alloc}
\end{table}

\subsection{Runtime gating thresholds}
\label{subsec:gate}

The deployment recommendation is to apply repair (or escalate)
only when $\eps^\star\!>\!\tau$. We calibrate $\tau$ on the
1,770-bet ensemble of \S\ref{subsec:regret}, treating top-quartile
log-payoff regret as the harm signal. Brier-regret is
mechanically tied to $\eps^{\star 2}$ by Cor.~\ref{cor:sysbrier};
log-payoff regret depends on which coordinate resolved YES and
is therefore not geometrically tied to $\eps^\star$, making it
the genuinely informative discrimination test.

\paragraph{Discrimination.}
$\eps^\star$ predicts top-quartile log-payoff regret with $5$-fold
cross-validated AUC $0.775\!\pm\!0.044$; the geometric Brier-regret
check gives CV-AUC $0.974\!\pm\!0.012$. Mean realised log-payoff
regret stratifies cleanly by $\eps^\star$ quartile
($0.000, 0.000, 0.123, 0.221$ nats, bottom to top;
Table~\ref{tab:quartile}).

\paragraph{Operating thresholds.}
A high-recall threshold $\tau\!=\!0.15$ catches $91\%$ of harmful
bets at a $25\%$ alert rate and $3\%$ false-alarm rate; a
high-precision threshold $\tau\!=\!0.22$ catches $51\%$ at
$14\%$ alert rate and $1.5\%$ FPR (Table~\ref{tab:gating}). Both
thresholds are stable under $5$-fold CV (at $\tau\!=\!0.15$:
mean alert rate $25\%$, mean recall $90\%$). For deployment we
recommend $\tau\!\approx\!0.15$ when repair is cheap and
$\tau\!\approx\!0.22$ when escalation is costly.

\begin{table}[h]
\centering
\resizebox{\linewidth}{!}{\begin{tabular}{@{}lcccc@{}}
\toprule
Mode & $\tau$ & alert rate & harm capture & FPR \\
\midrule
high-recall ($\ge 90\%$ capture)    & $0.153$ & $25.1\%$ & $91.0\%$ ($90.4\%$) & $3.1\%$ \\
balanced                            & $0.180$ & $19.5\%$ & $73.5\%$            & $2.3\%$ \\
high-precision ($\ge 50\%$ capture) & $0.222$ & $13.8\%$ & $50.7\%$            & $1.5\%$ \\
\bottomrule
\end{tabular}
}
\caption{\textbf{Operating points for the $\eps^\star$ gate},
computed in-sample then verified by $5$-fold CV (mean shown
in parentheses where it differs from the in-sample number).}
\label{tab:gating}
\end{table}

\subsection{Per-component layer (summary)}
\label{subsec:percomp}

The local JCD layer behaves as \Cref{thm:pyth} predicts. Sorting
Paleka cliques by pre-projection $\eps$, JCD's Brier gain stratifies
to the most-incoherent quartile ($0.040$ at $m{=}2$, $0.013$ at $m{=}3$;
$0.12$ on Polymarket at $m{\ge}4$); other quartiles show no detectable
gain (\Cref{fig:headline_polymarket}). Stacked with per-question
Platt calibration \citep{platt1999probabilistic} (baseline B6 of
App.~\ref{app:fulltables}), JCD reduces mean Brier on every
$(\text{model},\text{checker})$ cell, by $0.037$ on Paleka and $0.046$
on Polymarket; Diebold--Mariano $p<10^{-4}$ on every model. A Murphy
decomposition (\Cref{tab:murphy_real}) attributes the gain to
calibration ($36\%$ reliability reduction) rather than
over-extremization. Per-relation B6 gains on Paleka ($-21\%$ and,
$-37\%$ or, $-4\%$ neg, $-1\%$ paraphrase) track local-polytope
restrictiveness.

\paragraph{Reproducibility.}
Code, prompts, sample dumps, per-clique JCD residuals, the
ensemble-assignment seed, and the full hierarchical-projection
pipeline are documented in \Cref{app:cliques} and released at
\url{https://github.com/akotawala10/composition-incoherence-icml};
$K$-sweep and the sequential test appear in
Appendix~\ref{app:audit}.

\section{Conclusion}
\label{sec:conclusion}

The compositional residual $\eps^\star$ supplies the instance-wise,
distribution-free coherence guarantee that per-component calibration,
self-consistency, and conformal prediction do not: it is computable
at deployment from system output and the cross-component constraints
alone, certifies a Dutch-book exposure no per-component repair can
remove, and admits a deterministic projection-based fix. The
product-structure dichotomy (\Cref{thm:commute}) characterises when
local coherence implies global coherence; Cor.~\ref{cor:magnitude}
predicts the typical residual from the panel covariance, matching
observed $\E_\sigma[(\eps^\star)^2]$ within $7\%$ on three of
four relation classes; the hierarchical Boyle--Dykstra projection
eliminates $\eps^\star$ at $1\!\times\!m$ elicitation cost; and on
$1{,}770$ resolved bets the certificate yields $+0.115$ nats per bet
over naive composition. The guarantees are orthogonal to
per-output conformal prediction \citep{angelopoulos2021gentle} and
complementary to opinion pooling
\citep{genest1986combining,abbas2009log}, which requires every
expert to forecast every coordinate.

\paragraph{Limitations.}
The Rayleigh-quotient prediction of Cor.~\ref{cor:magnitude} is
exact for negation and partition cliques but conservative when the
panel mean $\bar\Pi$ lies in the interior of the binding Fr\'echet
halfspace, which produces the $0.83\!\times$ under-shoot we observe
on conjunction.

The principal scope condition is that the coupling set $\Cset$ is
specified explicitly: every theorem in \S\ref{sec:compose} treats
$\Cset$ as a finite, externally given list of half-spaces and
equalities over joint coordinates (shared-question identifications,
cross-component logical relations, partition-mass constraints).
This covers structured tool-use and function-calling deployments,
where the planner emits declared output schemas with typed
sub-question fields and $\Cset$ is recoverable from the schema
without language understanding. Our planner-discretion harness
(\S\ref{sec:exp}) and routing-protocol ladder
(App.~\ref{app:context}) target this regime.

It does not cover deployments in which $\Cset$ is implicit in
free-form chain-of-thought or in unstructured planner--tool
transcripts. There, $\eps^\star$ remains well-defined and the
Boyle--Dykstra repair still applies on any subsequently specified
polytope, but the certificate cannot be computed without first
recovering $\Cset$ from the trace. Partial routes to that
recovery exist as components: an NLI-style classifier over
sub-question pairs, schema-enforcing planner protocols, and
symbolic trace logging at the tool-call boundary. A unified
inference step remains open; we list it as the first future-work
item below.

Finally, our resolved-label experiment relies on Polymarket
resolutions, which are noisy on a small fraction of disjunction
cliques; this pushes $p^\star$ outside $\Mjoint$ on those cliques
and is the regime where Cor.~\ref{cor:label_lift} does not
guarantee improvement.

\paragraph{Future work.}
First, the coupling set $\Cset$
is currently provided externally; recovering $\Cset$ from
unstructured planner--tool traces would extend $\eps^\star$ to
free-form agent transcripts, where the cross-component constraints
are implicit in the prompt structure rather than typed. Second, the
$L_2$/Brier pairing is one instance of a broader matched-Bregman
recipe; substituting log-loss or spherical scoring rules yields
analogous projections under their respective Bregman geometries
\citep{banerjee2005clustering}, with the residual generalising to a
Bregman gap. Third, the spatial certificate of \Cref{thm:commute}
and the temporal e-process of \S\ref{sec:sequential} compose
step-by-step but ignore long-horizon dependence between steps;
a sequential certificate that exploits inter-step structure is
open.

\section*{Broader Impact}

A runtime certificate for system-level coherence can be wired into
deployed multi-component pipelines as a guardrail, surfacing
incoherent compositions before they reach a downstream decision
policy. This is a reliability and auditability gain for high-stakes
settings such as forecasting, decision support, and regulated
probabilistic reasoning. The corresponding risk is over-trust: a
small $\eps^\star$ certifies coherence of the quoted belief but
says nothing about whether the underlying specialists are
calibrated against the true distribution, and a projected quote may
be confidently wrong on the very inputs where its components
disagree most. Operators deploying the repair should therefore
report $\eps^\star$ alongside conventional calibration and coverage
diagnostics, not as a substitute for them.

\bibliography{refs}

@inproceedings{paleka2024consistency,
  title     = {Consistency Checks for Language Model Forecasters},
  author    = {Paleka, Daniel and Sudhir, Abhimanyu Pallavi and Alvarez,
               Alejandro and Bhat, Vineeth and Shen, Adam and Wang, Evan
               and Tram\`er, Florian},
  booktitle = {International Conference on Learning Representations (ICLR)},
  year      = {2025},
  note      = {arXiv:2412.18544}
}

@article{stellato2020osqp,
  title   = {{OSQP}: an Operator Splitting Solver for Quadratic Programs},
  author  = {Stellato, Bartolomeo and Banjac, Goran and Goulart, Paul and
             Bemporad, Alberto and Boyd, Stephen},
  journal = {Mathematical Programming Computation},
  volume  = {12},
  number  = {4},
  pages   = {637--672},
  year    = {2020},
  doi     = {10.1007/s12532-020-00179-2}
}

@article{hanson2003combinatorial,
  title   = {Combinatorial Information Market Design},
  author  = {Hanson, Robin},
  journal = {Information Systems Frontiers},
  volume  = {5},
  number  = {1},
  pages   = {107--119},
  year    = {2003},
  doi     = {10.1023/A:1022058209073}
}

@inproceedings{wang2013simplex,
  title     = {Projection onto the probability simplex: An efficient algorithm
               with a simple proof, and an application},
  author    = {Wang, Weiran and Carreira-Perpi{\~n}{\'a}n, Miguel {\'A}.},
  booktitle = {arXiv preprint arXiv:1309.1541},
  year      = {2013}
}

@article{boyle1986dykstra,
  title   = {A method for finding projections onto the intersection of convex
             sets in {H}ilbert spaces},
  author  = {Boyle, James P. and Dykstra, Richard L.},
  journal = {Lecture Notes in Statistics},
  volume  = {37},
  pages   = {28--47},
  year    = {1986},
  doi     = {10.1007/978-1-4613-9940-7_3}
}

@incollection{platt1999probabilistic,
  title     = {Probabilistic Outputs for Support Vector Machines and
               Comparisons to Regularized Likelihood Methods},
  author    = {Platt, John C.},
  booktitle = {Advances in Large Margin Classifiers},
  pages     = {61--74},
  year      = {1999},
  publisher = {MIT Press}
}

@article{definetti1937foresight,
  title   = {La pr{\'e}vision: ses lois logiques, ses sources subjectives},
  author  = {de Finetti, Bruno},
  journal = {Annales de l'Institut Henri Poincar{\'e}},
  volume  = {7},
  pages   = {1--68},
  year    = {1937},
  note    = {English translation in \emph{Studies in Subjective Probability},
             Kyburg \& Smokler (eds.), Wiley 1964.}
}

@inproceedings{wang2023selfconsistency,
  title     = {Self-Consistency Improves Chain of Thought Reasoning in
               Language Models},
  author    = {Wang, Xuezhi and Wei, Jason and Schuurmans, Dale and Le,
               Quoc and Chi, Ed and Narang, Sharan and Chowdhery, Aakanksha
               and Zhou, Denny},
  booktitle = {International Conference on Learning Representations (ICLR)},
  year      = {2023}
}

@book{bauschke2017convex,
  title     = {Convex Analysis and Monotone Operator Theory in Hilbert Spaces},
  author    = {Bauschke, Heinz H. and Combettes, Patrick L.},
  edition   = {2nd},
  publisher = {Springer},
  year      = {2017},
  doi       = {10.1007/978-3-319-48311-5}
}

@book{ville1939etude,
  title     = {{\'E}tude critique de la notion de collectif},
  author    = {Ville, Jean},
  publisher = {Gauthier-Villars},
  address   = {Paris},
  year      = {1939}
}

@article{howard2021time,
  title   = {Time-uniform, nonparametric, nonasymptotic confidence sequences},
  author  = {Howard, Steven R. and Ramdas, Aaditya and McAuliffe, Jon and
             Sekhon, Jasjeet},
  journal = {The Annals of Statistics},
  volume  = {49},
  number  = {2},
  pages   = {1055--1080},
  year    = {2021},
  doi     = {10.1214/20-AOS1991}
}

@article{ramdas2023game,
  title   = {Game-theoretic statistics and safe anytime-valid inference},
  author  = {Ramdas, Aaditya and Gr{\"u}nwald, Peter and Vovk, Vladimir
             and Shafer, Glenn},
  journal = {Statistical Science},
  volume  = {38},
  number  = {4},
  pages   = {576--601},
  year    = {2023},
  doi     = {10.1214/23-STS894}
}

@article{angelopoulos2021gentle,
  title   = {A Gentle Introduction to Conformal Prediction and
             Distribution-Free Uncertainty Quantification},
  author  = {Angelopoulos, Anastasios N. and Bates, Stephen},
  journal = {arXiv preprint arXiv:2107.07511},
  year    = {2021}
}

@incollection{mcdiarmid1989method,
  title     = {On the Method of Bounded Differences},
  author    = {McDiarmid, Colin},
  booktitle = {Surveys in Combinatorics, 1989},
  editor    = {Siemons, J.},
  series    = {London Mathematical Society Lecture Note Series},
  volume    = {141},
  pages     = {148--188},
  publisher = {Cambridge University Press},
  year      = {1989},
  doi       = {10.1017/CBO9781107359949.008}
}

@article{banerjee2005clustering,
  author  = {Banerjee, Arindam and Merugu, Srujana and Dhillon, Inderjit S. and Ghosh, Joydeep},
  title   = {Clustering with {B}regman divergences},
  journal = {Journal of Machine Learning Research},
  volume  = {6},
  pages   = {1705--1749},
  year    = {2005}
}

@article{genest1986combining,
  author  = {Genest, Christian and Zidek, James V.},
  title   = {Combining probability distributions: A critique and an annotated bibliography},
  journal = {Statistical Science},
  volume  = {1},
  number  = {1},
  pages   = {114--135},
  year    = {1986}
}

@article{abbas2009log,
  author  = {Abbas, Ali E.},
  title   = {A {K}ullback--{L}eibler view of linear and log-linear pools},
  journal = {Decision Analysis},
  volume  = {6},
  number  = {1},
  pages   = {25--37},
  year    = {2009}
}
\bibliographystyle{icml2026}

\newpage
\appendix
\onecolumn
\raggedbottom

\section{Within-component projection facts}
\label{app:within}

\begin{theorem}[Pythagorean Brier improvement]
\label{thm:pyth}
Let $p^*\!\in\!\MC$ be the ground-truth marginal for clique $C$ and
$\Pi(\hpK)$ the $L_2$ projection of $\hpK$ onto $\MC$. Then for any
$\hpK\!\in\!\mathbb{R}^m$,
$\|\Pi(\hpK) - p^*\|_2^2 \le \|\hpK - p^*\|_2^2 - \|\hpK - \Pi(\hpK)\|_2^2$.
The slack $\|\hpK - \Pi(\hpK)\|_2^2$ is the squared distance from
coherence; Brier improvement is largest where pre-projection
incoherence is largest.
\end{theorem}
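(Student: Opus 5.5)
The plan is to use the variational (obtuse-angle) characterization of the $L_2$ projection onto a closed convex set and then expand a squared norm. First I will check that the hypotheses hold: by the de Finetti description in the Preliminaries, $\MC$ is a nonempty closed convex polytope in $\mathbb{R}^m$. Hence $\Pi=\Pi_C$ is well defined and single-valued on all of $\mathbb{R}^m$, and not only on $[0,1]^m$, so the statement makes sense for arbitrary $\hpK\in\mathbb{R}^m$. Write $x:=\hpK$ and $\pi:=\Pi(x)$.

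The key step is the projection inequality: for every $y\in\MC$,
\[
\langle x-\pi,\; y-\pi\rangle \le 0 .
\]
I would either cite this as the standard characterization in Hilbert space \citep{bauschke2017convex} or derive it directly. For the direct derivation, the map $t\mapsto \|x-(\pi+t(y-\pi))\|_2^2$ on $[0,1]$ is minimized at $t=0$, because $\pi+t(y-\pi)\in\MC$ by convexity. Its one-sided derivative at $0$ equals $-2\langle x-\pi,\,y-\pi\rangle$, and this must be nonnegative, which gives the inequality.

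Next I apply the inequality with $y=p^*$, which is allowed because the hypothesis gives $p^*\in\MC$. Decompose $x-p^*=(x-\pi)+(\pi-p^*)$ and expand:
\[
\|x-p^*\|_2^2=\|x-\pi\|_2^2+\|\pi-p^*\|_2^2+2\langle x-\pi,\;\pi-p^*\rangle .
\]
The cross term equals $-2\langle x-\pi,\;p^*-\pi\rangle$, which is $\ge 0$ by the projection inequality. Dropping it and rearranging gives $\|\pi-p^*\|_2^2\le\|x-p^*\|_2^2-\|x-\pi\|_2^2$, which is the claim.

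The interpretive sentence about Brier improvement is then immediate. The guaranteed reduction in squared distance to $p^*$ is at least $\|\hpK-\Pi(\hpK)\|_2^2=\eps_C(\hpK)^2$, so the improvement is largest when the pre-projection incoherence is largest. I expect no real obstacle. The only points needing care are that $p^*$ must actually lie in $\MC$ (this assumption is exactly what Cor.~\ref{cor:label_lift} later relaxes) and that closedness and convexity hold, so that existence and uniqueness of $\Pi$ and the variational inequality are justified.
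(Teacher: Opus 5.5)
Your proposal is correct and follows essentially the same route as the paper: apply the variational (obtuse-angle) characterization of the projection at $y=p^*$, expand $\|\hpK-p^*\|_2^2$ around $\Pi(\hpK)$, and drop the nonnegative cross term. Your one-sided-derivative derivation of the projection inequality and your check that $\Pi$ is well defined on all of $\mathbb{R}^m$ are small additions that make the argument self-contained, whereas the paper simply cites the Hilbert projection theorem.
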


\begin{proof}
Hilbert projection theorem
\citep[Thm.~3.16]{bauschke2017convex} gives
$\langle \hpK - \Pi(\hpK),\,p^* - \Pi(\hpK)\rangle \le 0$. Expand
$\|\hpK - p^*\|^2$ around $\Pi(\hpK)$ and drop the non-negative
cross term.
\end{proof}

\begin{corollary}[Per-component finite-state FTAP]
\label{cor:noarb}
Treating $\Pi(\hpK)\!\in\!\MC$ as quoted unit-pay prices on the $m$
Bernoulli contracts of $C$, the finite-state Fundamental Theorem of
Asset Pricing rules out a state-independent Dutch book against the
ideal projection. Under a numerical QP solution the unit-stake
exposure diagnostic is bounded by
$m\cdot\eps^{\mathrm{KKT}}\!\le\!1.4\!\times\!10^{-5}$ at
\textsc{OSQP} default tolerance.
\end{corollary}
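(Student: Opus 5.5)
The statement has two parts, and I plan to prove them separately. The first is the exact no-Dutch-book claim for the ideal projection. The second is the numerical bound when the projection is computed by a QP solver.

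\emph{Exact part.} By construction $\Pi(\hpK)\in\MC$. By de Finetti's characterisation recalled in the Preliminaries, membership in $\MC$ means there is a probability measure $\mu$ on the consistent worlds $\omega\in\{0,1\}^m$ (those satisfying $R$) with $\Pi(\hpK)_j=\mu(\omega_j=1)$.

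Take any portfolio $x\in\mathbb{R}^m$ of unit-pay contracts bought at the quoted prices. Its net payoff in state $\omega$ is $\langle x,\omega-\Pi(\hpK)\rangle$. Integrating against $\mu$ gives $\E_\mu[\langle x,\omega-\Pi(\hpK)\rangle]=0$. So the payoff cannot be strictly positive in every consistent state, which is the easy direction of the finite-state FTAP. Equivalently, no separating hyperplane exists between the quote and the convex hull of consistent states. This rules out a state-independent Dutch book.

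\emph{Numerical part.} Let $\tilde x$ be the OSQP output. OSQP terminates when the primal residual meets the tolerance. The polytope $\MC$ is written as $\{r: A r\le b,\ Er=f\}$, using the same defining halfspaces as in the proof of Cor.~\ref{cor:syslmsr}. The termination criterion then gives $\eps^{\mathrm{KKT}}$-feasibility in each coordinate of every constraint row: $(\langle a_i,\tilde x\rangle-b_i)_+\le\eps^{\mathrm{KKT}}$.

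The unit-stake exposure statistic is a sum of positive halfspace violations, as described in the proof of Cor.~\ref{cor:syslmsr}. For the relation classes here, each contract appears with unit stake in at most the relevant violated rows. I would therefore bound the total exposure by $m$ times the maximal per-row violation, i.e.\ by $m\,\eps^{\mathrm{KKT}}$.

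The final inequality $m\,\eps^{\mathrm{KKT}}\le 1.4\times10^{-5}$ is a numerical substitution. It uses OSQP's default absolute and relative tolerances together with the largest clique size $m$ in the experiments. I would state this explicitly as an instantiation rather than a derived constant.

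\emph{Main obstacle.} The delicate step is linking the solver's stopping criterion to the exposure statistic. OSQP's primal residual is a mixed absolute/relative $\ell_\infty$ quantity on $Ar$, so I need a uniform per-row bound. I also need the number of violated rows contributing to the exposure to be at most $m$. I would first try to verify this row count class by class: negation, conjunction/disjunction Fréchet rows, and partition. If a class has more than $m$ rows, I would fall back to the $\|a\|_2$ argument of Cor.~\ref{cor:syslmsr} with an $\ell_\infty\to\ell_2$ conversion. The constant would then change only by a $\sqrt{m}$ factor, so the claim would still hold at the stated scale.
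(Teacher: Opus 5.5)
Your proposal is correct and follows the route the paper relies on. The paper gives no standalone proof of this corollary; it rests on $\Pi(\hpK)\in\MC$ plus the finite-state FTAP, exactly as in the last step of the proof of Cor.~\ref{cor:syslmsr}. Your expectation-under-$\mu$ argument is precisely the easy direction of that theorem. Your planned class-by-class row count also works and supplies the factor $m$ the paper never derives: negation and partition contribute one row each, and conjunction and disjunction at most two Fr\'echet terms, each bounded by the per-row primal residual.

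The one step that would not go through as you describe it is the final substitution. The reproducibility manifest lists $1.4\times10^{-5}$ as the OSQP tolerance itself, and Polymarket cliques reach $m=8$. So $m$ times the tolerance is already at least $2.8\times10^{-5}$ for any $m\ge 2$. The constant can therefore only be read as a reported value of the achieved residual. Neither your argument nor the paper's derives it from the solver's stopping rule.
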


\section{Proof of Theorem~\ref{thm:commute} (reverse direction)}
\label{app:rev_proof}

Suppose $\Mjoint \subsetneq \Mprod$. Pick any
$r \in \Mprod \setminus \Mjoint$. For each component $a$, set
$\hpK^{(a)}$ to the restriction of $r$ to component $a$'s
coordinates: $\hpK^{(a)}_i := r_{j(a,i)}$ where $j(a,i)$ is the
joint index that component~$a$'s $i$-th local coordinate maps to
under Definition~\ref{def:owner}; under shared-question coupling,
set $\hpK^{(a)}_{i_a} = \hpK^{(b)}_{i_b} = r_j$ whenever components
$a$ and $b$ both own the duplicate of joint coordinate $j$. Then
each restriction lies in $\Mloc$ since $r \in \Mprod$, so
$\Pi_a(\hpK^{(a)})$ equals that restriction and the owner-selected
aggregator reassembles $\Aagg(\Pi_1,\dots,\Pi_k) = r$. Since
$r \notin \Mjoint$ and $\Mjoint$ is closed, $\Pijoint(r) \neq r$
and $\eps^\star = \|r - \Pijoint(r)\|_2 > 0$.

\section{Closed-form local projections}
\label{app:size3}

\paragraph{Negation ($m{=}2$, $r_1 + r_2 = 1$).}
$\Pi_{\MC}(\hpK) = \bigl(\tfrac{1}{2}(1 + \hat p_1 - \hat p_2),\,
\tfrac{1}{2}(1 - \hat p_1 + \hat p_2)\bigr)$;
deterministic Dutch-book on raw $\hpK$ is $|\hat p_1 + \hat p_2 - 1|$.

\paragraph{Conjunction ($m{=}3$, $r_3{=}r_1\!\wedge\!r_2$).}
$\MC = \{r\in[0,1]^3 : \max(0, r_1{+}r_2{-}1) \le r_3 \le \min(r_1, r_2)\}$
is the Fr\'echet polytope; $\Pi_{\MC}$ is the unique fixed point of
joint clipping on $r_3$ and feasibility-preserving adjustments on
$(r_1, r_2)$, computed by Dykstra's cyclic projection algorithm
\citep{boyle1986dykstra} on the six halfspaces (four box, two
Fr\'echet) and converging to machine precision in $<\!200$
iterations. Deterministic Dutch-book on raw $\hpK$ is
$\max(0, \hat p_3 - \min(\hat p_1, \hat p_2)) +
\max(0, \max(0,\hat p_1{+}\hat p_2{-}1) - \hat p_3)$.

\paragraph{Disjunction ($m{=}3$, $r_3{=}r_1\!\vee\!r_2$).}
$\MC = \{r\!\in\![0,1]^3 : \max(r_1, r_2) \le r_3 \le \min(1, r_1{+}r_2)\}$
via $P(A{\vee}B) = P(A){+}P(B){-}P(A{\wedge}B)$; same Dykstra
recipe.

\paragraph{Partition ($\sum_i r_i = 1$, $r_i\ge 0$).}
$\Pi_{\MC}$ is the simplex projection of \citet{wang2013simplex}
in $O(n\log n)$. Deterministic Dutch-book on raw $\hpK$ is
$|\sum_i \hat p_i - 1|$.

\section{Sequential test on the residual stream}
\label{app:audit}

In long-horizon agent deployments, the compositional residual
stream $(\eps^\star_t)_{t \ge 1}$ can be monitored over time;
the per-component test below extends to the system level once a
bounded-difference envelope for the chosen aggregation rule is
specified. Under coordinate-selection aggregation with disjoint
owners, $\eps_t^{\star 2}$ inherits the envelope below with
$m_t$ replaced by $m^\star_t$ and $K_t$ replaced by
$\min_a K_{a,t}$. The compositional theory of \Cref{sec:compose} is
spatial: it controls coherence at a single time step across $k$
components. The residual stream $(\eps_t)_{t\ge 1}$ that an agent
generates over time admits a complementary \emph{temporal}
guarantee.

\paragraph{Setup.}
A clique stream $(C_t)_{t\ge 1}$ with sizes $m_t$ and $K_t$ samples
per question yields residuals $\eps_t = \|\hpK_t - \Pi(\hpK_t)\|_2$.
Assume the $K_t m_t$ samples at time $t$ are conditionally
independent given $\F_{t-1}$, each bounded in $[0,1]$ (Bernoulli
parses or verbalized-probability emissions both satisfy this; the
bounded-difference constant below uses the worst-case $[0,1]$
variance $1/4$, which majorises the verbalized-sample variance),
with conditional mean vector $\pF^{(t)}$. Define the null
\(
H_0:\,\pF^{(t)} \in \MC^{(t)}\ \text{for all $t$}.
\)

\begin{lemma}[Bounded-difference MGF bound]
\label{lem:hoeff}
Under $H_0$, $\eps_t^2 \in [0, m_t]$ with conditional mean
$\E[\eps_t^2 \mid \F_{t-1}] \le m_t/(4K_t)$, and for every
$\lambda > 0$,
\(
\E\!\left[\exp\!\bigl(\lambda(\eps_t^2 - m_t/(4K_t))\bigr) \mid
\F_{t-1}\right]
\le \exp\!\bigl(\lambda^2 m_t / (2K_t)\bigr).
\)
\end{lemma}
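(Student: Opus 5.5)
The plan is to treat $\eps_t^2$, conditionally on $\F_{t-1}$, as a function of the $K_t m_t$ independent bounded samples $\{Y_{j,k}\}$. I would prove the three claims in order: a deterministic range bound, a bound on the mean via the projection's minimality, and an MGF bound from McDiarmid's bounded-difference inequality \citep{mcdiarmid1989method}. Throughout I write $x=\hpK_t\in[0,1]^{m_t}$ and $f(x)=\eps_t^2=\|x-\Pi(x)\|_2^2$, with $\Pi$ the $L_2$ projection onto $\MC^{(t)}$.

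\emph{Range.} Both $x$ and $\Pi(x)$ lie in $[0,1]^{m_t}$, since $\MC^{(t)}\subseteq[0,1]^{m_t}$. Each coordinate difference therefore has absolute value at most $1$, which gives $f\in[0,m_t]$.

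\emph{Mean.} Under $H_0$ we have $\pF^{(t)}\in\MC^{(t)}$. The variational characterization of projection (as in Prop.~\ref{prop:disagree}) then gives $\eps_t\le\|x-\pF^{(t)}\|_2$. Hence
\[
\E[\eps_t^2\mid\F_{t-1}]\le\sum_j \mathrm{Var}(\hpK_{t,j}\mid\F_{t-1})=\sum_j \frac{\mathrm{Var}(Y_{j,k}\mid\F_{t-1})}{K_t}\le \frac{m_t}{4K_t}.
\]
Here I use conditional independence of the samples and the fact that a $[0,1]$-valued variable has variance at most $1/4$.

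\emph{Bounded differences (main step).} This is where I expect the real difficulty. The naive route uses that the distance $d(x)=\|x-\Pi(x)\|_2$ is $1$-Lipschitz and then squares it. That gives a per-sample change of order $2\sqrt{m_t}/K_t$, which would produce an exponent $\propto m_t^2/K_t$ rather than the claimed $m_t/K_t$.

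Instead I would use that the squared distance to a closed convex set is continuously differentiable with gradient $\nabla f(x)=2(x-\Pi(x))$ \citep{bauschke2017convex}. Replacing one sample $Y_{j,k}$ moves only coordinate $x_j$, by at most $1/K_t$. By the mean value theorem along that coordinate, the change in $f$ is at most $\sup|2(x_j-\Pi(x)_j)|\cdot (1/K_t)\le 2/K_t$. This uses again that both points lie in the unit box, and that the segment stays in the box by convexity. So each of the $K_t m_t$ samples has bounded-difference constant $c=2/K_t$, and $\sum c^2 = K_t m_t\cdot 4/K_t^2 = 4m_t/K_t$.

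\emph{MGF.} The conditional form of McDiarmid's inequality, i.e.\ Hoeffding's lemma applied to the Doob martingale, gives for every $\lambda>0$
\[
\E\bigl[e^{\lambda(f-\E[f\mid\F_{t-1}])}\mid\F_{t-1}\bigr]\le \exp\!\bigl(\lambda^2\textstyle\sum c^2/8\bigr)=\exp\!\bigl(\lambda^2 m_t/(2K_t)\bigr).
\]
Finally, since $\lambda>0$ and $\E[f\mid\F_{t-1}]\le m_t/(4K_t)$ by the mean step, we have $\lambda(f-m_t/(4K_t))\le\lambda(f-\E[f\mid\F_{t-1}])$. The left side of the lemma is therefore dominated by the McDiarmid bound, which completes the proof.
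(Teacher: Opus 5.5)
Your proof is correct, but it takes a different route from the paper at the concentration step. The paper never computes bounded differences for $\eps_t^2$ itself. It applies McDiarmid to the separable surrogate $Z_t=\|\hpK_t-\pF^{(t)}\|_2^2$, whose per-sample sensitivity $2/K_t$ is a one-line coordinatewise check. That yields $\E[\exp(\lambda(Z_t-\E[Z_t\mid\F_{t-1}]))\mid\F_{t-1}]\le\exp(\lambda^2 m_t/(2K_t))$ and $\E[Z_t\mid\F_{t-1}]\le m_t/(4K_t)$. The paper then transfers both bounds to the residual, implicitly, through the pointwise domination $\eps_t^2\le Z_t$ (projection optimality under $H_0$) and the monotonicity of $u\mapsto e^{\lambda u}$ for $\lambda>0$.

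You instead show that $\eps_t^2$ itself has per-sample sensitivity $2/K_t$, using $\nabla\|x-\Pi(x)\|_2^2=2(x-\Pi(x))$ and the mean value theorem. You then use $H_0$ only for the centering.

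The paper's route is more elementary: the surrogate trick avoids the $\sqrt{m_t}$ loss you flag without any convex-analytic input. Your route gives a slightly stronger intermediate fact, namely sub-Gaussian concentration of the residual about its own conditional mean whether or not $H_0$ holds. That is the form you would need to analyse the test under alternatives.

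You can also drop the differentiability of the squared distance. For $x'=x+\delta e_j$, the same constant follows from $\|x'-\Pi(x')\|_2^2\le\|x'-\Pi(x)\|_2^2=\|x-\Pi(x)\|_2^2+\delta\,(x_j+x'_j-2\Pi(x)_j)$, together with the symmetric inequality.
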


\begin{proof}
$\eps_t \le \|\hpK_t - \pF^{(t)}\|_2$ under $H_0$ by projection
optimality on $\MC$. The squared norm
$Z_t = \|\hpK_t - \pF^{(t)}\|_2^2$ is a function of $K_t m_t$
i.i.d.\ $[0,1]$-bounded samples with bounded differences $2/K_t$ per
sample; McDiarmid's bounded-difference inequality
\citep{mcdiarmid1989method} yields the sub-Gaussian MGF bound
$\E[\exp(\lambda(Z_t-\E Z_t))]\le \exp(\lambda^2 m_t/(2K_t))$, and
$\E Z_t \le m_t/(4K_t)$ from the per-component variance bound.
\end{proof}

\begin{theorem}[Anytime-valid coherence test]
\label{thm:audit}
For any $\lambda > 0$,
\(
E_t(\lambda) := \prod_{s=1}^{t} \exp(\lambda(\eps_s^2 - m_s/(4K_s))
- \lambda^2 m_s/(2K_s)),\ E_0=1,
\)
is a non-negative $\F_t$-supermartingale under $H_0$. Hence by
Ville's inequality \citep{ville1939etude,howard2021time}, for every
$\alpha \in (0,1)$ and every stopping time $\tau$,
\(
\PP\bigl(\sup_{t\le\tau} E_t(\lambda) \ge 1/\alpha \mid H_0\bigr)
\le \alpha.
\)
The decision rule
$\tau_\alpha := \inf\{t : E_t \ge 1/\alpha\}$ controls Type-I error
at level $\alpha$ uniformly over stopping rules.
\end{theorem}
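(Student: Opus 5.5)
The plan is to show that each multiplicative factor of $E_t(\lambda)$ has conditional expectation at most one under $H_0$, and then invoke Ville's inequality. Write $E_t = E_{t-1}\, G_t$ with
\[
G_t := \exp\!\bigl(\lambda(\eps_t^2 - m_t/(4K_t)) - \lambda^2 m_t/(2K_t)\bigr).
\]
The factor $E_{t-1}$ is $\F_{t-1}$-measurable, because $\eps_s$ for $s\le t-1$ is a function of the samples up to time $t-1$. Hence
\[
\E[E_t\mid\F_{t-1}] = E_{t-1}\,\E[G_t\mid\F_{t-1}].
\]
Lemma~\ref{lem:hoeff} is exactly the statement $\E[\exp(\lambda(\eps_t^2 - m_t/(4K_t)))\mid\F_{t-1}] \le \exp(\lambda^2 m_t/(2K_t))$. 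Since $m_t$ and $K_t$ are fixed (or $\F_{t-1}$-predictable), the deterministic factor $\exp(-\lambda^2 m_t/(2K_t))$ can be pulled out of the conditional expectation. This gives $\E[G_t\mid\F_{t-1}]\le 1$, and therefore $\E[E_t\mid\F_{t-1}]\le E_{t-1}$.

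Two side conditions then need checking. Non-negativity is immediate, since each $E_t$ is a product of exponentials and $E_0=1$. Integrability follows from $\eps_t^2\in[0,m_t]$, also stated in Lemma~\ref{lem:hoeff}: each $G_t$ is bounded, so each $E_t$ is a bounded random variable. If $m_t,K_t$ were allowed to be random, I would require them to be $\F_{t-1}$-measurable (the clique is chosen before sampling), so that the lemma applies conditionally. I would state this as a standing assumption.

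For the inequality, apply Ville's maximal inequality for non-negative supermartingales: $\PP(\sup_{t\ge0}E_t\ge 1/\alpha)\le \alpha\,\E[E_0]=\alpha$. For any stopping time $\tau$, the event $\{\sup_{t\le\tau}E_t\ge1/\alpha\}$ is contained in $\{\sup_t E_t\ge1/\alpha\}$, so the bound transfers immediately. This also covers $\tau_\alpha$: the event that $\tau_\alpha<\infty$ is precisely the event that the supremum crosses $1/\alpha$, so the Type-I error is at most $\alpha$ uniformly over stopping rules. Alternatively, one can use optional stopping on $E_{t\wedge\tau}$ combined with Fatou's lemma.

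The main obstacle is not the supermartingale step, which is mechanical given the lemma, but making sure the lemma really holds in the form needed. Specifically, the centring term $m_t/(4K_t)$ must dominate $\E[\eps_t^2\mid\F_{t-1}]$, and the MGF bound must be for $\eps_t^2$ centred at $m_t/(4K_t)$ rather than at its own mean. Since the bound is taken as given, I only note how the reduction goes. Write $\eps_t^2\le Z_t$ with $\E Z_t\le m_t/(4K_t)$. Then
\[
\exp(\lambda(\eps_t^2-m_t/(4K_t))) \le \exp(\lambda(Z_t-\E Z_t)),
\]
because $\lambda>0$, and the sub-Gaussian bound for $Z_t$ finishes the argument. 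This monotonicity in $\lambda>0$ is the one place where the positivity of $\lambda$ is used, and I would make it explicit.
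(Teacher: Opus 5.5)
Your proposal is correct and follows the paper's proof step for step: you factor $E_t = E_{t-1}G_t$, use Lemma~\ref{lem:hoeff} to get $\E[G_t\mid\F_{t-1}]\le 1$, note non-negativity, and conclude with Ville's inequality. The points you add do not change the argument; they only make explicit details the paper leaves implicit: measurability and integrability of $E_t$, the event inclusion $\{\sup_{t\le\tau}E_t\ge 1/\alpha\}\subseteq\{\sup_t E_t\ge 1/\alpha\}$ in place of the paper's appeal to optional stopping, and the use of $\lambda>0$ when passing from $Z_t$ to $\eps_t^2$.
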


\begin{proof}
Each factor of $E_t(\lambda)$ is non-negative. By
Lemma~\ref{lem:hoeff},
$\E[E_t/E_{t-1}\mid\F_{t-1}] \le \exp(\lambda^2 m_t/(2K_t))
\exp(-\lambda^2 m_t/(2K_t)) = 1$, the supermartingale property.
Ville's inequality yields the maximal bound, uniform over stopping
times by optional stopping.
\end{proof}

\paragraph{Tuning $\lambda$.}
For the alternative
$\E[\eps_t^2] \to m_t/(4K_t) + \delta$, the
bounded-difference-optimal bet is
$\lambda^* = \delta K_t / m_t$; we mix over a log grid
$\lambda \in \{0.05,0.1,0.2,0.5,1,2,5,10\}$ in our experiments
(\Cref{fig:eprocess}), retaining anytime-valid coverage by
linearity \citep{ramdas2023game}.

\paragraph{Empirical demonstration.}
\Cref{fig:eprocess} runs the test on each model's per-clique
residual stream ($N{=}603$ cliques per model, fixed-seed shuffle).
Three of four models reject $H_0$ at $\alpha\le 10^{-4}$ within
$\le\!422$ cliques (Llama-3.3-70b at $t=16$, GPT-5.4-nano at
$t=177$, GPT-5.4-mini at $t=422$); Claude-Haiku-4.5 has mean
residual below the $H_0$-envelope at $K{=}8$ and is not rejected
within $603$ cliques, demonstrating that the test calibrates
power against the bounded-difference envelope. The same
construction applies to the compositional residual stream
$(\eps^\star_t)$ from \Cref{sec:compose} after replacing the
per-component envelope by the corresponding envelope for the
agent's aggregation rule.

\begin{figure}[h]
\centering
\includegraphics[width=\linewidth]{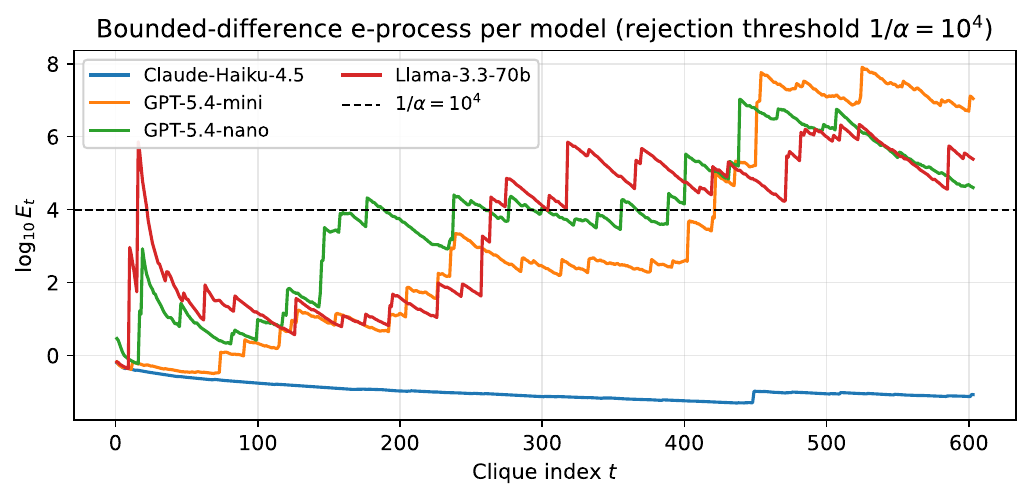}
\caption{\textbf{Bounded-difference e-process $E_t$ on
per-component residual streams (\Cref{thm:audit}).} Crossing
$1/\alpha=10^4$ at $t{=}16$ (Llama), $t{=}177$ (GPT-nano),
$t{=}422$ (GPT-5.4-mini); Claude-Haiku stays below~$1$.}
\label{fig:eprocess}
\end{figure}

\section{Clique-mining pipeline and reproducibility}
\label{app:cliques}

\paragraph{Paleka.} Tuples are pre-mined by the original authors:
each JSONL record is a single $(\mathrm{checker}, \{Q_i\},
\mathrm{relation})$ entry, parsed directly.

\paragraph{Polymarket.} We query gamma-api at the \emph{event}
level, retain events with $\ge\!2$ resolved markets, and emit one
of two relation types: (i)~a \emph{partition} relation
$\sum_i p_i = 1$ when an event has multi-candidate markets that
resolve with exactly one YES (e.g.\ election winners, sports
brackets); (ii)~an \emph{implication ladder} when an event has
multiple threshold markets ``X above $k$'' at ascending $k$.
Events are capped at $m{\le}8$ markets.

\paragraph{Compositional ensemble.}
For each clique with $m$ questions and $4$ LLMs, we draw $4$ random
seeds (independent across cliques and seeds, fixed master seed) and
for each seed assign each question $j$ to one of the $4$ LLMs
uniformly; the agent's quoted marginal on coordinate $j$ is the
$j$-th coordinate of the assigned LLM's JCD-projected forecast.
Hierarchical JCD applies $\Pijoint$ to the resulting concatenated
vector (in our experiments equal to direct JCD on the joint clique,
which by \Cref{thm:dykstra} is the converged Dykstra projection
through the local-then-coupling cycle).

\paragraph{Leakage control.} Every Polymarket event in the
evaluation slice resolved \emph{after} the latest model snapshot
(30 April 2026). Paleka cliques inherit
\citet{paleka2024consistency}'s own resolution-status verification.

\paragraph{Reproducibility manifest.}
\Cref{tab:repro} records the exact configuration used to produce
every number in the paper. The released artifact
(\url{https://github.com/akotawala10/composition-incoherence-icml})
contains the sample dumps, per-clique JCD residuals,
ensemble-assignment seeds, and the hierarchical-projection
pipeline.

\begin{table}[H]
\centering
\caption{Reproducibility manifest. $N$ values include
ensemble-assignment seeds where applicable.}
\label{tab:repro}
\small
\begin{tabular}{ll}
\toprule
\textbf{Item} & \textbf{Value} \\
\midrule
Mid-tier panel (\S\ref{sec:results}) & \texttt{claude-haiku-4-5-20251001}, \texttt{gpt-5.4-mini}, \texttt{gpt-5.4-nano}, \texttt{llama-3.3-70b-versatile} (Groq) \\
Frontier panel (\S\ref{subsec:frontier}) & \texttt{claude-opus-4-7}, \texttt{gpt-5.5} (Azure), \texttt{deepseek-v3.2}, \texttt{llama-4-maverick-17b-128e} (FP8) \\
Routing-ladder panel (App.~\ref{app:context}) & \texttt{claude-haiku-4-5-20251001}, \texttt{gpt-5.4-mini}, \texttt{gpt-5.4-nano}, \texttt{deepseek-v3.2}, \texttt{llama-4-maverick-17b-128e} ($5$ specialists; Llama-3.3-70b dropped on TPD-quota) \\
Snapshot date      & 30 April 2026 \\
$K$ samples / question & $8$ \\
Temperature        & $0.7$ \\
Sampling prompts   & released JSONL artifact \\
Paleka subset      & \texttt{scraped}, $134$ cliques per checker \\
Paleka post-filter & $134\!\times\!4$ checkers $=536$ cliques per model \\
Polymarket cliques & $268$ post-leakage-filter ($67\!\times\!4$ seeds) \\
Per-call Platt fit / eval split & $30\%$ / $70\%$ per (model, checker) \\
QP solver          & \textsc{OSQP} default tolerance ($1.4\!\times\!10^{-5}$) \\
Compositional master seed & \texttt{0} (numpy default RNG) \\
Random ensemble seeds & $4$ ($N{=}1{,}876$ bets) \\
Structured ensemble seeds & $1$ deterministic ($N{=}469$ bets) \\
\bottomrule
\end{tabular}
\end{table}
\vspace*{-0.3em}

\section{Polymarket cross-platform replication (per-call layer)}
\label{app:polymarket}
We replicate the headline per-call finding on Polymarket
($N{=}268$ held-out (model, clique) evaluations, summed across the
four-model panel). \Cref{fig:headline_polymarket} shows the same
incoherence-conditional gain shape as the Paleka result, with the
$m{\ge}4$ stratum (large multi-candidate partitions) contributing
the largest gains. \Cref{tab:main_polymarket} gives the per-model
breakdown.

\begin{figure}[H]
\centering
\includegraphics[width=0.65\linewidth]{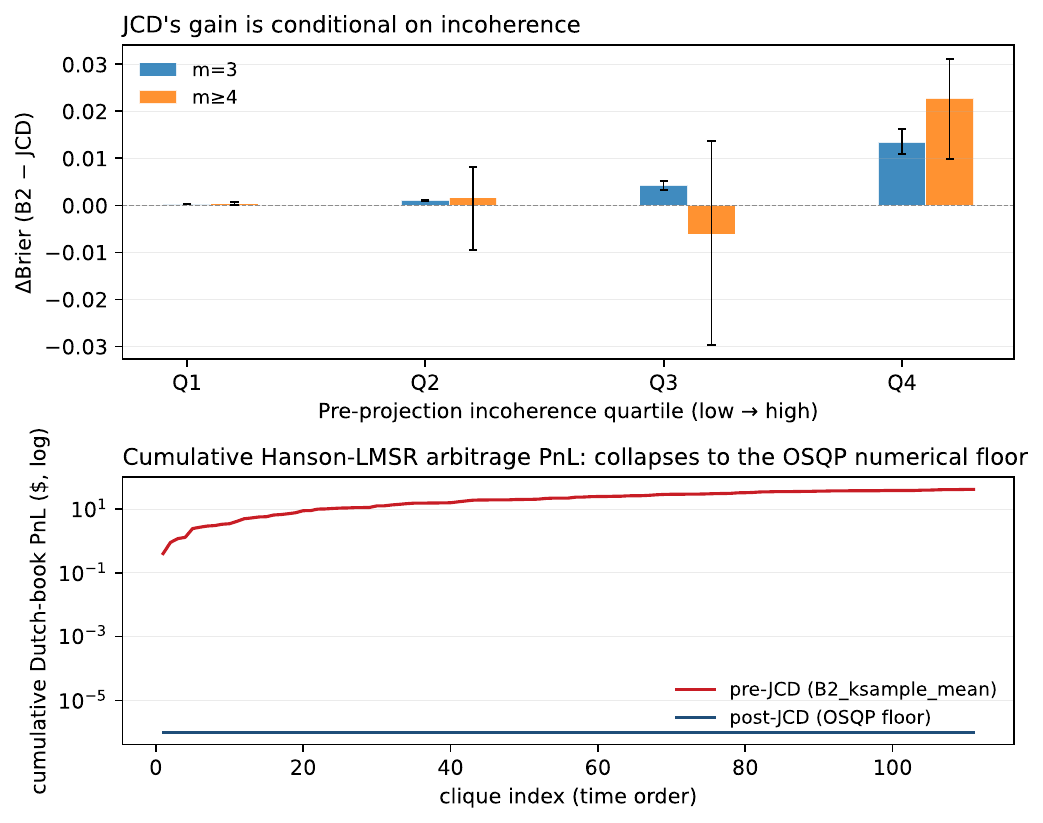}
\caption{Polymarket cross-platform replication of the per-call
result (\Cref{thm:pyth}): the incoherence-conditional gain
pattern of \Cref{thm:pyth} replicates on resolved Polymarket
events, with the largest mean gain in the highest-$\eps$
quartile at $m{\ge}4$ (note the smaller per-call absolute
magnitudes; the lower panel's cumulative Hanson--LMSR exposure
reduction to the QP floor matches Corollary~\ref{cor:noarb}).}
\label{fig:headline_polymarket}
\end{figure}

\begin{minipage}{\linewidth}
\centering
\refstepcounter{table}
\label{tab:main_polymarket}
{\small Table~\thetable. Real-LLM Brier ($\downarrow$) on Polymarket. Each row averages over that model's $\sim\!67$ held-out evaluation cliques (the $30\%$ slice not used for Platt fitting). The four rows together sum to $N{=}268$ (model, clique) evaluations across partition and threshold-ladder relations. $\Delta_{\mathrm{B2}\to\mathrm{JCD}}$ isolates the projection's effect; $\Delta_{\mathrm{B2}\to\mathrm{B6}}$ shows the Platt$+$JCD combination. Negative is better.\par}
\vspace{2pt}
\small
\begin{tabular}{l c c c c c c c}
\toprule
Model & B2 & B1 & JCD & B5 & B6 & $\Delta_{\mathrm{B2 \to JCD}}$ & $\Delta_{\mathrm{B2 \to B6}}$ \\
\midrule
anthropic\_claude-haiku-4-5-20251001 & 0.1551 & 0.1423 & 0.1409 & 0.1492 & 0.1435 & -0.0142 & -0.0115 \\
openai\_gpt-5.4-mini & 0.2368 & 0.1798 & 0.1796 & 0.1898 & 0.1803 & -0.0572 & -0.0565 \\
openai\_gpt-5.4-nano & 0.2387 & 0.1892 & 0.1892 & 0.1935 & 0.1839 & -0.0494 & -0.0547 \\
meta\_llama-3.3-70b & 0.2500 & 0.1872 & 0.1872 & 0.1970 & 0.1872 & -0.0628 & -0.0628 \\
\midrule
\textit{mean} & 0.2201 & 0.1746 & 0.1742 & 0.1824 & 0.1737 & -0.0459 & -0.0464 \\
\bottomrule
\end{tabular}
\end{minipage}

\section{Full per-cell tables}
\label{app:fulltables}
Per-model Brier under each baseline (\Cref{tab:main_real}), the
six-method ablation including B3/B4 (\Cref{tab:ablation_methods}),
the significance tests (\Cref{tab:sig_real}), and the Murphy
reliability/resolution decomposition (\Cref{tab:murphy_real}).

\begin{table*}[t]
\centering\small
\caption{Real-LLM Brier ($\downarrow$) on the \textbf{combined Paleka + Polymarket-partition} slice ($N{=}2412 = 603$ cliques $\times\,4$ models; $536$ Paleka cliques per model from four logical-relation checkers, plus $67$ Polymarket partition cliques per model). $\Delta_{\mathrm{B2}\to\mathrm{JCD}}$ isolates the projection's effect; $\Delta_{\mathrm{B2}\to\mathrm{B6}}$ shows the Platt$+$JCD combination. Negative is better.}
\label{tab:main_real}
\begin{tabular}{l c c c c c c c}
\toprule
Model & B2 & B1 & JCD & B5 & B6 & $\Delta_{\mathrm{B2 \to JCD}}$ & $\Delta_{\mathrm{B2 \to B6}}$ \\
\midrule
anthropic\_claude-haiku-4-5-20251001 & 0.1975 & 0.1931 & 0.1909 & 0.1643 & 0.1631 & -0.0066 & -0.0344 \\
openai\_gpt-5.4-mini & 0.2261 & 0.2112 & 0.2111 & 0.1886 & 0.1827 & -0.0150 & -0.0435 \\
openai\_gpt-5.4-nano & 0.2356 & 0.2212 & 0.2209 & 0.2015 & 0.1989 & -0.0148 & -0.0367 \\
meta\_llama-3.3-70b-versatile & 0.2485 & 0.2404 & 0.2406 & 0.2120 & 0.2100 & -0.0079 & -0.0385 \\
\midrule
\textit{mean} & 0.2269 & 0.2165 & 0.2159 & 0.1916 & 0.1887 & -0.0111 & -0.0383 \\
\bottomrule
\end{tabular}
\end{table*}

\begin{table}[t]\centering\footnotesize
\caption{Method-level ablation, averaged across the four-model panel and four Paleka logical-relation checkers. Each row is the across-(model, checker) mean Brier ($\downarrow$); $\Delta$ is reduction over $\mathrm{B2}$ raw $K$-sample marginals. Numbers are restricted to models that produced results under all sampling modes for fair comparison.}
\label{tab:ablation_methods}
\begin{tabular}{lrr}
\toprule
Method & Brier $\downarrow$ & $\Delta$ vs.\ B2 \\
\midrule
B2: Raw $K$-sample marginals & 0.2197 & 0.0000 (0.0\%) \\
B1: $K{=}1$ Karvetski projection & 0.2085 & -0.0112 (-5.1\%) \\
\textbf{JCD projection only} & 0.2076 & -0.0121 (-5.5\%) \\
\midrule
B3: Prompted self-consistency & 0.3144 & +0.0946 (+43.1\%) \\
B4: Direct joint elicitation & 0.1534 & -0.0663 (-30.2\%) \\
\midrule
B5: Platt scaling only & 0.1848 & -0.0350 (-15.9\%) \\
\textbf{B6: Platt $+$ JCD} & 0.1816 & -0.0382 (-17.4\%) \\
\bottomrule
\end{tabular}
\end{table}

\begin{table*}[t]\centering\small
\caption{Significance tests for $H_0$: mean Brier(B2) = mean Brier(JCD).}
\label{tab:sig_real}
\begin{tabular}{lcccc}
\toprule
Model & DM stat & DM $p$ & paired-boot stat & paired-boot $p$ \\
\midrule
anthropic\_claude-haiku-4-5-20251001 & 5.422 & 8.52e-08 & 0.0066 & 0 \\
openai\_gpt-5.4-mini & 10.682 & 1.657e-24 & 0.0150 & 0 \\
openai\_gpt-5.4-nano & 12.281 & 4.277e-31 & 0.0148 & 0 \\
meta\_llama-3.3-70b-versatile & 7.980 & 7.437e-15 & 0.0079 & 0 \\
\bottomrule
\end{tabular}
\end{table*}

\begin{table*}[t]\centering\small
\caption{Murphy decomposition of mean Brier across all real-LLM runs. JCD reduces reliability error without sacrificing resolution.}
\label{tab:murphy_real}
\begin{tabular}{lcccc}
\toprule
Baseline & REL $\downarrow$ & RES $\uparrow$ & UNC & Brier $\downarrow$ \\
\midrule
B2\_ksample\_mean & 0.0357 & 0.0173 & 0.2096 & 0.2280 \\
B1\_k1\_karvetski & 0.0233 & 0.0199 & 0.2096 & 0.2130 \\
JCD & 0.0229 & 0.0201 & 0.2096 & 0.2124 \\
B5\_platt & 0.0004 & 0.0250 & 0.2096 & 0.1850 \\
B6\_platt\_jcd & 0.0005 & 0.0279 & 0.2096 & 0.1822 \\
\bottomrule
\end{tabular}
\end{table*}

\section{Proof of Cor.~\ref{cor:magnitude} (Rayleigh-quotient magnitude bound)}
\label{app:magnitude}

We give the proof under uniform i.i.d.\ owner-selection
$\sigma:[m^\star]\!\to\![k]$, with $\bar\Pi := k^{-1}\sum_a \Pi_a$ and
$D := \mathrm{diag}(\Sigma_\Pi)$. Define $u := x - \bar\Pi$ where
$x_j := \Pi_{\sigma(j),j}$. Note $\bar\Pi \in \Mjoint$ by convexity
of $\Mjoint$, and pointwise $\E_\sigma[u_j] = 0$,
$\E_\sigma[u_j^2] = D_{jj}$, and cross-coordinate covariances vanish:
$\E_\sigma[u_j u_{j'}] = 0$ for $j\!\ne\!j'$ by independence of
$\sigma(j),\sigma(j')$.

\paragraph{Generic upper bound.}
By Prop.~\ref{prop:disagree} with reference $r=\bar\Pi$,
$\eps^\star \le \|u\|_2$, so
$\E_\sigma[(\eps^\star)^2] \le \E_\sigma[\|u\|_2^2]
= \sum_j D_{jj} = \mathrm{tr}(D)$.

\paragraph{Equality case ($\kappa_R = 1$).}
If $\Mjoint$ contains a single equality $a_R^\top r = b_R$ binding at
the projection (and no other constraint is active locally), the
projection has the closed form
$\Pijoint(x) = x - \frac{a_R^\top x - b_R}{\|a_R\|^2} a_R$, giving
$\eps^\star = |a_R^\top x - b_R|/\|a_R\|$. Since $\bar\Pi \in \Mjoint$
satisfies $a_R^\top \bar\Pi = b_R$, we have $a_R^\top x - b_R =
a_R^\top u$, so $\eps^{\star 2} = (a_R^\top u)^2/\|a_R\|^2$. Taking
expectation,
$\E_\sigma[(a_R^\top u)^2] = a_R^\top D a_R$ (cross-terms vanish), so
$\E_\sigma[(\eps^\star)^2] = a_R^\top D a_R / \|a_R\|^2$.

\paragraph{Inequality case with boundary $\bar\Pi$ ($\kappa_R = \tfrac12$).}
If the binding constraint at the projection is an inequality
$a_R^\top r \le b_R$ with $a_R^\top \bar\Pi = b_R$ (boundary), the
single-halfspace projection gives
$\eps^\star = (a_R^\top u)_+/\|a_R\|$. The variable $a_R^\top u$ is
zero-mean and symmetrically distributed about $0$ when the panel
$\{\Pi_a\}$ is symmetric in $a_R^\top \Pi_a$ around its mean $b_R$
(which holds asymptotically and approximately holds for our $k\!=\!4$
panel). Then $\E[(\cdot)_+^2] = \tfrac12 \E[(\cdot)^2]$, giving
$\E_\sigma[(\eps^\star)^2] = \tfrac12 \cdot a_R^\top D a_R/\|a_R\|^2$.

\paragraph{Inequality case with interior $\bar\Pi$.}
When $\bar\Pi$ is strictly interior to constraint $i$ (slack
$s_i := b_i - a_i^\top \bar\Pi > 0$), the violation
$v_i = (a_i^\top u - s_i)_+$ has reduced second moment under the
truncation, so $\E_\sigma[(\eps^\star)^2] < \tfrac12 a_R^\top D
a_R/\|a_R\|^2$ in this regime; the bound becomes a strict upper
bound. This matches the empirical conjunction under-shoot
($0.83\times$ predicted): $\bar\Pi_3 < \min(\bar\Pi_1, \bar\Pi_2)$ by
Jensen, leaving the upper Fr\'echet constraint slack at $\bar\Pi$.

\paragraph{Closed-form residuals.}
The negation case (equality, $a\!=\!(1,1)$, $b\!=\!1$, $\|a\|^2\!=\!2$)
recovers $\E[(\eps^\star)^2] = \mathrm{tr}(D)/2$. The partition case
(equality, $a\!=\!\mathbf 1_{m^\star}$, $b\!=\!1$, $\|a\|^2\!=\!m^\star$)
recovers $\mathrm{tr}(D)/m^\star$ in the no-clipping regime; with
clipping, the actual residual is strictly larger, leaving this as a
lower bound. The conjunction/disjunction cases use Fr\'echet
halfspace normals as $a_R$ (with $\|a_R\|^2 \in \{2,3\}$ depending on
the binding constraint per clique), with $\kappa_R = \tfrac12$.
$\hfill\square$

\paragraph{Empirical validation.}
\Cref{tab:magnitude_validation} reports observed
$\langle(\eps^\star)^2\rangle$ versus the Rayleigh-quotient prediction
$\kappa_R\, a_R^\top D a_R/\|a_R\|^2$ across $1{,}876$ ensemble
cliques. The dominant active constraint $a_R$ per clique is the
defining halfspace with smallest slack at $\bar\Pi$ (typically the
upper Fr\'echet for and, the upper Fr\'echet $r_3\!\le\!r_1\!+\!r_2$ for or).

\begin{center}
\refstepcounter{table}
\label{tab:magnitude_validation}
{\small Table~\thetable. Predicted vs.\ observed $\langle(\eps^\star)^2\rangle$ under uniform random owner-selection. ``Generic'' is $\mathrm{tr}(D)$ (the upper bound without single-active-constraint geometry); ``Rayleigh'' is $\kappa_R\,a_R^\top D a_R/\|a_R\|^2$.\par}
\small
\begin{tabular}{lrrrrrr}
\toprule
Relation & $\kappa_R$ & $\|a_R\|^2$ & $\langle$obs$\rangle$ & $\langle$Rayleigh$\rangle$ & ratio & per-clique $r$ \\
\midrule
\textsc{Neg}       & $1$           & $2$ & $0.0286$ & $0.0271$ & $1.054$ & $0.890$ \\
\textsc{Partition} & $1$           & $m^\star$ & $0.0146$ & $0.0137$ & $1.069$ & $0.850$ \\
\textsc{And}       & $\tfrac12$    & $\{2,3\}$ & $0.0153$ & $0.0184$ & $0.830$ & $0.710$ \\
\textsc{Or}        & $\tfrac12$    & $\{2,3\}$ & $0.0170$ & $0.0166$ & $1.026$ & $0.690$ \\
\bottomrule
\end{tabular}
\end{center}

The Rayleigh form is uniformly tight: ratios are $0.83$--$1.07$
across all four relations. The generic $\mathrm{tr}(D)$ bound is
loose by $1.9\times$ on \textsc{Neg}, $4.0\times$ on \textsc{Partition}
(at the average $m^\star\!=\!4$), and $5.3$--$6.1\times$ on
conjunction/disjunction; the Rayleigh form recovers the typical
magnitude from the specialist panel covariance alone.

\section{Compositional ablation: projection-step decomposition}
\label{app:projection_ablation}

We decompose hierarchical JCD into two layers (per-component JCD
then joint projection) and four operators evaluated against
resolved labels:
(A) raw composed (no per-component JCD, no joint projection),
(B) JCD composed (per-component JCD only),
(C) raw $+$ joint projection,
(D) JCD $+$ joint projection (hierarchical).
\Cref{tab:proj_ablation} reports mean $\eps^\star$ before projection
(A, B) and mean compositional Brier under each operator.

\begin{center}
\refstepcounter{table}
\label{tab:proj_ablation}
{\small Table~\thetable. Compositional projection-step ablation. $\eps^\star$ is post-aggregation, restricted to the resolved-label subset (used for the paired Brier computation in the rightmost columns); the §5.1 headline mean of $0.118$ on partition uses the full $1{,}876$-clique ensemble, while operator B here reports $0.098$ on the smaller resolved-bet slice. Hier $\equiv$ D.\par}
\small
\begin{tabular}{llrrrr}
\toprule
Relation & Op & $\langle\eps^\star\rangle$ & exposure & Brier & $\Delta$Brier vs.\ A \\
\midrule
\textsc{Neg} & A: raw            & $0.144$ & $0.203$ & $0.2367$ & \\
             & B: JCD            & $0.114$ & $0.161$ & $0.2301$ & $-0.0066$ \\
             & C: raw $+\Pi^\star$ & $0$     & $0$     & $0.2174$ & $-0.0193$ \\
             & D: hier            & $0$     & $0$     & $0.2165$ & $-0.0202$ \\
\midrule
\textsc{And} & A & $0.076$ & $0.110$ & $0.2270$ & \\
             & B & $0.054$ & $0.079$ & $0.2186$ & $-0.0084$ \\
             & C & $0$     & $0$     & $0.2144$ & $-0.0126$ \\
             & D & $0$     & $0$     & $0.2110$ & $-0.0160$ \\
\midrule
\textsc{Or}  & A & $0.110$ & $0.153$ & $0.2538$ & \\
             & B & $0.072$ & $0.102$ & $0.2557$ & $+0.0019$ \\
             & C & $0$     & $0$     & $0.2571$ & $+0.0033$ \\
             & D & $0$     & $0$     & $0.2584$ & $+0.0046$ \\
\midrule
\textsc{Partition} & A & $0.312$ & $0.620$ & $0.2165$ & \\
                   & B & $0.098$ & $0.193$ & $0.1864$ & $-0.0301$ \\
                   & C & $0$     & $0$     & $0.1832$ & $-0.0333$ \\
                   & D & $0$     & $0$     & $0.1817$ & $-0.0348$ \\
\bottomrule
\end{tabular}
\end{center}

Per-component JCD dominates the partition gain (the local repair already brings
the assembled mass close to $1$). The joint projection dominates negation, and,
and disjunction, where local repair leaves cross-coordinate violations untouched.
The disjunction inversion under (B)--(D) is the artefact discussed in
\S\ref{sec:exp}.

\section{Greedy-decoding ($T\!=\!0$) same-model control}
\label{app:t0_control}

To isolate within-model decoding stochasticity from
coordinate-isolation disagreement, we re-ran the $4$-fresh-seed
Claude-Haiku-4.5 same-model protocol of \S\ref{sec:exp} at
temperature $T\!=\!0$ on $60$ cliques per Paleka relation and $30$
Polymarket partitions. Numbers in \Cref{tab:t0} are mean $\eps^\star$
under $16$ uniform random owner-selection draws per clique; the
$T\!=\!0.7$ row is the sampling-temperature baseline of \S\ref{sec:exp}.

\begin{center}
\refstepcounter{table}
\label{tab:t0}
{\small Table~\thetable. Same-model $\eps^\star$ at $T\!=\!0.7$ vs.\ $T\!=\!0$ (greedy decoding). Cross-model column is the random-assignment ensemble restricted to this control's clique slice ($60$ Paleka cliques per relation, $30$ Polymarket partitions), not the full-ensemble headline of \S\ref{subsec:compres_results}.\par}
\small
\begin{tabular}{lrrrrr}
\toprule
Relation & cross-model & same-model $T\!=\!0.7$ & same-model $T\!=\!0$ & $T_0/T_{0.7}$ & $T_0/$cross \\
\midrule
\textsc{Neg}       & $0.114$ & $0.025$ & $0.096$ & $3.83$ & $0.84$ \\
\textsc{And}       & $0.054$ & $0.019$ & $0.023$ & $1.20$ & $0.42$ \\
\textsc{Or}        & $0.072$ & $0.040$ & $0.074$ & $1.87$ & $1.04$ \\
\textsc{Partition} & $0.098$ & $0.058$ & $0.025$ & $0.44$ & $0.26$ \\
\bottomrule
\end{tabular}
\end{center}

The greedy-decoding control rules out the alternative ``$\eps^\star$
is just sampling noise'' hypothesis. On negation and disjunction,
$T\!=\!0$ \emph{increases} the residual ($T_0/T_{0.7}\!>\!1$); on
partition it decreases it by $56\%$, indicating that some partition
residual is decoding-stochastic. On all four relations the residual
remains positive at $T\!=\!0$ and is comparable in magnitude to
(or, for negation/disjunction, larger than) the $T\!=\!0.7$
same-model figure. The result implicates genuine
coordinate-isolation disagreement: the same model, prompted with
isolated single-question contexts at greedy decoding, produces
different ``deterministic'' outputs across coordinates that fail to
satisfy the cross-component coupling. (Anthropic does not accept a
\texttt{seed} parameter, so $4$ ``specialists'' at $T\!=\!0$ vary only
through residual non-determinism in batch processing; $\eps^\star$
remains detectable nonetheless.)

\section{$K$-sweep on real LLM forecasts}
\label{app:ablations}

\paragraph{Per-component layer.}
\Cref{fig:ksweep} sweeps $K \in \{4, 8, 16, 24\}$ on Anthropic
Claude-Haiku-4.5 NegChecker forecasts. The non-zero
$\Delta\mathrm{Brier}$ floor ($0.008$--$0.010$) is the empirical
signature of $\pF \notin \MC$ rather than finite-sample fluctuation.

\begin{figure}[!htbp]
\centering
\includegraphics[width=0.38\linewidth]{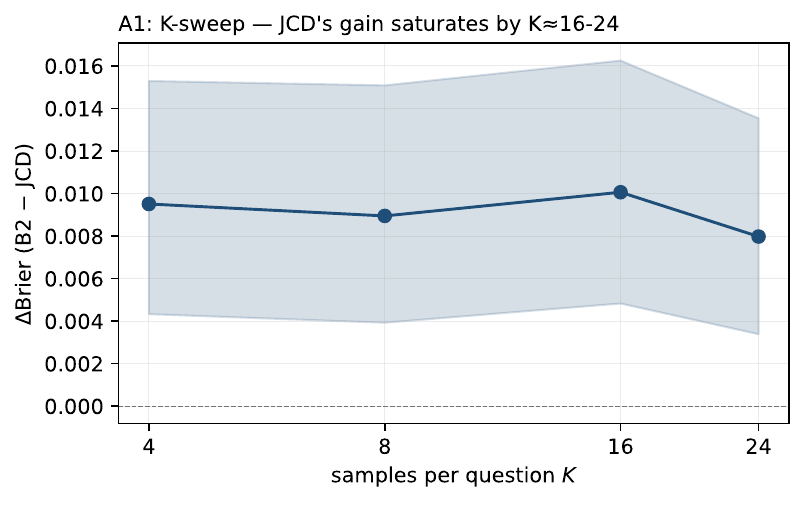}
\caption{$K$-sweep on Anthropic Claude-Haiku-4.5 NegChecker
forecasts. Non-vanishing $\Delta\mathrm{Brier}$ asymptote: the
empirical signature of structural incoherence in $\pF$.}
\label{fig:ksweep}
\end{figure}

\paragraph{Compositional layer.}
We also sweep $K \in \{8, 16, 32\}$ at the compositional layer (the
residual $\eps^\star$ in \S\ref{sec:exp}). $K\!=\!8$ uses the
existing $4$-LLM panel and $K\!=\!16, 32$ extends Anthropic
Claude-Haiku-4.5, GPT-5.4-mini, and GPT-5.4-nano to additional
samples on a $40$-clique-per-relation, $20$-partition subset (Groq
Llama-$3.3$-70b excluded after exhausting its tokens-per-day
quota mid-sweep). The 3-model panel ensemble is computed identically
to the 4-LLM panel under uniform random owner-selection, $16$
ensemble draws per clique.

\begin{center}
\refstepcounter{table}
\label{tab:k_sweep_comp}
{\small Table~\thetable. Compositional $\eps^\star$ vs.\ $K$, 3-model panel.\par}
\small
\begin{tabular}{lrrrr}
\toprule
Relation & $K\!=\!8$ & $K\!=\!16$ & $K\!=\!32$ & $N$ \\
\midrule
\textsc{Neg}       & $0.143$ & $0.145$ & $0.143$ & $432$ \\
\textsc{And}       & $0.067$ & $0.067$ & $0.067$ & $432$ \\
\textsc{Or}        & $0.078$ & $0.080$ & $0.079$ & $432$ \\
\textsc{Partition} & $0.076$ & $0.082$ & $0.082$ & $224$ \\
\bottomrule
\end{tabular}
\end{center}

The compositional residual is flat across $K\!\in\!\{8, 16, 32\}$ (within
$\pm 0.003$ per relation), confirming that $\eps^\star$ is a
\emph{structural} property of cross-component coupling under
specialist routing rather than finite-sample fluctuation. Combined
with the per-component flatness above, the K-sweep robustness covers
both layers.

\section{Structured-routing replication}
\label{app:structured}

\Cref{fig:structured} repeats the compositional experiment under
the structured-routing topology of \Cref{sec:exp}.

The residual is positive on $98.5\%$ of negations (vs.\ $66\%$ under
random), $36\%$ of partitions (vs.\ $94\%$), $12\%$ of
conjunctions (vs.\ $33\%$), and $16\%$ of disjunctions
(vs.\ $43\%$). Per-bet exposure is $\$0.11/$bet ($\$51.88$ over
$469$ bets), comparable to the random regime's $\$0.13$/bet.
Negation has a higher residual rate because two specialists disagree more strongly
than four random ones; conjunction/disjunction improve because
structured routing reduces cross-model mixing, while partitions
improve because one specialist owns all candidates.

\begin{figure}[!htbp]
\centering
\includegraphics[width=0.58\linewidth]{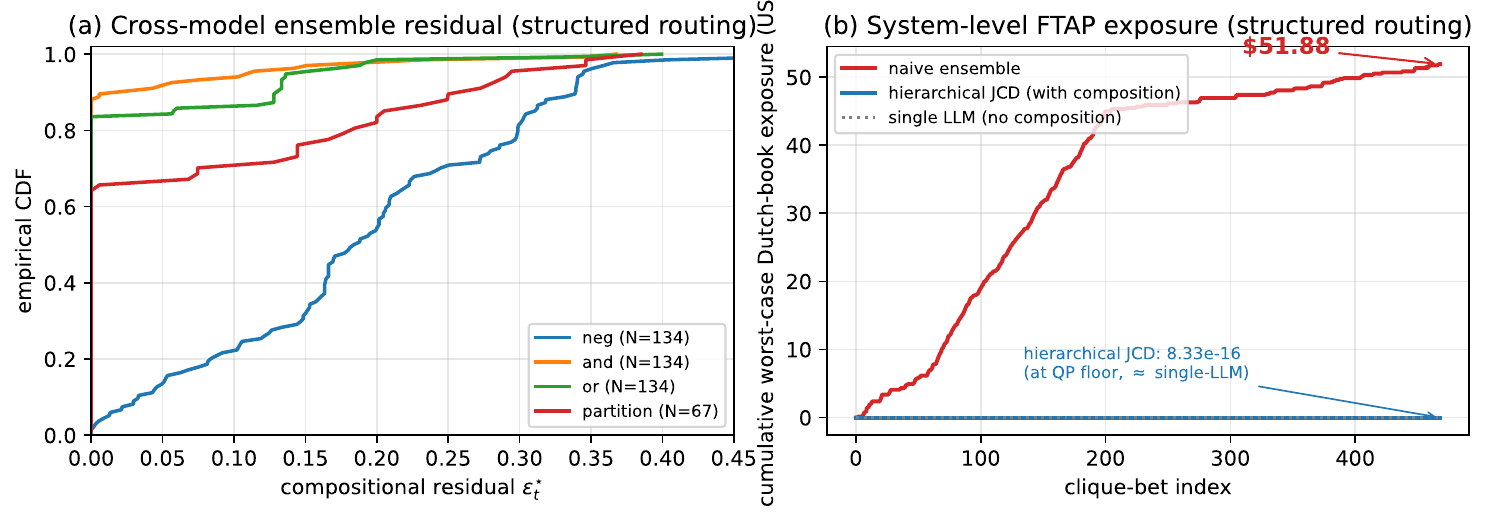}
\caption{\textbf{Structured-routing replication of
\Cref{fig:compositional}.} The same certificate is positive with a
different relation breakdown but comparable per-bet exposure
($\$0.11$/bet vs.\ $\$0.13$/bet under random assignment). The lower
cumulative endpoint reflects fewer bets under structured routing,
$N{=}469$ vs.\ $N{=}1{,}876$, not lower per-bet exposure.}
\label{fig:structured}
\end{figure}

\section{Worst-case failure modes}
\label{app:failure_modes}

\Cref{tab:failure_modes} lists the top-$5$ ensemble cliques by
compositional residual $\eps^\star$ under random assignment. The
largest cases are realized resolved-Paleka quotes where specialists
make opposite sides of a negation jointly too likely, or make a
disjunction inconsistent with its antecedents.

\begin{minipage}{\linewidth}
\centering
\refstepcounter{table}
\label{tab:failure_modes}
{\small Table~\thetable. Top-5 ensemble cliques by compositional
residual $\eps^\star$ under random assignment ($N{=}1{,}876$).\par}
\tiny
\begin{tabular}{rlrlll}
\toprule
\# & relation & $m$ & component quotes (assigned LLM) & violated constraint & $\varepsilon^\star$ \\
\midrule
1 & neg & 2 & 0.84 (Claude), 0.89 (Llama) & $p_1+p_2=1.731\ne 1$ & 0.517 \\
2 & or & 3 & 0.00 (Claude), 0.02 (GPT), 0.92 (Claude) & $p_3=0.916<\max(p_1,p_2)=0.020$ & 0.517 \\
3 & or & 3 & 0.02 (Claude), 0.03 (GPT), 0.92 (Claude) & $p_3=0.924<\max(p_1,p_2)=0.031$ & 0.502 \\
4 & or & 3 & 0.02 (Claude), 0.74 (GPT), 0.03 (Claude) & $p_3=0.033<\max(p_1,p_2)=0.740$ & 0.500 \\
5 & neg & 2 & 0.71 (GPT), 0.99 (Claude) & $p_1+p_2=1.696\ne 1$ & 0.492 \\
\bottomrule
\end{tabular}
\end{minipage}

\section{Planner-style routing simulation: per-question detail}
\label{app:realagent}

We report all $100$ live-style partition forecasting questions
used in the planner-style routing simulation (\Cref{sec:exp}).
Each row gives the partition label, the number of outcomes $m$,
the per-outcome marginal returned by the randomly-assigned
specialist LLM (LLM key: \texttt{C}=Claude-Haiku-4.5,
\texttt{M}=GPT-5.4-mini, \texttt{N}=GPT-5.4-nano,
\texttt{L}=Llama-3.3-70b), the resulting partition mass
$\sum_i p_i$, and the compositional residual $\eps^\star$. All
four specialist clients ran with $K{=}8$ verbalized-probability
samples at temperature $0.7$; each received its assigned outcome
as an isolated single-question prompt with no context about the
partition or the other outcomes. Master seed $0$, sampling
without replacement when $m\!\le\!4$. Total cost:
$\approx\!2{,}400$ LLM calls, $\approx\!\$5$,
$\approx\!35$ minutes wall-clock. Rows are sorted by descending
$\eps^\star$ so the worst cases appear first.

\begin{longtable}{p{0.36\linewidth} c l r r}
\caption{Planner-style routing simulation, all $N{=}100$ partitions, sorted by $\eps^\star$. Each specialist sees a single Bernoulli question; the agent assembles into a partition quote. LLM key: \texttt{C}=Claude-Haiku-4.5, \texttt{M}=GPT-5.4-mini, \texttt{N}=GPT-5.4-nano, \texttt{L}=Llama-3.3-70b. All 100 partitions exhibit positive compositional residual.}\label{tab:realagent}\\
\toprule
Partition & $m$ & per-outcome quote (LLM) & $\sum p_i$ & $\eps^\star$ \\
\midrule
\endfirsthead
\multicolumn{5}{l}{\small \emph{(continued from previous page)}}\\
\toprule
Partition & $m$ & per-outcome quote (LLM) & $\sum p_i$ & $\eps^\star$ \\
\midrule
\endhead
\midrule
\multicolumn{5}{r}{\small \emph{(continued on next page)}}\\
\endfoot
\bottomrule
\endlastfoot
Largest US AI startup IPO 2026 & 4 & \scriptsize 0.39(C), 0.73(N), 0.67(L), 0.71(M) & 2.497 & 0.749 \\
Top MMLU score, year-end 2026 & 4 & \scriptsize 0.25(C), 0.62(M), 0.57(N), 0.85(L) & 2.290 & 0.650 \\
2026 Nobel Chemistry laureate field & 4 & \scriptsize 0.59(C), 0.38(N), 0.57(L), 0.61(M) & 2.164 & 0.582 \\
Top SWE-Bench Verified score, year-end 2026 & 4 & \scriptsize 0.35(N), 0.61(L), 0.25(C), 0.86(M) & 2.079 & 0.540 \\
Next UK general election year & 4 & \scriptsize 0.32(N), 0.23(L), 0.72(C), 0.74(M) & 2.008 & 0.504 \\
2026 ACM Turing Award area & 3 & \scriptsize 0.58(M), 0.54(C), 0.74(L) & 1.870 & 0.502 \\
Top HumanEval+ score, year-end 2026 & 4 & \scriptsize 0.25(C), 0.48(N), 0.43(L), 0.78(M) & 1.936 & 0.468 \\
Top GPQA score, year-end 2026 & 4 & \scriptsize 0.34(N), 0.36(C), 0.42(L), 0.80(M) & 1.925 & 0.462 \\
2026 EU AI Act state of enforcement & 3 & \scriptsize 0.30(N), 0.74(M), 0.73(L) & 1.775 & 0.447 \\
2026 German Bundestag composition (post-election) & 3 & \scriptsize 0.72(C), 0.42(L), 0.59(N) & 1.735 & 0.424 \\
2026 Nobel Economics laureate field & 4 & \scriptsize 0.50(L), 0.44(N), 0.35(M), 0.53(C) & 1.813 & 0.406 \\
2026 Emmy Outstanding Drama Series platform & 3 & \scriptsize 0.74(L), 0.67(M), 0.29(N) & 1.701 & 0.405 \\
Top global cloud provider revenue share 2026 & 4 & \scriptsize 0.72(N), 0.39(L), 0.08(C), 0.58(M) & 1.768 & 0.403 \\
Next iPhone announcement quarter & 3 & \scriptsize 0.70(M), 0.80(L), 0.17(C) & 1.676 & 0.395 \\
2026 Ohio gubernatorial party & 3 & \scriptsize 0.71(C), 0.40(N), 0.55(L) & 1.660 & 0.381 \\
2026 US House majority & 3 & \scriptsize 0.44(N), 0.58(L), 0.64(M) & 1.660 & 0.381 \\
Top open-source LLM family year-end 2026 & 4 & \scriptsize 0.71(C), 0.23(M), 0.39(N), 0.43(L) & 1.759 & 0.379 \\
2026 International Booker Prize winner language & 3 & \scriptsize 0.68(C), 0.43(M), 0.54(N) & 1.651 & 0.376 \\
2027 UK Prime Minister tenure & 3 & \scriptsize 0.52(C), 0.41(M), 0.71(L) & 1.649 & 0.375 \\
2026 Wolf Prize in Mathematics & 3 & \scriptsize 0.77(M), 0.35(C), 0.50(L) & 1.618 & 0.357 \\
UN climate ambition state, COP31 (2026) & 3 & \scriptsize 0.59(N), 0.71(C), 0.29(M) & 1.591 & 0.341 \\
Apple WWDC 2026 keynote & 3 & \scriptsize 0.54(N), 0.79(C), 0.25(M) & 1.579 & 0.334 \\
Largest US tech IPO of 2026 by company type & 4 & \scriptsize 0.43(L), 0.36(C), 0.44(M), 0.43(N) & 1.665 & 0.333 \\
ITER first plasma timing & 3 & \scriptsize 0.34(N), 0.50(L), 0.73(C) & 1.573 & 0.331 \\
2026 Nobel Physics laureate field & 4 & \scriptsize 0.43(C), 0.44(L), 0.30(M), 0.47(N) & 1.636 & 0.318 \\
2027 Hugo Award Best Novel author origin & 3 & \scriptsize 0.70(L), 0.17(N), 0.65(M) & 1.529 & 0.305 \\
2027 ICC T20 Cricket World Cup winner region & 3 & \scriptsize 0.55(C), 0.42(N), 0.55(L) & 1.526 & 0.304 \\
2026 Cannes Palme d'Or winner nationality & 3 & \scriptsize 0.57(C), 0.27(M), 0.68(L) & 1.518 & 0.299 \\
2026 H1 BTC return bucket & 3 & \scriptsize 0.42(L), 0.38(N), 0.71(C) & 1.515 & 0.297 \\
2026 Nobel Medicine laureate field & 4 & \scriptsize 0.20(N), 0.36(M), 0.28(C), 0.75(L) & 1.589 & 0.294 \\
2026 Grammy Album of the Year genre & 4 & \scriptsize 0.52(M), 0.28(C), 0.36(L), 0.42(N) & 1.577 & 0.289 \\
2026 FIFA World Cup winner confederation & 3 & \scriptsize 0.56(C), 0.42(N), 0.52(L) & 1.498 & 0.287 \\
Top semiconductor maker by revenue, 2026 & 3 & \scriptsize 0.57(M), 0.57(C), 0.35(N) & 1.494 & 0.285 \\
US unemployment rate, June 2026 & 4 & \scriptsize 0.71(C), 0.17(N), 0.28(L), 0.38(M) & 1.540 & 0.270 \\
ECB main refinancing rate, December 2026 & 3 & \scriptsize 0.49(N), 0.55(M), 0.42(L) & 1.460 & 0.266 \\
2027 Six Nations rugby winner & 3 & \scriptsize 0.39(M), 0.64(C), 0.42(L) & 1.450 & 0.260 \\
2027 Mexico midterm legislative balance & 3 & \scriptsize 0.72(C), 0.28(N), 0.44(L) & 1.442 & 0.255 \\
2026 US CPI year-over-year, December print & 4 & \scriptsize 0.30(N), 0.46(M), 0.42(L), 0.33(C) & 1.507 & 0.254 \\
2026 US real GDP year-over-year growth & 4 & \scriptsize 0.34(N), 0.56(L), 0.26(M), 0.34(C) & 1.499 & 0.249 \\
2027 French Presidential winner ideology & 3 & \scriptsize 0.33(C), 0.52(M), 0.58(L) & 1.430 & 0.248 \\
Next FOMC decision (June 2026 meeting) & 3 & \scriptsize 0.33(N), 0.68(C), 0.42(L) & 1.424 & 0.245 \\
2026 Nobel Literature laureate region & 3 & \scriptsize 0.62(C), 0.35(N), 0.43(L) & 1.399 & 0.230 \\
2026 IPhO top country & 3 & \scriptsize 0.24(N), 0.42(L), 0.73(C) & 1.394 & 0.227 \\
Top semiconductor revenue 2026: foundry vs IDM & 3 & \scriptsize 0.25(C), 0.50(L), 0.63(N) & 1.380 & 0.219 \\
2026 Tour de France winner nationality & 3 & \scriptsize 0.26(L), 0.40(N), 0.72(C) & 1.377 & 0.218 \\
December 2026 FOMC decision & 3 & \scriptsize 0.49(M), 0.46(C), 0.42(L) & 1.371 & 0.214 \\
2027 Australian Open men's singles winner & 4 & \scriptsize 0.10(C), 0.25(M), 0.35(L), 0.72(N) & 1.419 & 0.209 \\
2027 Breakthrough Prize Life Sciences focus & 4 & \scriptsize 0.27(N), 0.35(C), 0.50(L), 0.30(M) & 1.417 & 0.209 \\
2026 Academy Award Best Picture winner type & 3 & \scriptsize 0.72(C), 0.43(L), 0.20(N) & 1.353 & 0.204 \\
2027 BAFTA Best Film origin & 3 & \scriptsize 0.36(C), 0.71(L), 0.29(M) & 1.351 & 0.203 \\
2026 New York gubernatorial party & 3 & \scriptsize 0.83(M), 0.26(N), 0.24(C) & 1.336 & 0.194 \\
2026 NCAA Division I football national champion conference & 3 & \scriptsize 0.44(N), 0.28(C), 0.60(L) & 1.321 & 0.185 \\
2026 Brazilian general election composition & 3 & \scriptsize 0.57(N), 0.32(C), 0.42(L) & 1.309 & 0.178 \\
June 2027 FOMC decision & 3 & \scriptsize 0.42(L), 0.66(C), 0.21(M) & 1.295 & 0.170 \\
WHO PHEIC declarations 2026 & 3 & \scriptsize 0.16(N), 0.44(C), 0.11(M) & 0.706 & 0.170 \\
2026 F1 Constructors' Championship winner & 4 & \scriptsize 0.27(N), 0.38(L), 0.18(M), 0.50(C) & 1.336 & 0.168 \\
UK CPI YoY November 2026 & 3 & \scriptsize 0.54(L), 0.31(N), 0.43(M) & 1.276 & 0.159 \\
2026 Mercury Music Prize winner gender & 3 & \scriptsize 0.47(C), 0.40(L), 0.40(M) & 1.268 & 0.154 \\
2026 H1 SP500 return bucket & 4 & \scriptsize 0.35(N), 0.50(L), 0.21(C), 0.25(M) & 1.302 & 0.151 \\
2026 Nobel Peace Prize laureate type & 3 & \scriptsize 0.75(N), 0.44(M), 0.06(C) & 1.252 & 0.148 \\
2026 Tony Best Musical genre & 3 & \scriptsize 0.35(C), 0.27(M), 0.63(N) & 1.254 & 0.147 \\
World population, mid-2027 & 3 & \scriptsize 0.22(M), 0.42(C), 0.60(N) & 1.245 & 0.141 \\
2026 F1 Drivers' Championship winner & 4 & \scriptsize 0.29(M), 0.26(N), 0.35(L), 0.36(C) & 1.259 & 0.129 \\
EUR/USD year-end 2026 & 3 & \scriptsize 0.21(M), 0.69(C), 0.29(N) & 1.204 & 0.118 \\
Gold price year-end 2026 & 3 & \scriptsize 0.35(C), 0.12(N), 0.74(M) & 1.204 & 0.118 \\
WTI crude oil year-end 2026 price bucket & 3 & \scriptsize 0.45(M), 0.43(L), 0.31(N) & 1.199 & 0.115 \\
2026 Pennsylvania US Senate election & 3 & \scriptsize 0.45(C), 0.48(L), 0.24(N) & 1.175 & 0.101 \\
Tesla 2026 Q4 deliveries bucket & 3 & \scriptsize 0.35(C), 0.50(L), 0.32(M) & 1.170 & 0.098 \\
2026 Atlantic hurricane season activity & 3 & \scriptsize 0.39(N), 0.47(M), 0.30(C) & 1.165 & 0.095 \\
2027 Berlin Golden Bear winner type & 3 & \scriptsize 0.69(N), 0.09(M), 0.07(L) & 0.847 & 0.088 \\
2027 H1 SP500 return bucket & 3 & \scriptsize 0.09(N), 0.34(M), 0.43(L) & 0.848 & 0.088 \\
2026 Wimbledon men's singles winner & 4 & \scriptsize 0.12(C), 0.08(N), 0.12(M), 0.85(L) & 1.172 & 0.086 \\
2026 Florida gubernatorial party & 3 & \scriptsize 0.58(L), 0.25(C), 0.31(N) & 1.140 & 0.081 \\
US 10-year Treasury yield year-end 2026 & 3 & \scriptsize 0.30(M), 0.42(C), 0.42(L) & 1.139 & 0.080 \\
2026 Olivier Best New Play origin & 3 & \scriptsize 0.70(C), 0.38(M), 0.05(L) & 1.136 & 0.079 \\
2026 IMO top country & 3 & \scriptsize 0.36(N), 0.27(M), 0.50(L) & 1.134 & 0.077 \\
2026 mpox global status & 3 & \scriptsize 0.49(M), 0.13(N), 0.25(C) & 0.867 & 0.076 \\
2026 H2 SP500 return bucket & 4 & \scriptsize 0.12(N), 0.29(C), 0.28(L), 0.17(M) & 0.859 & 0.071 \\
2026 global temperature ranking & 3 & \scriptsize 0.27(N), 0.11(M), 0.50(L) & 0.883 & 0.068 \\
2026 PGA Tour FedEx Cup winner nationality & 2 & \scriptsize 0.68(M), 0.23(N) & 0.905 & 0.067 \\
Cancer screening trial readout 2026 & 3 & \scriptsize 0.23(M), 0.66(C), 0.23(L) & 1.114 & 0.066 \\
2026 Texas gubernatorial party & 3 & \scriptsize 0.55(N), 0.16(C), 0.18(M) & 0.895 & 0.061 \\
2026 US Senate majority & 3 & \scriptsize 0.58(L), 0.27(M), 0.26(N) & 1.104 & 0.060 \\
Largest 2026 US data-center investment site & 3 & \scriptsize 0.37(M), 0.23(N), 0.50(L) & 1.098 & 0.056 \\
2026 Arctic sea ice minimum extent & 3 & \scriptsize 0.62(C), 0.13(N), 0.34(M) & 1.095 & 0.055 \\
2026 Japan LDP leadership & 3 & \scriptsize 0.67(L), 0.33(N), 0.09(C) & 1.090 & 0.052 \\
2026 ACM Gödel Prize area & 3 & \scriptsize 0.43(C), 0.15(M), 0.50(L) & 1.085 & 0.049 \\
2026 NBA Finals winner conference & 2 & \scriptsize 0.53(L), 0.53(M) & 1.062 & 0.044 \\
March 2027 FOMC decision & 3 & \scriptsize 0.35(C), 0.29(N), 0.42(L) & 1.059 & 0.034 \\
2026 NHL Stanley Cup winner conference & 2 & \scriptsize 0.50(C), 0.54(M) & 1.035 & 0.025 \\
2026 World Series winner league & 2 & \scriptsize 0.53(M), 0.51(N) & 1.034 & 0.024 \\
2026 California gubernatorial party & 3 & \scriptsize 0.50(N), 0.16(C), 0.38(L) & 1.038 & 0.022 \\
2026 net global solar capacity addition & 3 & \scriptsize 0.20(C), 0.44(M), 0.33(N) & 0.965 & 0.020 \\
Top gold-medal nation, 2028 Summer Olympics & 3 & \scriptsize 0.58(L), 0.19(M), 0.26(N) & 1.031 & 0.018 \\
2028 US Presidential winner & 3 & \scriptsize 0.46(L), 0.53(M), 0.02(C) & 1.016 & 0.009 \\
Year-end 2026 leading mobile OS & 3 & \scriptsize 0.85(L), 0.15(M), 0.02(C) & 1.016 & 0.009 \\
GLP-1 agonist label expansion 2026 & 3 & \scriptsize 0.30(N), 0.57(M), 0.11(C) & 0.984 & 0.009 \\
FDA Alzheimer's drug action 2026 & 3 & \scriptsize 0.33(M), 0.18(N), 0.50(L) & 1.012 & 0.007 \\
2026 Booker Prize winner nationality & 3 & \scriptsize 0.37(M), 0.18(N), 0.44(L) & 0.996 & 0.002 \\
2026 Pulitzer Prize Fiction winner setting & 3 & \scriptsize 0.72(C), 0.25(M), 0.03(N) & 0.999 & 0.001 \\
\end{longtable}

\paragraph{Worst-case anatomy.}
The largest residual ($\eps^\star=0.749$) is the ``Largest US AI
startup IPO 2026'' partition (four sectoral outcomes:
infrastructure, model, applications, other). Three of the four
assigned specialists privately quote $P(\text{their sector}) > 0.5$
(the fourth is at $0.39$); collectively the specialists
over-allocate, with assigned masses summing to $2.50$. Each
marginal is locally well-calibrated for the single Bernoulli the
specialist was asked, but no specialist saw that the four sectors
tile the field; as a result, the assembled system quotes $250\%$
probability mass. Simplex projection reduces the quote to a
proper distribution in closed form, eliminating the exposure to
QP-tolerance. ML-benchmark partitions (Top MMLU,
$\eps^\star{=}0.65$; SWE-Bench Verified, $0.54$; HumanEval+,
$0.47$; GPQA, $0.46$) form a coherent high-residual cluster: in
each, four specialists are assigned to one candidate model
organization, and the specialists collectively over-allocate;
masses sum to $\sim\!2.0$ even when only one or two specialists
are above $0.5$ individually.

\paragraph{Why some partitions yield near-zero residual.}
The five smallest residuals (Pulitzer Prize Fiction setting
$0.001$, Booker Prize nationality $0.002$, FDA Alzheimer's drug
action $0.007$, GLP-1 label expansion $0.009$, leading mobile OS
$0.009$) share a common structure: the partition either has only
two real candidate outcomes, or has a strongly bimodal
training-set prior. The single-question marginals then
approximately satisfy the partition constraint without any
explicit cross-component coordination. The certificate is
therefore discriminating rather than uniformly positive: it is
positive on novel multi-way partitions and near zero on
well-rehearsed ones.

\section{Routing-protocol ladder: per-partition detail}
\label{app:context}

We compare seven routing protocols on the same $100$ live-style
multi-candidate partition benchmark (App.~\ref{app:realagent}). Three
protocols use fixed owner-selected routing under different prompt
templates (\textsc{isolated}, \textsc{listed}, \textsc{full}); four
use a tool-using planner (Claude-Haiku-4.5 or GPT-5.5 as planner,
each $\times$ unguided/coherence-guided framings). Specialist panel
across all protocols: Claude-Haiku-4.5, GPT-5.4-mini,
GPT-5.4-nano, DeepSeek-V3.2, Llama-4-Maverick-17B-128E
($5$ specialists; Llama-3.3-70b dropped after TPD-quota issues).
$K\!=\!8$ verbalized samples per specialist at temperature $0.7$.
Total cost: $\approx\!\$80$ across all protocols.

\paragraph{Prompt templates.}
\textsc{isolated} reuses the per-question
\texttt{DEFAULT\_PROMPT}
of \citet{paleka2024consistency}-style routing.
\textsc{listed} prepends a numbered list of all $m$ partition outcomes
followed by ``Specific outcome to estimate: $\{\textit{outcome}\}$'';
no normalization instruction.
\textsc{full} adds the explicit clause: ``these outcomes are mutually
exclusive and collectively exhaustive, so the probabilities you would
assign to all $m$ outcomes must sum to $1$. Bear this in mind.''

\begin{center}
\refstepcounter{table}
\label{tab:context_full}
{\small Table~\thetable. Routing-protocol ladder ($N\!=\!100$
partitions). Cost is mean delegate calls per partition divided by
mean partition size $\bar m \!\approx\! 3.3$. CI bracketed.\par}
\small
\begin{tabular}{lcccc}
\toprule
protocol & $\langle\eps^\star\rangle$ & median & frac.\ $\eps^\star\!>\!10^{-3}$ & cost \\
\midrule
\multicolumn{5}{l}{\emph{Owner-selected routing, prompt-engineering only:}}\\
\textsc{isolated}             & $0.235$ {\footnotesize $[0.20,0.27]$} & $0.186$ & $1.00$ & $1\!\times\!m$ \\
\textsc{listed}               & $0.090$ {\footnotesize $[0.08,0.11]$} & $0.077$ & $0.98$ & $1\!\times\!m$ \\
\textsc{full+sum-to-1}        & $0.081$ {\footnotesize $[0.07,0.09]$} & $0.070$ & $0.99$ & $1\!\times\!m$ \\
\midrule
\multicolumn{5}{l}{\emph{Tool-using planner with delegate $+$ submit-final tools:}}\\
Claude-Haiku, unguided        & $0.0090$ {\footnotesize $[0.001,0.024]$} & $0.000$ & $0.05$ & $4.69\!\times\!m$ \\
Claude-Haiku, guided          & $0.0044$ {\footnotesize $[0.000,0.011]$} & $0.000$ & $0.03$ & $4.87\!\times\!m$ \\
GPT-5.5,    unguided        & $0.0000$ {\footnotesize $[0,\,0]$}        & $0.000$ & $0.00$ & $6.98\!\times\!m$ \\
GPT-5.5,    guided          & $0.0000$ {\footnotesize $[0,\,0]$}        & $0.000$ & $0.00$ & $6.51\!\times\!m$ \\
\midrule
\multicolumn{5}{l}{\emph{Geometric repair (any prompt, owner-selected):}}\\
hierarchical JCD              & $\le 1.5\!\times\!10^{-16}$ (QP floor) & $0$ & $0$ & $1\!\times\!m + \Pi^\star$ \\
\bottomrule
\end{tabular}\\[2pt]
{\small Pairwise Wilcoxon: prompt \textsc{iso}\,$\to$\,\textsc{listed}
$p\!=\!2.2\!\times\!10^{-12}$, but \textsc{listed}\,$\to$\,\textsc{full}
$p\!=\!0.18$ (showing alternatives matters; sum-to-$1$ instruction
does not). Planner Claude-Haiku $>$ GPT-5.5:
$p\!<\!10^{-3}$ (unguided), $p\!<\!10^{-4}$ (guided): reasoning depth
strictly closes the residual that tool-use alone leaves.\par}
\end{center}

\paragraph{Where context helps most.}
The largest drops (\textsc{iso}\,$\to$\,\textsc{full}) are on
partitions where isolated specialists strongly anchor to their
assigned outcome with no cross-context knowledge of alternatives:
``Largest US AI startup IPO 2026'' ($0.74\!\to\!0.12$), ``Top MMLU
score, year-end 2026'' ($0.60\!\to\!0.05$), ``2026 ACM Turing Award area''
($0.63\!\to\!0.13$). These are partitions over emerging or
specialist-knowledge candidates where the specialist needs to know
the alternatives to allocate probability mass appropriately.

\paragraph{Where context hurts.}
Eighteen partitions show $\eps^\star_{\text{full}}\!>\!\eps^\star_{\text{iso}}$,
predominantly political/economic ones where isolated specialists were
already roughly normalised (e.g., 2028 US Presidential winner
$0.003\!\to\!0.022$, 2026 California gubernatorial party
$0.10\!\to\!0.26$). When isolated estimates already average to $\sim 1$,
forcing specialists to reason about all alternatives can introduce
mass that wasn't there.

\paragraph{Implication for deployment.}
Three regimes emerge: (i)~\emph{prompt-engineering at fixed
owner-selection} attenuates the failure mode but bottoms out at
$\eps^\star\!\approx\!0.08$ (essentially the
\textsc{isolated}\,$\to$\,\textsc{listed} step); (ii)~\emph{tool-using
planners} reach near-zero residual but pay $5$--$7\!\times$ the naive
elicitation budget, with frontier reasoning depth required to close
the last $\sim\!0.005$ that tool-use alone leaves; (iii)~\emph{hierarchical
projection} reaches the same near-zero residual at the naive
$1\!\times\!m$ elicitation cost, modulo a small QP solve. The
geometric repair is therefore the cost-efficient route to coherence
when the coupling set $\Cset$ is specifiable; smart-planner
integration is the alternative when $\Cset$ is implicit and the
planner can infer it through extra delegate calls.

\section{Tool-augmented specialist A/B: per-partition detail}
\label{app:toolaug}

We re-run the first $30$ partitions of the case study under a
tool-augmented protocol (\S\ref{sec:exp}). Each specialist
issues exactly one DuckDuckGo \texttt{web\_search} call against
its assigned outcome text, and the top-five title+snippet results
are inserted as context before the specialist emits its $K{=}8$
verbalized-probability samples. Master seed and assignment are
identical to the no-tool first-$30$ baseline, so the per-partition
$\Delta\eps^\star = \eps^\star_t - \eps^\star_b$ is a direct A/B.
Total cost: $\approx\!720$ LLM calls + $90$ search calls,
$\approx\!\$2$, $\approx\!25$ minutes wall-clock.

Aggregate: mean $\eps^\star_b = 0.260 \to \eps^\star_t = 0.283$;
$20/30$ partitions are equal or worse with retrieval, $10/30$ are
better; mean $\Delta\eps^\star = +0.023$. The single largest
\emph{worsening} is ``World population, mid-2027'' ($+0.563$):
every specialist's search returns recent UN bracket estimates
around $8.1$--$8.2$\,bn and each anchors near
$P(\text{their bucket}){\approx}0.7$, summing to $2.22$. The
single largest \emph{improvement} is ``2026 Grammy Album of the
Year genre'' ($-0.267$): retrieval surfaces 2026 Grammy nominee
distributions that pull every specialist toward a similar
partition shape.

\begin{figure}[H]
\centering
\includegraphics[width=0.65\linewidth]{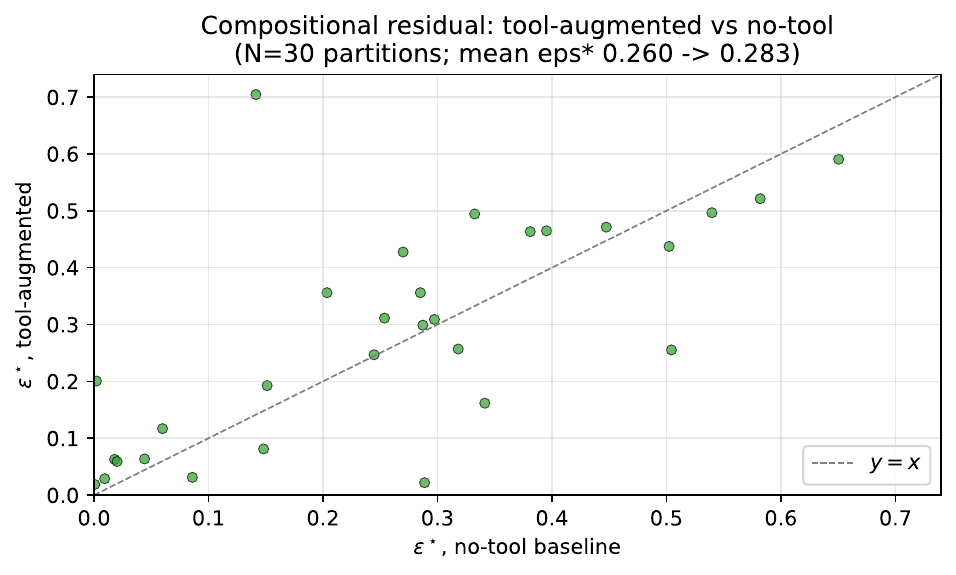}
\caption{\textbf{Tool augmentation does not eliminate the
compositional residual.} Each point is one of the $30$ matched
partitions; $x$-axis is $\eps^\star$ in the no-tool baseline,
$y$-axis is $\eps^\star$ when each specialist receives top-five
DuckDuckGo snippets for its assigned outcome before quoting.
Most points lie on or above $y\!=\!x$: tool augmentation tightens
within-specialist variance but cross-specialist disagreement
persists, so the assembled partition quote is no more coherent
with retrieval than without.}
\label{fig:toolaug}
\end{figure}

\begin{longtable}{p{0.46\linewidth} r r r r}
\caption{Tool-augmented specialist A/B, all $30$ matched partitions, sorted by $|\Delta\eps^\star|$. $\sum_t$ is the partition mass under tool-augmented retrieval; $\eps^\star_b$ and $\eps^\star_t$ are the no-tool baseline and tool-augmented compositional residuals.}\label{tab:toolaug}\\
\toprule
Partition & $\sum_t$ & $\eps^\star_b$ & $\eps^\star_t$ & $\Delta\eps^\star$ \\
\midrule
\endfirsthead
\multicolumn{5}{l}{\small \emph{(continued)}}\\
\toprule
Partition & $\sum_t$ & $\eps^\star_b$ & $\eps^\star_t$ & $\Delta\eps^\star$ \\
\midrule
\endhead
\midrule
\multicolumn{5}{r}{\small \emph{(continued on next page)}}\\
\endfoot
\bottomrule
\endlastfoot
World population, mid-2027 & 2.220 & 0.141 & 0.704 & +0.563~$\uparrow$ \\
2026 Grammy Album of the Year genre & 1.044 & 0.289 & 0.022 & -0.267~$\downarrow$ \\
Next UK general election year & 0.489 & 0.504 & 0.255 & -0.249~$\downarrow$ \\
2026 Booker Prize winner nationality & 1.348 & 0.002 & 0.201 & +0.198~$\uparrow$ \\
UN climate ambition state, COP31 (2026) & 1.280 & 0.341 & 0.162 & -0.180~$\downarrow$ \\
Largest US tech IPO of 2026 by company type & 1.989 & 0.333 & 0.494 & +0.162~$\uparrow$ \\
US unemployment rate, June 2026 & 1.855 & 0.270 & 0.427 & +0.158~$\uparrow$ \\
2026 Academy Award Best Picture winner type & 1.616 & 0.204 & 0.356 & +0.152~$\uparrow$ \\
2026 US House majority & 1.802 & 0.381 & 0.463 & +0.082~$\uparrow$ \\
Top semiconductor maker by revenue, 2026 & 1.566 & 0.285 & 0.356 & +0.071~$\uparrow$ \\
Next iPhone announcement quarter & 1.805 & 0.395 & 0.465 & +0.069~$\uparrow$ \\
2026 Nobel Peace Prize laureate type & 1.131 & 0.148 & 0.081 & -0.067~$\downarrow$ \\
2026 ACM Turing Award area & 1.757 & 0.502 & 0.437 & -0.065~$\downarrow$ \\
2026 Nobel Physics laureate field & 1.514 & 0.318 & 0.257 & -0.061~$\downarrow$ \\
2026 Nobel Chemistry laureate field & 2.042 & 0.582 & 0.521 & -0.061~$\downarrow$ \\
Top MMLU score, year-end 2026 & 2.181 & 0.650 & 0.591 & -0.060~$\downarrow$ \\
2026 US CPI year-over-year, December print & 1.623 & 0.254 & 0.311 & +0.058~$\uparrow$ \\
2026 US Senate majority & 1.203 & 0.060 & 0.117 & +0.057~$\uparrow$ \\
2026 Wimbledon men's singles winner & 0.938 & 0.086 & 0.031 & -0.055~$\downarrow$ \\
Top gold-medal nation, 2028 Summer Olympics & 1.109 & 0.018 & 0.063 & +0.045~$\uparrow$ \\
Top SWE-Bench Verified score, year-end 2026 & 1.970 & 0.540 & 0.497 & -0.043~$\downarrow$ \\
2026 H1 SP500 return bucket & 1.385 & 0.151 & 0.193 & +0.041~$\uparrow$ \\
2026 net global solar capacity addition & 0.897 & 0.020 & 0.059 & +0.039~$\uparrow$ \\
2026 EU AI Act state of enforcement & 1.816 & 0.447 & 0.471 & +0.024~$\uparrow$ \\
2028 US Presidential winner & 1.050 & 0.009 & 0.029 & +0.019~$\uparrow$ \\
2026 NBA Finals winner conference & 1.090 & 0.044 & 0.064 & +0.019~$\uparrow$ \\
2026 Pulitzer Prize Fiction winner setting & 1.032 & 0.001 & 0.019 & +0.018~$\uparrow$ \\
2026 H1 BTC return bucket & 1.535 & 0.297 & 0.309 & +0.012~$\uparrow$ \\
2026 FIFA World Cup winner confederation & 1.518 & 0.287 & 0.299 & +0.012~$\uparrow$ \\
Next FOMC decision (June 2026 meeting) & 1.427 & 0.245 & 0.247 & +0.002~$\uparrow$ \\
\end{longtable}

\section{Per-specialist residual attribution}
\label{app:attribution}

For each ensemble bet with $\eps^\star\!>\!10^{-9}$
($n_{\text{pos}}\!=\!894$ across the 1,876-bet ensemble of
\S\ref{sec:exp}), we attribute residual mass to each specialist
by computing the $L_2$ norm of the residual vector
$(x{-}\Pi^\star(x))$ restricted to the coordinates that specialist
owned. The decomposition gives a per-clique credit assignment for
which sub-model drives the joint incoherence.

\begin{table}[h]
\centering
\small
\begin{tabular}{@{}lccc@{}}
\toprule
Specialist            & cliques owned & mean $\|\delta_a\|_2$ & median \\
\midrule
Claude-Haiku-4.5      & $506$         & $0.092$               & $0.078$ \\
GPT-5.4-mini          & $605$         & $0.120$               & $0.114$ \\
GPT-5.4-nano          & $491$         & $0.093$               & $0.080$ \\
Llama-3.3-70b         & $464$         & $0.098$               & $0.090$ \\
\bottomrule
\end{tabular}

\caption{Per-specialist mean attributed $L_2$ mass on coordinates
each specialist owned, across the $894$ positive-residual bets.}
\label{tab:attribution}
\end{table}

GPT-5.4-mini carries the largest attributed mass per clique among
the four mid-tier specialists, consistent with the frontier-panel
finding (\S\ref{subsec:frontier}) that the GPT-5.5 upgrade
reduces magnitude but not prevalence: even with a frontier roster,
$\eps^\star\!>\!0$ on $97.8\%$ of the matched partition bets.

\section{Trivial vs.\ hierarchical projection: per-relation gaps}
\label{app:projgap}

We verify that the closed-form local projections of
App.~\ref{app:size3} agree with the iterative Boyle--Dykstra cycle
of \Cref{thm:dykstra} to numerical precision on the equality-couple
relations (negation, partition) and require the cyclic iteration
to recover the exact $L_2$ projection on the inequality-couple
relations (conjunction, disjunction), where the Fr\'echet polytope
is not a Cartesian product of half-spaces.

\begin{table}[h]
\centering
\small
\begin{tabular}{@{}lccc@{}}
\toprule
Relation     & $n$ bets & max gap                  & mean gap                 \\
\midrule
\textsc{Neg} & $536$    & $1.1\!\times\!10^{-16}$  & $1.7\!\times\!10^{-17}$  \\
\textsc{Partition} & $268$ & $0$ (exact)           & $0$                      \\
\textsc{And} & $536$    & $7.8\!\times\!10^{-2}$   & $1.3\!\times\!10^{-3}$   \\
\textsc{Or}  & $536$    & $8.8\!\times\!10^{-2}$   & $3.2\!\times\!10^{-3}$   \\
\bottomrule
\end{tabular}

\caption{Per-relation maximum
$\|\Pi^{\text{closed}}{-}\Pi^{\text{JCD}}\|_\infty$ over the
1,876 ensemble bets. Negation and partition match to numerical
precision; conjunction/disjunction require the cyclic projection
because the Fr\'echet polytope is not a Cartesian product of
half-spaces.}
\label{tab:projgap}
\end{table}

The conjunction/disjunction mean gaps ($\sim\!10^{-3}$) are
well below the per-clique $\eps^\star$ magnitudes
($0.058$--$0.118$ on these relations,
\S\ref{subsec:compres_results}), so the empirical residuals
reported throughout are not artefacts of any approximate
projection step; the Dykstra cycle converges to the exact
$L_2$ projection on every clique reported.

\section{Causal-mechanism (coupling-visibility) experiment}
\label{app:coupling}

This appendix supports the coupling-visibility intervention of
\S\ref{subsec:coupling}.

\paragraph{Sample of partitions.}
The 20 partitions are the full planner-harness slate of
\S\ref{subsec:planner_disc}; the spec ``20 highest-$\eps^\star$''
selects every partition because the harness has
$\eps^\star\!>\!0$ on $20/20$.

\paragraph{Routing.}
Held fixed across \textsc{Blind} and \textsc{Informed} at the
planner-harness's chosen specialist-to-outcome assignment per
partition. The original planner harness has no random-assignment
seeds; the four ``seeds'' here are independent $K\!=\!8$ sampling
rounds per (partition, condition), giving four paired $\eps^\star$
values per partition.

\paragraph{Prompts.}
\textsc{Blind} is a single-Bernoulli question on the assigned
outcome with no partition context.
\textsc{Informed} additionally shows the partition label, all
sibling outcomes, the explicit $\sum_i p_i\!=\!1$ constraint,
and the peers' \textsc{Blind} quotes on the other outcomes.
\textsc{Informed} uses \textsc{Blind} quotes from the same seed
as context, so the two conditions are tightly paired within a
(partition, seed) cell.

\paragraph{Aggregate result.}
Mean paired
$\Delta\eps^\star = \eps^\star_{\textsc{Blind}}{-}\eps^\star_{\textsc{Informed}} = +0.221$
($95\%$ paired-bootstrap CI $[+0.173, +0.270]$; Wilcoxon
$W{=}257$, $p\!=\!2.6\!\times\!10^{-10}$; $n_{\text{pairs}}\!=\!77/80$
cells surviving parsing). $16/20$ partitions improve under
\textsc{Informed}, $1/20$ unchanged, $3/20$ worsened. The three
worsening partitions ($\Delta\!<\!0$) are non-payoff multi-way
forecasts where \textsc{Blind} specialists were already roughly
normalised; \textsc{Informed} can introduce mass that the
isolated quotes did not have. This matches the LLM-side
mitigations result (\S\ref{subsec:mitigations}): prompt-based
fixes can harm already-coherent quotes, and the geometric
certificate gates them.

\end{document}